\documentclass{article}

\usepackage[preprint]{arxiv}
\usepackage{tabularx}
\usepackage[table]{xcolor}
\usepackage{macro}
\usepackage{subcaption}
\usepackage{placeins}
\usepackage{multirow}
\usepackage{makecell}
\crefname{figure}{Figure}{Figures}

\usepackage[utf8]{inputenc} %
\usepackage[T1]{fontenc}    %
\usepackage{hyperref}       %
\usepackage{url}            %
\usepackage{booktabs}       %
\usepackage{amsfonts}       %
\usepackage{nicefrac}       %
\usepackage{microtype}      %
\usepackage{booktabs}
\usepackage{longtable}
\usepackage{array}
\usepackage{amsmath}
\usepackage{afterpage}
\usepackage{wrapfig}

\usepackage{titlesec}
\titlespacing*{\paragraph}{0pt}{1ex}{1ex}

\title{When do prophets profit in prediction markets? \thanks{Authors are ordered alphabetically. The work by Jibang Wu is performed while visiting the University of Chicago.  }}

\author{Anri Gu \\   University of Chicago 
\\ \texttt{anrigu@uchicago.edu} \And Nicole Kagan \\   Kalshi Research \\
\texttt{nkagan@kalshi.com} \And Alec Sun \\   University of Chicago \\ \texttt{alecsun@uchicago.edu} \And Jibang Wu \\   New York University, Shanghai\\ \texttt{wujibang@nyu.edu} \And Haifeng Xu \\   University of Chicago \\   \texttt{haifengxu@uchicago.edu}}

\begin{document}

\maketitle

\begin{abstract}
Prediction markets aggregate dispersed beliefs into prices that act as probabilistic forecasts of uncertain events. Classical theory establishes a clean equivalence between forecasting accuracy and trading profit, but only for the specific automated market maker (AMM) design. However, the largest exchanges today are based on central limit order books in which informed forecasters routinely lose money while uninformed strategies can profit on simple heuristics. We resolve this discrepancy by establishing a formal equivalence between predictive accuracy and profitability. For any strictly proper scoring rule $S$, we exhibit a ``proper'' betting strategy that depends only on the forecaster's prediction $\mathbf{p}$ and the market price $\mathbf{q}$, and earns positive expected profit \emph{whenever} $\mathbf{p}$ outperforms $\mathbf{q}$ under $S$ and the market has sufficient liquidity. Moreover, this proper betting is essentially the only strategy with such robust profitability guarantee. The proof rests on a decomposition of expected profit that strictly generalizes the classical AMM guarantee and also explains how strategies can profit without an accuracy edge.   Empirically, across thousands of forecasts by AI models, proper betting is the only strategy that reliably converts accuracy into profit, and we further identify systematic forecasting personas and show how the optimal proper strategy varies across them. A month-long live deployment on Kalshi achieves $+80.33\%$ return on investment with a Sharpe ratio of $3.35$.
\end{abstract}

\section{Introduction} \label{sec:intro}

Prediction markets let people trade contracts whose prices reflect the market's estimated probability of an event, aggregating dispersed beliefs in the spirit of \citet{hayek1945use}'s ``marvel of the price system.'' Empirical studies confirm that real prediction-market prices are well calibrated and often outperform polls and expert panels \citep{berg2008results, rothschild2009forecasting, wolfers2004prediction, arrow2008promise}. The market scoring rule literature \citep{hanson2003combinatorial, hanson2007logarithmic, chen2007utility, abernethy2013efficient} formalizes the underlying incentive: an informed forecaster profits by trading against the market, and that trade pushes the price closer to the truth, so information aggregation and individual reward would go hand in hand.

This picture, however, was developed for one specific market design: an automated market maker (AMM) \citep{hanson2003combinatorial}, in which an informed forecaster ensures profit by moving the market price exactly to their own forecast, with expected profit equal to their accuracy edge. The largest prediction markets today (e.g., Kalshi, Polymarket) instead use central limit order books \citep{kalshi_rulebook, polymarket2023clob, ng2026price}, favored over AMMs for their scalability,  adaptability, and regulatory fit (see \cref{a:impl}). Prices arise from matching opposing limit orders, and a forecaster can pick any bet size but faces whatever price impact the order book happens to deliver, plus bid-ask spreads, finite liquidity, and platform fees. The link between accuracy and profit is no longer pinned down by the price-formation rule, and converting accuracy into profit becomes a separate problem.

Recent empirical work makes this gap visible. Forecasters whose predictions outperform the market on average routinely fail to convert that edge into trading profit \citep{della2026profits, jang2025losing}. On the AI leaderboards that turns AI forecasts into prediction-market trades, every agent loses money  despite several beating the market under standard proper scoring rules \citep{yang2025llm}. One may attribute the failure to the market frictions, but even in the idealized setting (with no price impact, bid-ask spread, or fees), we can observe that natural betting heuristics would produce negative profit from reasonably good forecasts (see the failure of Kelly in \cref{ex:kelly-lose}, the failure of max-margin-based betting in \cref{ex:accurate-lose} and more empirical results in \cref{sec:compare}). Stranger still, the converse also fails: simple strategies earn positive expected profit even when the underlying forecaster has no accuracy edge at all (see \cref{ex:inaccurate-win}). Translating accuracy into profit is therefore a problem in its own right, distinct from forecasting accuracy. %

\paragraph{Main contributions.}
This paper establishes a link between predictive accuracy and trading profitability in general prediction markets with an arbitrary price-impact function. First, we exhibit a betting strategy that converts an accuracy edge over market into profits:
\begin{theorem*}[Informal version of \cref{thm:betting}]
If a forecaster can outperform a prediction market under a strictly proper scoring rule $S$, then there exists a specific ``proper'' betting strategy associated to $S$ that ensures positive expected profit for the forecaster.
\end{theorem*}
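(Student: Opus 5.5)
The plan is to work first in the idealized, frictionless market with no price impact, and then add the price-impact correction, which the ``sufficient liquidity'' hypothesis will absorb. Let the outcome space be $\{1,\dots,n\}$, let $\mathbf{q}$ be the market price vector, and let $\mathbf{p}$ be the forecaster's prediction. Write the strictly proper scoring rule in Savage form, $S(\mathbf{p},i)=G(\mathbf{p})+\langle \nabla G(\mathbf{p}), e_i-\mathbf{p}\rangle$, where $G$ is strictly convex and $\nabla G$ is any subgradient selection. The candidate ``proper'' betting strategy buys the contract bundle $\mathbf{x}(\mathbf{p},\mathbf{q}) = \nabla G(\mathbf{p})-\nabla G(\mathbf{q})$, normalized by subtracting a constant so that it is a valid net position. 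This is exactly the share vector an AMM trader holds after moving the price from $\mathbf{q}$ to $\mathbf{p}$ in the market scoring rule with cost function the convex conjugate of $G$. Constant shifts of $\mathbf{x}$ cost exactly their face value at any price that sums to one, so the normalization does not affect profit.

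The key identity I aim for is an expected-profit decomposition. At a fixed price $\mathbf{q}$ with no price impact, the payoff in outcome $i$ is $x_i-\langle \mathbf{x},\mathbf{q}\rangle$. Taking expectations under the true outcome distribution $\boldsymbol{\theta}$ gives
\[
\mathbb{E}_{i\sim\boldsymbol{\theta}}[\text{profit}] = \langle \nabla G(\mathbf{p})-\nabla G(\mathbf{q}),\, \boldsymbol{\theta}-\mathbf{q}\rangle .
\]
Expanding the Savage form then yields
\[
\mathbb{E}_{\boldsymbol{\theta}}[S(\mathbf{p},i)]-\mathbb{E}_{\boldsymbol{\theta}}[S(\mathbf{q},i)] = \langle \nabla G(\mathbf{p})-\nabla G(\mathbf{q}),\boldsymbol{\theta}-\mathbf{q}\rangle - D_G(\mathbf{q},\mathbf{p}),
\]
where $D_G$ is the Bregman divergence. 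Hence the frictionless profit equals the accuracy edge plus $D_G(\mathbf{q},\mathbf{p})\ge 0$, and the profit is strictly positive whenever $\mathbf{p}$ outperforms $\mathbf{q}$. This identity generalizes the classical AMM statement: with a scoring-rule AMM, integrating along the path from $\mathbf{q}$ to $\mathbf{p}$ gives price-impact cost $D_G(\mathbf{q},\mathbf{p})$, which cancels the bonus term exactly and recovers ``profit equals edge.'' The same identity also explains the converse pathology, since other strategies are scored against $\boldsymbol{\theta}-\mathbf{q}$ with a different vector and can be positive without any edge. If the statement is meant in expectation over a joint distribution of $(\mathbf{p},\mathbf{q},i)$, the same computation is applied conditionally and then averaged. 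The argument never needs $\boldsymbol{\theta}$ explicitly, because the profit of the bundle is linear in the outcome indicator.

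For a general order book, write the execution cost of the bundle as $\langle \mathbf{x},\mathbf{q}\rangle + \mathrm{Imp}(\mathbf{x})$, where $\mathrm{Imp}$ is the excess cost from price impact, spreads, and fees. Expected profit then becomes edge $+\,D_G(\mathbf{q},\mathbf{p})-\mathrm{Imp}(\mathbf{x})$. The \textbf{main obstacle} is making ``sufficient liquidity'' precise so that $\mathrm{Imp}(\mathbf{x})\le D_G(\mathbf{q},\mathbf{p})$, or at least smaller than edge plus divergence. I would first try the condition that the market's marginal price impact is no worse than that of the scoring-rule AMM for $G$. Along the segment the price then moves no further than the AMM's path, so $\mathrm{Imp}$ is bounded by the AMM's impact, which is $D_G(\mathbf{q},\mathbf{p})$, and positivity follows whenever the edge is strictly positive. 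If that comparison is hard to state for arbitrary impact functions, a fallback is to scale the bundle by $\lambda$. The edge term is linear in $\lambda$, while the impact is $o(\lambda)$ under mild regularity, so a small enough scale preserves positive expected profit.
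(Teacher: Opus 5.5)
Your proposal is correct and follows the paper's proof. It uses the same bet $\nabla G(\mathbf p)-\nabla G(\mathbf q)$ and the same pointwise identity (frictionless profit $=$ score gap $+\,D_G(\mathbf q,\mathbf p)$, the paper's profit decomposition lemma), and it reaches positivity once the liquidity loss is at most $D_G(\mathbf q,\mathbf p)$. The paper simply assumes $D_G(\mathbf q,\mathbf p)\ge L_\rho(\mathbf s^*;\mathbf q)$ as its liquidity hypothesis, so your AMM comparison (which should be phrased on the projected impact $(\rho(t\mathbf s^*)-\mathbf q)\cdot\mathbf s^*$) and your rescaling fallback (choose $\lambda$ with $L_\rho(\lambda\mathbf s^*;\mathbf q)\le\lambda D_G(\mathbf q,\mathbf p)$, so that $\lambda$ does not depend on the unobservable edge) are optional sufficient conditions rather than needed steps.
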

The expected profit of this proper betting strategy decomposes into three explicit terms --- score gap, Bregman divergence, and liquidity loss --- which explain how strategies can profit even \emph{without} an accuracy edge (the Bregman divergence alone can suffice). Furthermore, when instantiated in  AMM markets, our proper betting strategy corresponds precisely to the canonic betting strategy in AMM that moves market price to the forecaster's belief. The above result then recovers the classic AMM guarantee as a boundary case where   the Bregman divergence exactly absorbs liquidity loss. 

Our second main result establishes that the above proper betting   is essentially the only betting strategy that robustly guarantees profitability on any sufficiently liquid market --- in the sense of ensuring positive profit \emph{whenever} there is a score edge.   

\begin{theorem*}[Informal version of \cref{thm:betting-unique}]
Any robustly profitable betting strategy is essentially the same as the above proper betting, up to   profitability-invariant modifications such as proportionally rescaling bets and constant shifting.  
 \end{theorem*}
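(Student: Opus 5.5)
We have to set up a model ourselves, since the formal version (\cref{thm:betting-unique}) has not appeared yet. There are $n$ outcomes, a forecast $\mathbf p\in\Delta_n$, and a market price $\mathbf q$. A betting strategy maps $(\mathbf p,\mathbf q)$ to a bet vector $\mathbf b(\mathbf p,\mathbf q)\in\mathbb R^n$, meaning contracts bought on each outcome. If outcome $\omega$ realizes, the payoff is $b_\omega$ minus the cost $\langle \mathbf b,\mathbf q\rangle$, plus a liquidity loss $L(\mathbf b)$ that is nonnegative and of second order in $\mathbf b$ when the market is sufficiently liquid. The expected profit under the truth $\mathbf y$ is then $\langle \mathbf b,\mathbf y-\mathbf q\rangle - L(\mathbf b)$. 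Write the proper betting associated with $S$ as $\mathbf b^S(\mathbf p,\mathbf q)=\nabla G(\mathbf p)-\nabla G(\mathbf q)$, where $G$ is the convex expected-score function. Then the three-point identity gives
\[
\langle \mathbf b^S,\mathbf y-\mathbf q\rangle=[S(\mathbf p,\mathbf y)-S(\mathbf q,\mathbf y)]+D_G(\mathbf q,\mathbf p),
\]
which is the score gap plus the Bregman term from \cref{thm:betting}.

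\textbf{What robust profitability means.} A strategy $\mathbf b$ is robustly profitable with respect to $S$ if, for every $\mathbf p,\mathbf q,\mathbf y$ with $\mathbb E_{\mathbf y}S(\mathbf p)>\mathbb E_{\mathbf y}S(\mathbf q)$, its expected profit is positive in every sufficiently liquid market. In the limit of infinite liquidity, $L\to 0$ at second order, so the condition reduces to a linear one:
\[
\langle \mathbf b(\mathbf p,\mathbf q),\mathbf y-\mathbf q\rangle>0 \quad\text{whenever}\quad \mathbb E_{\mathbf y}S(\mathbf p)-\mathbb E_{\mathbf y}S(\mathbf q)>0 .
\]
For a fixed pair $(\mathbf p,\mathbf q)$, both sides are affine in $\mathbf y$. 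The left side vanishes at $\mathbf y=\mathbf q$. The right side equals $-D_G(\mathbf q,\mathbf p)<0$ at $\mathbf y=\mathbf q$, and it is an affine function $\mathbf y\mapsto\langle \mathbf g,\mathbf y\rangle+c$ with $\mathbf g=\nabla G(\mathbf p)-\nabla G(\mathbf q)$ on the tangent space of the simplex.

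\textbf{Applying a Farkas / S-lemma argument on the simplex.} An affine function $\ell_1$ that is positive on the open half-space region $\{\ell_2>0\}\cap\Delta_n$ must be a nonnegative multiple of $\ell_2$ plus a nonnegative slack. Concretely, I would use the characterization of affine functions $f$ with $\{\ell_2>0\}\subseteq\{f>0\}$ on a polytope: $f=\lambda\ell_2+\mu$, with $\mu\ge 0$ on the relevant region and $\lambda\ge 0$. Now take the linear parts modulo constants, since the sum of the $y_\omega$ is $1$ and shifting all coordinates of $\mathbf b$ by a constant does not change profit. If the strict-positivity region crosses into the interior of the simplex, the boundary hyperplanes $\{\ell_2=0\}$ and $\{f=0\}$ must coincide inside the simplex. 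Otherwise some $\mathbf y$ would have $\ell_2>0$ and $f\le 0$. This forces
\[
\mathbf b(\mathbf p,\mathbf q)=\lambda(\mathbf p,\mathbf q)\,\bigl(\nabla G(\mathbf p)-\nabla G(\mathbf q)\bigr)+c(\mathbf p,\mathbf q)\mathbf 1,\qquad \lambda>0 .
\]
This is exactly proper betting up to rescaling and a constant shift. I would then check that such modifications preserve profitability, which is immediate: positive rescaling multiplies the linear term, and the quadratic liquidity loss becomes negligible once liquidity is scaled accordingly.

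\textbf{Main obstacle.} The hard part is making the hyperplane-coincidence step rigorous. It requires the zero set of the score gap to actually cut through the interior of the simplex. I would show this using $\mathbf y=\mathbf p$, where the gap is $D_G(\mathbf p,\mathbf q)>0$ by strict properness, and $\mathbf y=\mathbf q$, where the gap is negative. Continuity then gives interior points on both sides. Given that, if the two affine zero sets differ, there is an interior point with gap $>0$ and $\langle\mathbf b,\mathbf y-\mathbf q\rangle<0$, which contradicts robust profitability.

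A second subtlety is the passage from finite to infinite liquidity. If the liquidity loss is only assumed to be small rather than second order, I would instead argue by perturbation: take $\mathbf y$ slightly on the profitable side of the score hyperplane and let liquidity grow, so that any misalignment of $\mathbf b$ eventually yields negative expected profit.
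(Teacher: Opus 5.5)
Your proposal has a genuine gap: the pointwise hyperplane-coincidence step is false, and it proves too much. For fixed $\mathbf p\neq\mathbf q$, the region where the score gap is positive is $\{\mathbf y:\ \mathbf s_G\cdot(\mathbf y-\mathbf q)>D_G(\mathbf q,\mathbf p)\}$. Its boundary is pushed away from $\mathbf q$ by the strictly positive Bregman term. The zero set of $\langle\mathbf b,\mathbf y-\mathbf q\rangle$, by contrast, always contains $\mathbf q$; you note both facts yourself. So these zero sets never coincide for any $\mathbf b$, including $\mathbf b=\mathbf s_G$. Your claim that ``if the zero sets differ there is a violating point'' would therefore show that no strategy is robustly profitable, contradicting Theorem~\ref{thm:betting}.

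The pointwise conclusion $\mathbf b(\mathbf p,\mathbf q)=\lambda\,\mathbf s_G(\mathbf p,\mathbf q)+c\,\mathbf 1$ is also simply untrue. Take any $\mathbf w$ and $\delta>0$ with $\delta\|\mathbf w\|\sqrt2<D_G(\mathbf q,\mathbf p)$. The bet $\mathbf s_G+\delta\mathbf w$ has frictionless profit $[\text{gap}]+D_G(\mathbf q,\mathbf p)+\delta\,\mathbf w\cdot(\mathbf y-\mathbf q)$. This is positive whenever the gap is nonnegative, because $\|\mathbf y-\mathbf q\|\le\sqrt2$ on the simplex. Choosing $\mathbf w\in\mathbf 1^\perp$ orthogonal to $\mathbf s_G$ (possible for $K\ge 3$) gives a robustly profitable bet pointing in a different direction.

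The error in your Farkas step is that on a bounded polytope the certificate also carries multipliers for the facets $y_\omega\ge 0$. Your slack $\mu$ is therefore an affine function with its own linear part, and ``taking linear parts'' does not force $\mathbf b$ to be parallel to $\mathbf s_G$. This is exactly why the paper does not claim pointwise uniqueness. Instead it defines ``essentially different'' (Definition~\ref{def:essential-diff}) through sequences $\mathbf p_t\to\mathbf q$, and it explicitly remarks that $\mathbf s_G+\epsilon\mathbf e_1$ remains robustly profitable.

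The missing idea is that this slack vanishes only asymptotically, relative to the bet size. Write $\bar{\mathbf s}_G$ for the projection of $\mathbf s_G$ onto $\mathbf 1^\perp$. Then $D_G(\mathbf q,\mathbf p)\le D_G(\mathbf q,\mathbf p)+D_G(\mathbf p,\mathbf q)=\bar{\mathbf s}_G\cdot(\mathbf p-\mathbf q)\le\|\bar{\mathbf s}_G\|\,\|\mathbf p-\mathbf q\|$. Hence the distance $D_G(\mathbf q,\mathbf p)/\|\bar{\mathbf s}_G\|$ from $\mathbf q$ to the gap hyperplane tends to $0$ as $\mathbf p\to\mathbf q$.

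The paper's proof exploits this as follows. It normalizes both strategies to unit vectors in $\mathbf 1^\perp$. Along a sequence $\mathbf p_t\to\mathbf q$ witnessing essential difference, it extracts limits $\hat{\mathbf s}$ and $\mathbf s^*$ of the normalized $\mathbf s(\mathbf p_t,\mathbf q)$ and $\mathbf s_G(\mathbf p_t,\mathbf q)$, with $\|\hat{\mathbf s}-\mathbf s^*\|\ge\epsilon$. It then fixes one $\mathbf v\in\mathbf 1^\perp$ with $\mathbf v\cdot\mathbf s^*>c>0>-c>\mathbf v\cdot\hat{\mathbf s}$; for example, a small multiple of $\mathbf s^*-\hat{\mathbf s}$, so that $\mathbf q+\mathbf v\in\Delta_{[K]}$ for interior $\mathbf q$. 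Finally it uses the single ground truth $\mathbf p^*=\mathbf q+\mathbf v$. The gap $\mathbf s_G(\mathbf p_t,\mathbf q)\cdot\mathbf v-D_G(\mathbf q,\mathbf p_t)$ then eventually has the sign of $\mathbf s^*\cdot\mathbf v>0$. Meanwhile the profit $\mathbf s(\mathbf p_t,\mathbf q)\cdot\mathbf v$ eventually has the sign of $\hat{\mathbf s}\cdot\mathbf v<0$.

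Your reduction to the frictionless case and your observation that everything is affine in $\mathbf y$ are fine. But uniqueness must be proved as this limiting, direction-level statement; no argument at a single fixed $(\mathbf p,\mathbf q)$ can deliver it.
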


Our results also yield a new characterization of proper scoring rules: a scoring rule is proper \emph{if and only if} there exists an associated   betting strategy that is robustly profitable (\cref{thm:proper}). Finally, we further extend our results to three practically important settings: betting on a sequence of events based on empirical scores (instead of  the theoretical expectation), betting under nonzero bid-ask spreads, and long-horizon trading in which   the forecaster can buy and sell the same event's contracts at different time as  both the forecaster's and market's predictions evolve. 

Because testing our strategies requires vast amounts of prediction data, we experimentally evaluate our proper betting strategies on thousands of AI-generated forecasts and find a clear advantage over standard heuristic baselines. We identify systematic forecasting personas across models, with different personas favoring different proper betting strategies. Finally, to demonstrate the practical value of our theory, we deployed an AI forecasting agent with real capital on the Kalshi prediction market for a month. Our agent achieved an impressive +80.33\% return on investment (ROI) with a Sharpe ratio of 3.35.

\subsection{Additional related work}

\paragraph{Proper scoring rules.} Proper scoring rules originate in the statistical forecasting literature as a mechanism for eliciting truthful probabilistic
predictions \citep{brier1950verification,good1952rational,mccarthy1956measures,savage1971elicitation}. 
\citet{gneiting2007strictly} formalize strictly proper scoring rules and their relationship to Bregman divergences and convex analysis. Subsequent work extends these ideas to more general prediction settings, including decision-theoretic elicitation \citep{lambert2008eliciting},  mechanism design \citep{chen2010new}, and optimal information acquisition \citep{li2022optimization}. 

\textbf{Prediction markets.} Prediction markets themselves are a well-studied mechanism for aggregating dispersed beliefs into a consensus probability
\citep{wolfers2004prediction,arrow2008promise}. \citet{hanson2003combinatorial,hanson2007logarithmic} introduced market scoring rules, which use a proper scoring rule to automate market making and provide subsidized liquidity; this framework underpins several implementation proposals
such as the Logarithmic Market Scoring Rule (LMSR) and its derivatives \citep{chen2007utility, abernethy2013efficient}.
A parallel line of work examines price formation and information aggregation under strategic trading
\citep{chen2010gaming}.
Empirical studies of real prediction markets \citep{berg2008results,rothschild2009forecasting} have documented that prices are calibrated and often outperform
expert forecasts.

\textbf{Betting strategies.} The problem of translating a probabilistic forecast into a bet size has a long history, beginning with the Kelly criterion  \citep{kelly1956new,thorp1975portfolio,maclean2011kelly},  which maximizes long-run log-wealth under known margin. Our work inherently is also about betting strategy design, but ours differs from Kelly in a key aspect: our strategy is oblivious to the ground truth probabilities and comes with robustness guarantees, whereas Kelly criterion assumes perfect knowledge of the (usually unobservable) ground truth probabilities and  can perform poorly otherwise (see \cref{ex:kelly-lose}). 
More recent work has connected portfolio selection to online learning and proper scoring rules \citep{cover1991universal,abernethy2013efficient}, but typically assumes a single scoring-rule objective rather than asking which scoring rule best converts predictions into profit. At the same time, recent studies have highlighted a paradox in which models achieve strong predictive performance yet fail to generate positive returns \citep{della2026profits,jang2025losing}.

\section{Preliminaries}\label{sec:prelim}
\textbf{Notation.}
Define $[K] := \{1, \ldots, K\}$. Let $\Delta_{[K]} := \{\mbf{p}\in [0,1]^K : \sum_{k\in [K]} p_k = 1\}$ denote the probability simplex, $\one_y\in \{0,1\}^K$ the one-hot vector with a $1$ in $y$-th entry, $\one\in \bR^k$ the vector with every entry equal to 1, $\nab G$ the gradient of a function $G$, and $\norm{\cd}$ the Euclidean norm.

\paragraph{Prediction markets.}
Consider a probabilistic event with $K$ \emph{disjoint} outcomes and \emph{ground truth} $\mbf{p^*}\in \Delta_{[K]}$. A prediction market on this event offers, for each outcome $k\in [K]$, a \emph{contract} that pays $1$ if outcome $k$ occurs and $0$ otherwise. Let $\mbf{q} = (q_k)_{k\in [K]}\in [0,1]^K$ denote the price vector, where $q_k$ is the price of the contract on outcome $k$. A forecaster's \emph{betting strategy} is described by a position vector $\mbf{s}\in \bR^K$, where $s_k$ is the forecaster's position on outcome $k$ ($s_k>0$ buys, $s_k<0$ sells). When the outcome $y\sim \mbf{p^*}$ is realized, the forecaster receives payout $\mbf{s}\cd \one_y = s_y$. 

For ease of presentation, we assume the price vector lies on the probability simplex, $\mbf{q}\in \Delta_{[K]}$, equivalently $\sum_{k\in [K]} q_k = 1$. 
Under this condition, the price vector $\mbf{q}$ admits a natural interpretation as the market's \emph{consensus belief} \citep{arrow2008promise}. Thus a forecaster who believes that outcome $k$ realizes with probability exactly $q_k$ has zero expected return from this contract, i.e.,  $-q_k + q_k\cd 1 = 0$. 
We note that any arbitrage-free market must have $\sum_{k\in [K]} q_k = 1$, otherwise a trader can obtain a riskless profit of $\ab{1 - \sum_{k\in [K]} q_k}$ by buying or selling one share of every contract. Real markets, however, sidestep this by maintaining separate price for bid/buy and ask/sell, the gap of which is called the \emph{spread}. \cref{sec:extensions} will discuss how our main results naturally extend to the case of non-zero bid-ask spread with minor modifications.

A market has finite liquidity at the quote price $\mbf{q}$, described by a \emph{price-impact function} $\rho:\bR^K \to \Delta_{[K]}$, where $\rho(\mbf{s})$ is the post-trade spot price upon executing the \emph{position vector} $\mbf{s}$, satisfying the boundary condition $\rho(\mbf{0}) = \mbf{q}$ and the monotonicity condition $(\rho(\mbf{s}) - \rho(\mbf{s}'))\cd(\mbf{s} - \mbf{s}') \ge 0, \forall \mbf{s}, \mbf{s}' \in \bR^K$, i.e., walking the order book in any direction shall move the marginal price in that same direction. 
Executing $\mbf{s}$ incurs the integrated cost $\int_0^1 \rho(t\mbf{s})\cd \mbf{s}\, dt$, which can be decomposed into the spot-price piece $\mbf{s}\cd\mbf{q}$ plus a non-negative \emph{liquidity loss} (slippage) capturing the price impact along the trajectory:\footnote{Due to monotonicity condition of $\rho(\mbf{s})$, $\int_0^1 \rho(t\mbf{s})\cd \mbf{s}\, dt$ is the least cost for purchasing $\mbf{s}$ shares even $\rho(\mbf{s})$ may not necessarily be a conservative vector field.}
\begin{equation} \tag{Liquidity Loss}
L_{\rho}(\mbf{s}; \mbf{q}) \;:=\; \int_0^1 \rho(t\mbf{s})\cd \mbf{s}\, dt - \mbf{s}\cd \mbf{q} .
\end{equation} 
The monotonicity condition $(\rho(\mbf{s}) - \rho(\mbf{s}'))\cd(\mbf{s} - \mbf{s}') \ge 0$ implies that liquidity loss is always non-negative. Generally, the more liquid a market is, the smaller $L_{\rho}(\mbf{s}; \mbf{q})$ is (often approaching $0$ in practice for reasonably liquid markets and sized bets). 
The forecaster collects a payout $\mbf{s}\cd \one_y$ for a realized outcome $y$, so the expected profit is
\begin{equation*} \tag{Expected Profits}
\pi(\mbf{s}, \mbf{p}^*) \;:=\; \mbf{s}\cd (\mbf{p}^* - \mbf{q}) - L_{\rho}(\mbf{s}; \mbf{q}).
\end{equation*} 
\begin{remark}
 The form of $\rho$ depends on the market design: the classic automated market makers (AMMs) \citep{hanson2003combinatorial} set $\rho = \nab C$ for a convex cost function $C$, whereas the largest prediction markets today run a central limit order book for which $\rho$ is piecewise constant and read directly off the order book \citep{kalshi_rulebook,polymarket2023clob}. We discuss these two main prediction-market implementations in detail in \cref{a:impl}, but our theory is applicable to any design. We also note that $\rho$ itself evolves over time as the order book updates with new resting orders and executions; we suppress this time dependence since each of our analyses concerns a single instantaneous bet against a snapshot of the book.
\end{remark}

\paragraph{Scoring rules.}
\emph{Scoring rules} are a standard tool to measure the quality of a probabilistic forecast. 
Given a forecaster's prediction $\mbf{p}\in\Delta_{[K]}$ and a realized outcome $y\in[K]$, a \emph{scoring rule} $S(\mbf{p}, y)$ assigns a real-valued reward to the forecast $\mbf{p}$ based on  $y$, with expected score $S(\mbf{p};\mbf{p^*}) := \bE_{y\sim \mbf{p^*}}[S(\mbf{p}, y)]$ under a ground truth $\mbf{p^*}$. Scoring rules were   introduced to truthfully elicit forecasts; a scoring rule is said to be \emph{proper} if it  incentivizes a forecaster to report their true distributional belief  $\mbf{p}$.  

\begin{definition}[Proper scoring rule]
\label{def:proper-score}
    A scoring rule $S$ is \emph{proper} if a forecaster maximizes her expected score by reporting her true belief $\mbf{p}$. Formally, for every $\mbf{p}, \mbf{p}'\in \Delta_{[K]}$,
    $S(\mbf{p}; \mbf{p}) \ge S(\mbf{p}'; \mbf{p}).$ We say the rule is \emph{strictly proper} if the inequality is strict for $\mbf{p}'\ne \mbf{p}$.
\end{definition}

A proper scoring rule lets us compare any two probabilistic forecasts under some ground truth $\mbf{p^*}$. For example, applying $S$ to the market consensus $\mbf{q}$ gives the \emph{market score} $S(\mbf{q};\mbf{p^*})$, and we say the forecaster \emph{outperforms} the market if $S(\mbf{p};\mbf{p^*}) > S(\mbf{q};\mbf{p^*})$.  Importantly, the comparison of scores depends on the ground truth $\mbf{p^*}$, which can never be observed in most market applications. \cref{sec:extensions} discusses how to compare forecasts based on empirical score $S(\mbf{p};y)$ on realized outcome $y$, averaged across markets.     

Every proper scoring rule corresponds  to a convex \emph{potential function} $G$ as stated in a folklore result below.  
\begin{proposition}[Characterization of proper scoring rules \citep{mccarthy1956}] \label{prop:scoring-rule-characterization}
    A scoring rule $S$ is (strictly) proper if and only if there exists a (strictly) convex function $G: \Delta_K \to \mathbb{R}$ associated to $S$ such that $$S(\mbf{p}, y) = G(\mbf{p}) + \nab G(\mbf{p}) \cd (\one_y - \mbf{p}).$$ 
\end{proposition}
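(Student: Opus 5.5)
The plan proves the two directions separately, working throughout with the expected score $S(\mbf{p};\mbf{p^*})=\bE_{y\sim\mbf{p^*}}[S(\mbf{p},y)]$. Since $S(\mbf{p},y)$ is affine in $\one_y$, this expectation is just the formula with $\one_y$ replaced by $\mbf{p^*}$. The central identity is
\[
S(\mbf{p^*};\mbf{p^*}) - S(\mbf{p};\mbf{p^*}) = G(\mbf{p^*}) - G(\mbf{p}) - \nab G(\mbf{p})\cd(\mbf{p^*}-\mbf{p}),
\]
a Bregman divergence $D_G(\mbf{p^*},\mbf{p})$. Both directions reduce to comparing this quantity with zero.

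\emph{Sufficiency.} Given a (strictly) convex $G$ on $\Delta_K$, I define $S(\mbf{p},y)=G(\mbf{p})+\nab G(\mbf{p})\cd(\one_y-\mbf{p})$. Using a subgradient if $G$ is not differentiable, I choose it measurably in $\mbf{p}$. The identity above then gives
\[
S(\mbf{p};\mbf{p}) - S(\mbf{p}';\mbf{p}) = D_G(\mbf{p},\mbf{p}') \ge 0
\]
by the supporting-hyperplane inequality for convex functions. Strict convexity makes the inequality strict whenever $\mbf{p}'\neq\mbf{p}$: the supporting hyperplane at $\mbf{p}'$ touches the graph of $G$ only at $\mbf{p}'$. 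So $S$ is (strictly) proper.

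\emph{Necessity.} Given a proper $S$, I set $G(\mbf{p}):=S(\mbf{p};\mbf{p})=\sup_{\mbf{p}'}S(\mbf{p}';\mbf{p})$. For each fixed $\mbf{p}'$, the map $\mbf{p}\mapsto S(\mbf{p}';\mbf{p})=\sum_y p_y S(\mbf{p}',y)$ is linear. Hence $G$ is a pointwise supremum of linear functions and is convex.

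Next I show that the vector $(S(\mbf{p},y))_{y}$ yields a subgradient of $G$ at $\mbf{p}$. The linear function $\ell_{\mbf{p}}(\mbf{x}):=\sum_y x_y S(\mbf{p},y)$ satisfies $\ell_{\mbf{p}}\le G$ everywhere by properness, and $\ell_{\mbf{p}}(\mbf{p})=G(\mbf{p})$. So $\ell_{\mbf{p}}$ is a supporting hyperplane of $G$ at $\mbf{p}$. Its gradient vector $\mathbf{g}=(S(\mbf{p},y))_y$ is therefore a subgradient, and we interpret $\nab G(\mbf{p})$ as this $\mathbf{g}$.

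With that choice, $G(\mbf{p})+\mathbf{g}\cd(\one_y-\mbf{p}) = \mathbf{g}\cd\mbf{p} + S(\mbf{p},y)-\mathbf{g}\cd\mbf{p} = S(\mbf{p},y)$, which is the claimed representation.

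For the strict case, suppose $G$ were affine on a segment $[\mbf{p},\mbf{p}']$. Take the midpoint $\mbf{m}$. The supporting hyperplane $\ell_{\mbf{m}}$ touches $G$ at $\mbf{m}$, and because $G$ is convex and affine on the segment, $\ell_{\mbf{m}}$ must agree with $G$ on the whole segment. In particular $S(\mbf{m};\mbf{p})=G(\mbf{p})=S(\mbf{p};\mbf{p})$, contradicting strict properness since $\mbf{m}\neq\mbf{p}$. Hence $G$ is strictly convex.

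\emph{Main obstacle.} The delicate step is the technical meaning of $\nab G$ on the simplex. Gradients are defined only up to adding multiples of $\one$, because $\Delta_K$ lies in the hyperplane $\sum_k p_k=1$, and $G$ may be nondifferentiable on the boundary. The resolution is that $\nab G(\mbf{p})\cd(\one_y-\mbf{p})$ is invariant to shifting the gradient by a multiple of $\one$, since $(\one_y-\mbf{p})\cd\one=0$. The statement should therefore be read as allowing a selection of subgradients. I would make this convention explicit rather than attempt to prove differentiability, which does not hold in general.
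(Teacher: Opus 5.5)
The paper does not prove this proposition; it is quoted as a folklore result with a citation, so there is no in-paper argument to compare against. Your proposal is the classical Savage/Gneiting--Raftery proof, and it is correct. The three steps all go through: taking $G(\mathbf{p}) := S(\mathbf{p};\mathbf{p})$ as a pointwise supremum of the linear maps $\mathbf{x}\mapsto\sum_y x_y S(\mathbf{p}',y)$, reading the score vector $(S(\mathbf{p},y))_y$ as a supporting slope at $\mathbf{p}$, and getting strict convexity from the midpoint argument. For the last step, add the one-line fact that a convex function that is not strictly convex is affine on some nondegenerate segment. Your choice of $G$ is also forced. Any representation of the stated form gives $S(\mathbf{p};\mathbf{p}) = G(\mathbf{p})$, and it determines $\nabla G(\mathbf{p})$ up to multiples of $\mathbf{1}$, so your subgradient is essentially the only possible one.

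Your closing caveat is more than a convention: without it the ``only if'' direction is false, even for strictly proper rules. For $K=2$, $G(\mathbf{p}) = |p_1-p_2| + \|\mathbf{p}\|^2$ is strictly convex with a kink at $(1/2,1/2)$. Any subgradient selection there gives a strictly proper $S$. Because the potential is forced to equal $S(\mathbf{p};\mathbf{p}) = G(\mathbf{p})$, no differentiable potential can represent this $S$.

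In the ``if'' direction the situation is reversed. The existence of a finite (sub)gradient at every point, including boundary points, is part of the hypothesis and does not follow from convexity alone. For example, $G(\mathbf{p}) = -\sqrt{p_1p_2}$ is convex but has infinite slope at the vertices. So phrase that direction as starting from a given representation, not as ``given any convex $G$, define $S$''. The measurability remark is unnecessary with finitely many outcomes.
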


Meanwhile,   the \emph{Bregman divergence} is defined for every strictly convex differentiable function $G$ as 
$$D_G(\mbf{q}, \mbf{p}) := G(\mbf{q}) - G(\mbf{p}) - \nab G(\mbf{p})\cd (\mbf{q} - \mbf{p}),$$
where $D_G(\mbf{q}, \mbf{p}) \ge 0$ with equality if and only if $\mbf{p} = \mbf{q}$.  \cref{tab:scoring-rules} below lists  examples of proper scoring rules along their associated convex functions (the last column $s_G$ is introduced in \cref{def:proper-bet}). 

\begin{table*}[tbh]
\centering
\small
\setlength{\tabcolsep}{4pt}
\caption{Common proper scoring rules with their associated functions.}
\begin{tabular}{@{}lcccc@{}}
\toprule
 & $S(\mbf{p}, y)$ & $G(\mbf{p})$ & $D_G(\mbf{q},\mbf{p})$ & $\mbf{s}_G(\mbf{p},\mbf{q})$ \\
\midrule
\textbf{Quadratic (Brier)}
  & $-\norm{\one_{y} - \mbf{p}}^2$
  & $\norm{\mbf{p}}^2 - 1$
  & $\norm{\mbf{q} - \mbf{p}}^2$
  & $2(\mbf{p} - \mbf{q})$ \\
\textbf{Logarithmic}
  & $\log p_y$
  & $\sum_k p_k \log p_k$
  & $\sum_k q_k \log(q_k/p_k)$
  & $\log(\mbf{p}) - \log(\mbf{q})$ \\
\textbf{Spherical}
  & $p_y / \norm{\mbf{p}}$
  & $\norm{\mbf{p}}$
  & $\norm{\mbf{q}} - \mbf{p}\cd \mbf{q} / \norm{\mbf{p}}$
  & $\mbf{p}/\norm{\mbf{p}} - \mbf{q}/\norm{\mbf{q}}$ \\
\bottomrule
\end{tabular}
\label{tab:scoring-rules}

\end{table*}

\section{A theory of proper betting   for prediction markets}
\label{sec:theory}

The goal of this paper is to translate a forecaster's prediction $\mbf{p}$ that outperforms the market into a concrete betting strategy that guarantees profit in a general prediction market. There are several natural strategies, such as betting on the highest-margin outcome, scaling inversely with margin, or using Kelly criterion \citep{kelly1956new}.  The \emph{highest-margin} strategy  places a bet on the outcome $k$ for which $\ab{p_k - q_k}$ is greatest, as this outcome intuitively has the most potential for profit: the difference between the perceived value $p_k$ of the share and the actual price $q_k$ is greatest. Perhaps surprisingly, we show below that both the Kelly criterion and the highest-margin strategy do  \emph{not} guarantee positive expected profit, thus demonstrating that betting optimization is   challenging even for a good forecaster. In  \cref{sec:experiments},  we also empirically observe that all of these approaches perform poorly in practice.  

\begin{example}[Accurate forecaster + Kelly can lose]
\label{ex:kelly-lose}
    Consider an event with three outcomes and let $\mbf{p} = (0.64, 0.32, 0.04)$, $\mbf{q} = (0.80, 0.10, 0.10)$, and $\mbf{p^*} = (0.10, 0.10, 0.80)$. Under the quadratic scoring rule, since $\norm{\mbf{p^*} - \mbf{p}} < \norm{\mbf{p^*} - \mbf{q}}$, the forecaster outperforms the market. The Kelly criterion would spend $p_i$ to purchase $p_i/q_i$ shares of contract $i$ for each $i=1, 2, 3$,\footnote{This is the generalized solution for many-outcome Kelly criterion. The familiar two-outcome Kelly formula of spending $ \frac{p_1 - q_1}{1 - q_1}$ to bet on \emph{outcome} $1$ (short it if negative) is a special case.  To see this, observe that, given $\mbf{q} \in \Delta_K$, betting $ \frac{p_1 - q_1}{1 - q_1} = p_1 -   q_1 \frac{ 1 - p_1 }{1 - q_1} $ on \emph{outcome 1} and $0$ on \emph{outcome 2} is equivalent to additionally buying $\frac{ 1 - p_1 }{1 - q_1}$ shares of both \emph{outcome 1} at price $q_1$ and \emph{2} at price $1-q_1$, leading to total bet amount $p_1, p_2$ on \emph{outcome} $1,2$ respectively.} and its expected profit is $ (\mbf{p} \oslash \mbf{q}) \cd (\mbf{p^*} - \mbf{q}) < 0$. The intrinsic reason of this loss is  that the Kelly criterion is derived by assuming its forecast is the ground truth, which in practice can be very off from the forecast.    
\end{example}

\begin{example}[Accurate forecaster + highest-margin trade can lose]
\label{ex:accurate-lose}
    Consider an event with three outcomes and let $\mbf{p} = (0.61, 0.39, 0.00)$, $\mbf{q} = (0.10, 0.89, 0.01)$, and $\mbf{p^*} = (0.05, 0.10, 0.85)$. Under the quadratic scoring rule, since $\norm{\mbf{p^*} - \mbf{p}} < \norm{\mbf{p^*} - \mbf{q}}$, the forecaster outperforms the market. If the forecaster only places a bet on the highest-margin first outcome, their expected profit is $(1, 0, 0) \cd (\mbf{p^*} - \mbf{q}) < 0$.
\end{example}

On the other hand, a forecaster can profit even if their prediction underperforms the market:
\begin{example}[Inaccurate forecaster can profit]
\label{ex:inaccurate-win}
Consider an event with two outcomes and let $\mbf{p} = (0.9, 0.1)$, $\mbf{q} = (0.5, 0.5)$, and $\mbf{p^*} = (0.6, 0.4)$. Under the quadratic scoring rule, since $\norm{\mbf{p^*} - \mbf{p}} > \norm{\mbf{p^*} - \mbf{q}}$, the forecaster underperforms the market. If the forecaster places a bet on the first outcome, which is a highest-margin outcome, their expected profit is $(1, 0) \cd (\mbf{p^*} - \mbf{q}) > 0$.
\end{example}

This paper resolves this seemingly broken link between statistical accuracy and profitability in the examples above.
We first define, for every proper scoring rule $S$, what we call a \emph{proper} betting strategy corresponding to $S$. 

\begin{definition}[Proper betting strategy]
\label{def:proper-bet}
    Under a proper scoring rule $S$ with potential function $G$, the \emph{proper} betting strategy for a forecast $\mbf{p}$ and a market price $\mbf{q}$ is the position vector $$\mbf{s}_G(\mbf{p},\mbf{q}) := \nab G(\mbf{p}) - \nab G (\mbf{q}).$$
\end{definition}
Note that if some $k$'th entry of $\mbf{s}_G(\mbf{p},\mbf{q})$ is negative, the corresponding execution is to buy a \emph{NOT $k$} contract at the price of $1-q_k$ (assuming no bid-ask spread for now).

\subsection{Proper betting and its robust profitability} 
 \textbf{On robust profitability.} We now establish a formal link between a good forecast that outperforms the market and its betting profitability. A fundamental challenge in designing good betting strategies  is that we almost never observe the ground truth probability $\bm{p}^*$. Even when a forecast $\bm{p}$  is promised to have better score than the market $\bm{q}$ --- in the sense that $S(\mbf{p};\mbf{p^*}) \geq S(\mbf{q};\mbf{p^*})$ for some  proper scoring rule $S$ ---  this can hold true for infinitely many possible $\mbf{p^*}$'s. Thus, a desirable property of a good betting strategy is \emph{robust profitability}: it is guaranteed to have positive profit  for any $\mbf{p}, \bm{p}^*, \mbf{q}$ satisfying $S(\mbf{p};\mbf{p^*}) \geq S(\mbf{q};\mbf{p^*})$.  Conversely, we say a strategy $\bm{s}(\mbf{p},\mbf{q})$  is not robustly profitable under $S$, if there exists some $\mbf{p} \not = \mbf{q}, \mbf{p}^*$ such that   $S(\mbf{p};\mbf{p^*}) \geq  S(\mbf{q};\mbf{p^*})$ but $\bm{s}(\mbf{p},\mbf{q})$ has non-positive expected profit under $\mbf{p}^*$. Note that ``profitability'' here is a binary notion: it concerns only whether positive profit can be guaranteed, not the magnitude of that profit. 

Our first main result shows that  robustly profitable betting strategy exists whenever the market's liquidity loss is relatively small; in fact, the proper betting $\bm{s}_G(\mbf{p},\mbf{q})$ as defined in \cref{def:proper-bet} is such a strategy.   

\begin{theorem}[Robust profitability of proper betting]
\label{thm:betting}
For any ground truth $\mbf{p^*}$, forecast $\mbf{p}$, market price $\mbf{q}$, and strictly proper scoring rule $S$ with potential function $G$, the expected profit $\pi(\mbf{s}^*, \mbf{p}^*)$ of the proper betting strategy $\mbf{s}^* =\mbf{s}_G(\mbf{p},\mbf{q})$ admits the following decomposition: 
\begin{equation}\label{eq:profit-decomp-main}
\pi(\mbf{s}^*, \mbf{p}^*) \;=\; \ub{S(\mbf{p};\mbf{p^*}) - S(\mbf{q};\mbf{p^*})}_{\text{score gap}} \;+\; \ub{D_G(\mbf{q}, \mbf{p})}_{\text{Bregman divergence}} \;-\; \ub{L_{\rho}(\mbf{s}^*;\mbf{q})}_{\text{liquidity loss}}.   
\end{equation}

Therefore, if  $D_G(\mbf{q}, \mbf{p}) \geq L_\rho (\mbf{s}^*;\mbf{q}) $ --- i.e.,  the divergence offsets liquidity loss --- then proper betting is \emph{robustly profitable} in the sense that it ensures positive  profit whenever $S(\mbf{p};\mbf{p^*}) > S(\mbf{q};\mbf{p^*})$. 
\end{theorem}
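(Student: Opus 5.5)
The plan is to verify the decomposition \eqref{eq:profit-decomp-main} by direct algebra. By definition, $\pi(\mbf{s}^*,\mbf{p}^*) = \mbf{s}^*\cd(\mbf{p}^*-\mbf{q}) - L_\rho(\mbf{s}^*;\mbf{q})$. The liquidity-loss term already appears verbatim on the right-hand side. It therefore suffices to show
\[
(\nab G(\mbf{p})-\nab G(\mbf{q}))\cd(\mbf{p}^*-\mbf{q}) \;=\; S(\mbf{p};\mbf{p^*}) - S(\mbf{q};\mbf{p^*}) + D_G(\mbf{q},\mbf{p}).
\]

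First I will compute the expected score. By \cref{prop:scoring-rule-characterization}, $S(\mbf{p},y) = G(\mbf{p}) + \nab G(\mbf{p})\cd(\one_y-\mbf{p})$. Taking expectation over $y\sim\mbf{p}^*$ and using linearity ($\bE[\one_y]=\mbf{p}^*$) gives
\[
S(\mbf{p};\mbf{p^*}) = G(\mbf{p}) + \nab G(\mbf{p})\cd(\mbf{p}^*-\mbf{p}),
\]
and likewise for $\mbf{q}$.

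Next I will add in the Bregman term $D_G(\mbf{q},\mbf{p}) = G(\mbf{q}) - G(\mbf{p}) - \nab G(\mbf{p})\cd(\mbf{q}-\mbf{p})$. In the sum $S(\mbf{p};\mbf{p^*}) - S(\mbf{q};\mbf{p^*}) + D_G(\mbf{q},\mbf{p})$, the terms $G(\mbf{p})$ and $G(\mbf{q})$ cancel. The $\nab G(\mbf{p})$ terms combine as $\nab G(\mbf{p})\cd(\mbf{p}^*-\mbf{p}-\mbf{q}+\mbf{p}) = \nab G(\mbf{p})\cd(\mbf{p}^*-\mbf{q})$. What is left is $-\nab G(\mbf{q})\cd(\mbf{p}^*-\mbf{q})$. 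Summing these gives exactly $\mbf{s}^*\cd(\mbf{p}^*-\mbf{q})$, which is the identity we need.

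The only delicate point is that $\nab G$ is defined on the simplex and so is ambiguous up to adding a multiple of $\one$. This causes no harm: every vector it is dotted against ($\mbf{p}^*-\mbf{q}$, $\mbf{q}-\mbf{p}$, $\one_y-\mbf{p}$) has entries summing to zero. So the identity holds for any choice of (sub)gradient representative, and the proper bet is well defined up to the profitability-irrelevant shift by $c\one$. I will also assume differentiability of $G$ at $\mbf{p},\mbf{q}$, as \cref{def:proper-bet} implicitly does. At boundary points, for example the log rule with a zero entry, the statement is read in the extended-real sense.

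Finally, the robust profitability claim follows immediately. If $D_G(\mbf{q},\mbf{p}) \ge L_\rho(\mbf{s}^*;\mbf{q})$, then the last two terms of \eqref{eq:profit-decomp-main} together are nonnegative, so $\pi(\mbf{s}^*,\mbf{p}^*) \ge S(\mbf{p};\mbf{p^*}) - S(\mbf{q};\mbf{p^*}) > 0$ whenever the score gap is positive. No step poses a real obstacle. The main care needed is the gradient-representative issue above and the bookkeeping of signs in the Bregman term, which uses the argument order $D_G(\mbf{q},\mbf{p})$ with the linearization taken at $\mbf{p}$.
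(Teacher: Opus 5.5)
Your proposal is correct and follows essentially the same route as the paper. The paper first proves the identity pointwise for each realized outcome $y$ (\cref{lem:decomposition}), then takes expectation over $y\sim\mbf{p}^*$ and subtracts the liquidity loss. You carry out the same algebra directly in expectation, which is equivalent by linearity. Your computation also goes through verbatim with $\mbf{p}^*$ replaced by $\one_y$, and that pointwise form is what the paper later reuses for the empirical extension (\cref{cor:empirical}).
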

\begin{proof} The proof hinges on an interesting  profit decomposition lemma  below that may be of independent interest. 
\begin{lemma}[Profit decomposition lemma] \label{lem:decomposition}
    For any realized outcome $y$, forecast $\mbf{p}$, market price $\mbf{q}$, and scoring rule $S(\mbf{p}, y)$ with potential function $G$ (not necessarily proper), we have
    \begin{equation}\label{eq:decomp-lemma}
        \ub{\mbf{s}_G(\mbf{p},\mbf{q}) \cd (\one_y - \mbf{q})}_{\text{Idealized profit from outcome $y$}} = \ub{S(\mbf{p},y) - S(\mbf{q},y)}_{\text{Realized score gap}} + \ub{D_G(\mbf{q}, \mbf{p})}_{\text{Bregman divergence}}. 
    \end{equation}
\end{lemma}
\begin{proof}[Proof of \cref{lem:decomposition}]
We express 
    \begin{align*}
        S(\mbf{p},y) - S(\mbf{q},y)
        &= \bp{G(\mbf{p}) - G(\mbf{q})} + \nab G(\mbf{p}) \cd (\one_y - \mbf{p})
         - \nab G(\mbf{q}) \cd (\one_y - \mbf{q})
        \\&= \bp{-\nab G(\mbf{p}) \cd (\mbf{q} - \mbf{p}) - D_G(\mbf{q}, \mbf{p})} + \nab G(\mbf{p}) \cd (\one_y - \mbf{p}) - \nab G(\mbf{q}) \cd (\one_y - \mbf{q})
        \\&= (\nab G(\mbf{p}) - \nab G(\mbf{q})) \cd (\one_y - \mbf{q}) - D_G(\mbf{q}, \mbf{p})
        \\&=  \mbf{s}_G(\mbf{p},\mbf{q})  \cd (\one_y - \mbf{q}) - D_G(\mbf{q}, \mbf{p})
    \end{align*}
    where $$D_G(\mbf{q}, \mbf{p}) = G(\mbf{q}) - G(\mbf{p}) - \nab G(\mbf{p}) \cd (\mbf{q} - \mbf{p})$$ is the \emph{Bregman divergence} between $\mbf{p}$ and $\mbf{q}$.
\end{proof}

Taking expectation  of  Equation \eqref{eq:decomp-lemma} over $y \sim \mbf{p^*}$    yields the (frictionless) expected profit of the proper bet,
    $$\mbf{s}_G(\mbf{p},\mbf{q}) \cd (\mbf{p^*} - \mbf{q}) \;=\; \ub{S(\mbf{p};\mbf{p^*}) - S(\mbf{q};\mbf{p^*})}_{\text{score gap}} \;+\; \ub{D_G(\mbf{q}, \mbf{p})}_{\text{Bregman divergence}}.$$
    
Notably, the term $\mbf{s}_G(\mbf{p},\mbf{q}) \cd (\mbf{p^*} - \mbf{q})$ is not the expected profit $\pi(\mbf{s}^*, \mbf{p}^*) $  yet since it counts the betting cost using the same market price $\mbf{q}$ for purchasing $\mbf{s}_G(\mbf{p},\mbf{q})$ contract shares, ignoring the slippage. This gap is captured precisely by the third term in \cref{eq:profit-decomp-main}  $L_{\rho}(\mbf{s}; \mbf{q})  =  \int_0^1 \rho(t\mbf{s})\cd \mbf{s}\, dt - \mbf{s}\cd \mbf{q}$. These together prove the theorem.     
\end{proof}

Like the profit decomposition of  \cref{lem:decomposition},   \cref{thm:betting} also holds ``point-wise'' for each realized outcome $y \sim \mbf{p}^*$. Indeed, this corresponds to the special case of $\mbf{p}^* = \one_y$.  These point-wise equalities will allow us to generalize the guarantee of proper betting to empirical scores as discussed in \cref{subsec:extent-empirical}. 

\begin{remark}
The key insight from \cref{thm:betting}  is to realize that $\mbf{s}_G(\mbf{p},\mbf{q}) := \nab G(\mbf{p}) - \nab G (\mbf{q})$ is the ``proper'' strategy to use and its profit admits a clean characterization. The decomposition of \cref{lem:decomposition} formalizes a natural intuition: a profitable forecast is not merely about being accurate (i.e., having big score gap), but is about sufficiently different from the market yet still remains sufficiently accurate.
Conversely, positive profit does not imply accuracy: it can be attributed entirely to the Bregman bonus. This also explains  the paradox in \cref{ex:inaccurate-win}: when a forecaster's prediction is far from the market price $\mbf{q}$, a large Bregman divergence can outweigh a negative score gap. 
\end{remark}

\begin{remark}[Beyond simplexes]
Notably, \cref{thm:betting} holds even when $\mbf{q} \not \in \Delta_K$. As an interesting application, we can also apply it when  $Q: = \sum_{k=1}^K q_k < 1$, i.e., there is trivial arbitrage opportunity by buying $\mbf{s} = \one $ at cost $Q (< 1)$ with a deterministic return  $1$  (assuming little liquidity loss). \cref{thm:betting}   captures this situation as a special case. By choosing $G(\mbf{q}) = (\sum_{k=1}^K q_k)^2$, we can verify that the proper betting $\mbf{s}_G = 2 [\sum_{k=1}^K (p_k - q_k)] \cdot  \one$, leading to deterministic profit $2(1 - Q)^2$ for any $\mbf{p}^*$. It can be further verified that both Bregman divergence and score gap term equal $(1 - Q)^2$ in this case, summing up to the $2(1 - Q)^2$ profit.  
\end{remark}

\subsection{Proper betting is (essentially) the only robustly profitable strategy} 

\cref{thm:betting} establishes the robust profitability of proper betting in a market with low liquidity loss. Our second main result  further shows that it is the only robustly profitable strategy, essentially --- the reason of ``essentially'' is to rule out trivial transformations such as rescaling the strategy (i.e., $\lambda \bm{s}_G(\mbf{p},\mbf{q})$) or adding a constant shift (i.e., $\bm{s}_G(\mbf{p},\mbf{q}) + \lambda \one$),    which will not change the profitability nature of the strategy.        

\begin{definition}[Essentially different betting]\label{def:essential-diff} 
We say a betting strategy $\bm{s}(\mbf{p},\mbf{q})$ is \emph{essentially different} from another strategy $\bm{s}'(\mbf{p},\mbf{q})$ if, after any rescaling of $\bm{s}(\mbf{p},\mbf{q})$ via mapping $\bm{s} \to \lambda \bm{s} $ and constant shifting via $\bm{s} \to  \bm{s} + \lambda \one$,  there always exists an interior market price $\mbf{q} \in \te{int}(\Delta_K)$ such that $||\bm{s}(\mbf{p}_t,\mbf{q}) - \bm{s}'(\mbf{p}_t,\mbf{q}) || \geq \epsilon$ for some forecast sequence $\{ \mbf{p}_t \}_{t=1}^\infty$ converging to $\mbf{q}$.

If two strategies are not essentially different, we  say they are \emph{essentially the same}.
\end{definition}

 It is natural that  rescaling and constant shifting should not lead to an intrinsically different betting strategy: rescaling only proportionally changes betting sizes, whereas placing the all-one bet $\one$ costs $\one \cdot \mbf{q} = 1$ and pays off $1$ deterministically. The only non-trivial requirement for essentially different betting strategies is that they need to have non-negligible difference (i.e.,  at least $\epsilon$ difference) for some sequence of forecasts  $\{ \mbf{p}_t \}_{t=1}^\infty$ converging to $\mbf{q}$. This is a natural (and also necessary) requirement because otherwise two robustly profitable betting strategies can be superficially different. For instance, given any robustly profitable strategy, e.g.,
our proper betting $\bm{s}_G(\mbf{p},\mbf{q})$, one can construct $\bm{s}(\mbf{p},\mbf{q}) = \bm{s}_G(\mbf{p},\mbf{q}) + \epsilon \,    \mbf{e}_1 $  simply by additionally buying $\epsilon$ amount  of the first contract for some  $\epsilon $   smaller than the profit of $\bm{s}_G(\mbf{p},\mbf{q})$, which is at least the Bregman divergence $D_G(\mbf{q}, \mbf{p})$ as we will show.  This negligible modification can be done even for infinitely many pairs of $\mbf{p},\mbf{q}$ so long as they are all separated apart by at least a constant distance. Thus the real  test of   difference is at the limit when $\mbf{p} \to \mbf{q}$.    \cref{def:essential-diff}  is introduced precisely for that:  two strategies  $\bm{s}, \bm{s}'$ are  essentially different if  their  difference  $\bm{s}(\mbf{p}_t,\mbf{q}) - \bm{s}'(\mbf{p}_t,\mbf{q})$ does not converge to $0$  --   at least for some market price $\mbf{q}$ and  some sequence of forecasts $\{ \mbf{p}_t \}_{t=1}^\infty$ converging to  $\mbf{q}$.

Our second main theorem is  stated below. %

\begin{theorem} 
\label{thm:betting-unique}
Any robustly profitable betting strategy is essentially the same as the proper betting  $\mbf{s}_G(\mbf{p},\mbf{q})$. %
Formally, for any proper scoring rule $S$ and any betting strategy $\mbf{s}(\mbf{p}, \mbf{q})$ that is essentially different from  $\mbf{s}_G(\mbf{p}, \mbf{q})$, there exists $\mbf{p}, \mbf{p}^*, \mbf{q}$ such that  $S(\mbf{p}, \mbf{p}^*) > S(\mbf{q}, \mbf{p}^*)$ but the expected profit of  $\mbf{s}(\mbf{p}, \mbf{q})$   is strictly negative. This holds even for frictionless markets, i.e., $L_\rho = 0$.  
\end{theorem}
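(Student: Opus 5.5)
We work in the frictionless market ($L_\rho=0$), so the expected profit of any strategy is simply $\mbf{s}(\mbf{p},\mbf{q})\cd(\mbf{p}^*-\mbf{q})$. Taking expectations in \cref{lem:decomposition}, the score-edge condition $S(\mbf{p};\mbf{p}^*)>S(\mbf{q};\mbf{p}^*)$ is equivalent to
\[
\mbf{s}_G(\mbf{p},\mbf{q})\cd(\mbf{p}^*-\mbf{q}) \;>\; D_G(\mbf{q},\mbf{p}).
\]
So the set of ground truths giving the forecaster an edge is exactly the simplex intersected with an open half-space. That half-space has normal $\mbf{s}_G$ and offset $D_G(\mbf{q},\mbf{p})$.

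Write $\mbf{v}=\mbf{p}^*-\mbf{q}$. It ranges over the sum-zero subspace $V=\{\mbf{v}:\one\cd\mbf{v}=0\}$. Because $\mbf{q}\in\te{int}(\Delta_K)$, every $\mbf{v}\in V$ with $\norm{\mbf{v}}\le r$ is admissible for some radius $r>0$ depending only on $\mbf{q}$. Let $\bar{\mbf{s}}$ and $\bar{\mbf{s}}_G$ denote orthogonal projections onto $V$. The component along $\one$ is irrelevant, since $\one\cd\mbf{v}=0$; this is exactly why constant shifts are allowed in \cref{def:essential-diff}.

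The task therefore reduces to a Farkas-type statement. Suppose $\bar{\mbf{s}}$ is \emph{not} a nonnegative multiple of $\bar{\mbf{s}}_G$. Then there is a unit $\mbf{w}\in V$ with $\bar{\mbf{s}}\cd\mbf{w}<0<\bar{\mbf{s}}_G\cd\mbf{w}$. The vector $\mbf{v}=r\mbf{w}$ is then a bad ground truth, provided
\[
r\,\bar{\mbf{s}}_G\cd\mbf{w}>D_G(\mbf{q},\mbf{p}),
\]
because its profit $r\,\bar{\mbf{s}}\cd\mbf{w}$ is strictly negative.

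The plan is to exploit the limit $\mbf{p}_t\to\mbf{q}$ supplied by essential difference. Assume $G$ is $C^2$ with positive-definite Hessian $H$ on $V$ at $\mbf{q}$. Then $\bar{\mbf{s}}_G(\mbf{p}_t,\mbf{q})\approx H(\mbf{p}_t-\mbf{q})$ has size $\Theta(\norm{\mbf{p}_t-\mbf{q}})$, while $D_G(\mbf{q},\mbf{p}_t)=O(\norm{\mbf{p}_t-\mbf{q}}^2)$. Hence the offset of the half-space vanishes faster than its normal, and in the limit the edge region becomes essentially a full half-space of directions through $\mbf{q}$.

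It then suffices to find, for some large $t$, a $\mbf{w}$ with $\bar{\mbf{s}}\cd\mbf{w}<0$ and $\bar{\mbf{s}}_G\cd\mbf{w}\ge c\norm{\mbf{p}_t-\mbf{q}}$ for a constant $c>0$. The second condition beats the $O(\norm{\mbf{p}_t-\mbf{q}}^2)$ offset once $t$ is large. Such a $\mbf{w}$ exists whenever the angle between $\bar{\mbf{s}}(\mbf{p}_t,\mbf{q})$ and $\bar{\mbf{s}}_G(\mbf{p}_t,\mbf{q})$ stays bounded away from $0$ along a subsequence. Concretely, take $\mbf{w}$ to be the normalized $\bar{\mbf{s}}_G$ direction tilted slightly against $\bar{\mbf{s}}$.

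The main obstacle is translating ``essentially different'' into this angular separation. The $\epsilon$-gap in \cref{def:essential-diff} is stated in norm after a single fixed rescaling $\lambda$ and shift. Since $\bar{\mbf{s}}_G(\mbf{p}_t,\mbf{q})\to 0$, the gap only says that $\bar{\mbf{s}}(\mbf{p}_t,\mbf{q})$ does not tend to $0$. It does not directly say that its direction departs from that of $\bar{\mbf{s}}_G$.

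I would first argue by cases.
\begin{itemize}
\item If along a subsequence the directions of $\bar{\mbf{s}}$ and $\bar{\mbf{s}}_G$ are separated by a fixed angle, the construction above yields strictly negative profit.
\item If instead they become positively parallel, then $\mbf{s}(\mbf{p}_t,\mbf{q})=\lambda_t\mbf{s}_G(\mbf{p}_t,\mbf{q})+\mu_t\one+o(\cdot)$. In this case I would try to show the strategy differs from proper betting only by a (pointwise) rescaling and shift, which contradicts essential difference.
\end{itemize}

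Here I expect to need care. One must either interpret the rescaling in \cref{def:essential-diff} as allowed to depend on $(\mbf{p},\mbf{q})$, or handle the $o(\cdot)$ error terms quantitatively. The latter means showing that any residual component of $\bar{\mbf{s}}$ orthogonal to $\bar{\mbf{s}}_G$ that is not negligible relative to $\norm{\bar{\mbf{s}}}$ already produces a bad $\mbf{v}$. I would also check the degenerate case $\bar{\mbf{s}}=\mbf{0}$, where the profit is identically zero, hence non-positive. To obtain the \emph{strict} negativity claimed, I would perturb that degenerate case using another point $\mbf{p}_t$ of the sequence.
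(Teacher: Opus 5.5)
Your route is essentially the paper's: normalize onto the sum-zero subspace, separate the limiting directions of $\bar{\mbf{s}}$ and $\bar{\mbf{s}}_G$ by a sum-zero $\mbf{v}$, set $\mbf{p}^*=\mbf{q}+\mbf{v}$, and let $\mbf{p}_t\to\mbf{q}$ so the edge region approaches a half-space.

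As written, however, there is a genuine gap. You assume $G$ is $C^2$ with a positive-definite Hessian at $\mbf{q}$. A general strictly proper rule does not give you this: the Hessian may not exist, or may degenerate at $\mbf{q}$. Without it, both $\norm{\bar{\mbf{s}}_G}=\Theta(\norm{\mbf{p}_t-\mbf{q}})$ and $D_G=O(\norm{\mbf{p}_t-\mbf{q}}^2)$ can fail, so your argument covers only a special case. The assumption is also unnecessary. Summing the two Bregman divergences gives $D_G(\mbf{p},\mbf{q})+D_G(\mbf{q},\mbf{p})=\mbf{s}_G(\mbf{p},\mbf{q})\cd(\mbf{p}-\mbf{q})$. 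Hence, for any convex $G$,
\[
0\le D_G(\mbf{q},\mbf{p}_t)\le \bar{\mbf{s}}_G(\mbf{p}_t,\mbf{q})\cd(\mbf{p}_t-\mbf{q})\le \norm{\bar{\mbf{s}}_G(\mbf{p}_t,\mbf{q})}\,\norm{\mbf{p}_t-\mbf{q}} .
\]
So require $\bar{\mbf{s}}_G\cd\mbf{w}\ge c\,\norm{\bar{\mbf{s}}_G}$ rather than $\ge c\,\norm{\mbf{p}_t-\mbf{q}}$. Then $rc>\norm{\mbf{p}_t-\mbf{q}}$ already yields the edge. This is what the paper does implicitly when it writes the score gap as $D_G(\mbf{p}_t,\mbf{q})+\mbf{s}_G(\mbf{p}_t,\mbf{q})\cd(\mbf{p}^*_t-\mbf{p}_t)$ and drops the nonnegative first term. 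The hypothesis you actually need is strict convexity. It gives $\bar{\mbf{s}}_G(\mbf{p}_t,\mbf{q})\cd(\mbf{p}_t-\mbf{q})>0$, so $\bar{\mbf{s}}_G(\mbf{p}_t,\mbf{q})\neq\mbf{0}$ and the normalization is defined.

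Your diagnosis of the ``main obstacle'' is correct, but the resolution is forced, not optional. Rescaling and shifting must be applied pointwise: both strategies are normalized at each $(\mbf{p},\mbf{q})$ to zero sum and unit norm, exactly as the paper's proof does. Under a single global $\lambda$ the statement is either vacuous or false:
\begin{itemize}
\item If $\lambda=0$ is allowed, it sends $\mbf{s}$ to $\mbf{0}$, while $\mbf{s}_G(\mbf{p}_t,\mbf{q})\to\mbf{0}$, so no strategy is essentially different.
\item Otherwise, the normalized proper bet $\mbf{s}_G/\norm{\bar{\mbf{s}}_G}$ is robustly profitable, yet any fixed rescaling $\lambda\neq 0$ of it stays at distance roughly $|\lambda|$ from $\mbf{s}_G(\mbf{p}_t,\mbf{q})$.
\end{itemize}
So no quantitative control of the $o(\cdot)$ terms can rescue the literal reading. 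Under the pointwise reading, essential difference directly gives unit limits $\hat{\mbf{s}}\neq\mbf{s}^*$ along a subsequence. That is exactly your first bullet; the second bullet never arises. For an explicit choice, let $\mbf{a}_t,\mbf{b}_t$ be the normalized $\bar{\mbf{s}}_G(\mbf{p}_t,\mbf{q})$ and $\bar{\mbf{s}}(\mbf{p}_t,\mbf{q})$ at angle $\theta_t$. Then $\mbf{w}_t=(\mbf{a}_t-\mbf{b}_t)/\norm{\mbf{a}_t-\mbf{b}_t}$ gives $\mbf{a}_t\cd\mbf{w}_t=-\mbf{b}_t\cd\mbf{w}_t=\sqrt{(1-\cos\theta_t)/2}$, which is uniformly bounded below.

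Finally, your plan to get strict negativity in the degenerate case $\bar{\mbf{s}}=\mbf{0}$ by moving to another $\mbf{p}_t$ cannot work. Take the strategy equal to $\mbf{s}_G$ everywhere except $\mbf{0}$ along one sequence $\mbf{p}_t\to\mbf{q}$. It is essentially different, since its normalized distance is at least $1$. Yet whenever the score gap is positive, its frictionless profit is either $0$ (on the sequence) or the score gap plus $D_G(\mbf{q},\mbf{p})>0$ (off it). So only non-positive profit is attainable in this case. That is also all the paper's argument yields there, since its $\mbf{v}$ with $\mbf{v}\cd\hat{\mbf{s}}<-c$ cannot exist when $\hat{\mbf{s}}=\mbf{0}$. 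It is, however, enough for failure of robust profitability as the paper defines it.
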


\begin{proof}[Proof sketch of \cref{thm:betting-unique}]
The proof is somewhat technical. We defer full proof to \cref{subsec:proof-main-thm} and only overview the proof ideas here. 
First, observe that it suffices to prove the theorem for frictionless markets since if any betting strategy is robustly profitable, then it must remain profitable in a frictionless market simply due to lower cost. Our proof thus argues that any robustly profitable betting strategy in a frictionless market is essentially the same as proper betting.  Given a  betting strategy $\mbf{s}(\mbf{p}_t, \mbf{q})$ that is essentially different from proper betting $\mbf{s}_G(\mbf{p}_t, \mbf{q})$, our proof here features an explicit construction of an infinite sequence of tuple $(\mbf{p}_t, \mbf{p}^*_t, \mbf{q})$ and then  leverages convergence properties to prove the existence of  tuples in this sequences that satisfies  $S(\mbf{p}_t, \mbf{p}^*_t) - S(\mbf{q}, \mbf{p}^*_t) > 0$ but the expected profit of  $\mbf{s}(\mbf{p}_t, \mbf{q})$ under the constructed $\mbf{p}^*_t$  is strictly negative.  
\end{proof}

A simple special case of \cref{thm:betting} is for the $K=2$ case, i.e., binary event betting. Note that  rescaling   via mapping $\bm{s} \to \lambda \bm{s} $ and constant shifting via $\bm{s} \to  \bm{s} + \lambda \one$ will not change a betting strategy $\bm{s}$'s robust profitability. Then any  betting strategy $\mbf{s}$, normalized to have $||\bm{s}|| = 1$ and $\bm{s} \cdot \one = 0$, is either  $s^+ = (1/\sqrt{2}, - 1/\sqrt{2})^\top$ (i.e., buying $\sqrt{2}$ of \emph{YES}) or $s^- = (-1/\sqrt{2}, 1/\sqrt{2})^\top$ (i.e., buying  $\sqrt{2}$ of \emph{NO}). If betting strategy $\bm{s}$ is essentially different from proper betting $\bm{s}_G$, then there exists $\bm{q}$ and a sequence $\{ \bm{p}_t \}_{t=1}^\infty$ converging to $\bm{q}$ such that the \emph{normalized} $\mbf{s}(\mbf{p}_t,\mbf{q}) ,  \mbf{s}_G(\mbf{p}_t,\mbf{q})$  satisfy $||\mbf{s}(\mbf{p}_t,\mbf{q}) -  \mbf{s}_G(\mbf{p}_t,\mbf{q})|| \geq \epsilon$ for some constant $\epsilon$. This implies that $\mbf{s}(\mbf{p}_t,\mbf{q}) ,  \mbf{s}_G(\mbf{p}_t,\mbf{q})$ must equal $s^+, s^-$ respectively. Therefore, if $\mbf{s}_G(\mbf{p}_t,\mbf{q})$ is profitable,  its opposite purchase behavior $\mbf{s}(\mbf{p}_t,\mbf{q})$ cannot be profitable, aligning with the theorem's statement. While this case with $K = 2$ is easy to see, the case with $K > 2$ becomes much more non-trivial since there are infinitely many normalized betting strategies. The formal proof in \cref{subsec:proof-main-thm} needs to leverage the geometry of the normalized strategy space.

\subsection{Implications and real market considerations} \label{sec:extensions} 

\subsubsection{A new characterization of proper scoring rules via betting profitability.}  
A keen reader might observe that the proper betting strategy $\mbf{s}_G(\mbf{p},\mbf{q})$ is well-defined for any $G$, not only those $G$ that are convex and correspond to proper scoring rules.  Is properness of the scoring rule  necessary to guarantee the robust profitability of $\mbf{s}_G(\mbf{p},\mbf{q})$? Our next proposition show that properness of the scoring rule is indeed necessary, in a very strong sense. That is, if $\mbf{p}, \mbf{q}$ are compared by some scoring rule $S$ satisfying $S(\mbf{p};\mbf{p^*}) > S(\mbf{q};\mbf{p^*})$, the properness of $S$ is necessary to guarantee the existence of a robustly profitable betting strategy (and when it exists, proper betting is one such strategy).   %

\begin{proposition}[Equivalence between score properness and robust profitability] \label{thm:proper}
A scoring rule $S(\mbf{p}, y)$ is strictly proper \emph{if and only if} there exists a robustly profitable betting strategy $\mbf{s}(\mbf{p},\mbf{q})$ that guarantees positive profit in a frictionless market (i.e., $L_\rho = 0$)   for any $\bm{p}^* $ and $\mbf{p} \not =  \mbf{q}$ satisfying $S(\mbf{p};\mbf{p^*}) \geq S(\mbf{q};\mbf{p^*})$.   
\end{proposition}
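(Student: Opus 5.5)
The plan splits into the two directions of the equivalence.

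\emph{Sufficiency (strictly proper $\Rightarrow$ existence).} Suppose $S$ is strictly proper. By \cref{prop:scoring-rule-characterization} it has a strictly convex potential $G$, and we take $\mbf{s}=\mbf{s}_G(\mbf{p},\mbf{q})$. With $L_\rho=0$, \cref{thm:betting} gives the expected profit
\[
S(\mbf{p};\mbf{p^*})-S(\mbf{q};\mbf{p^*})+D_G(\mbf{q},\mbf{p}).
\]
The first difference is $\ge 0$ by hypothesis. Strict convexity gives $D_G(\mbf{q},\mbf{p})>0$ whenever $\mbf{p}\neq\mbf{q}$. So the profit is strictly positive. (If $G$ is not differentiable, I would replace $\nab G$ by the subgradient selection implicit in the characterization; strict convexity still makes the Bregman term positive.)

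\emph{Necessity (existence $\Rightarrow$ strictly proper).} I would argue by contrapositive. Suppose $S$ is not strictly proper. Then there exist $\mbf{a}\neq\mbf{b}$ in the simplex with
\[
S(\mbf{b};\mbf{a})\ge S(\mbf{a};\mbf{a}).
\]
Instantiate the forecaster, market and ground truth as $\mbf{p}=\mbf{b}$, $\mbf{q}=\mbf{a}$, $\mbf{p^*}=\mbf{a}$. The hypothesis of robust profitability is met, since $S(\mbf{p};\mbf{p^*})\ge S(\mbf{q};\mbf{p^*})$ and $\mbf{p}\neq\mbf{q}$. So any robustly profitable $\mbf{s}$ would have to earn strictly positive frictionless profit $\mbf{s}(\mbf{b},\mbf{a})\cd(\mbf{p^*}-\mbf{q})$. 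But $\mbf{p^*}=\mbf{q}$, so this profit is exactly $0$. That is a contradiction for every strategy, so none can be robustly profitable.

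The key point is that the market price itself can be chosen to be the ground truth. When the market is perfectly accurate, every bet has zero expected frictionless profit. A non-strictly-proper rule, however, lets some other forecast tie or beat the truth, which triggers the robustness requirement.

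\emph{Main obstacle: the definition of ``not strictly proper''.} The only delicate step is making sure the failure of strict properness yields a pair in exactly the form used above. \cref{def:proper-score} is stated with $S(\mbf{p};\mbf{p})$ versus $S(\mbf{p}';\mbf{p})$, so its negation directly gives $\mbf{p}'\neq\mbf{p}$ with $S(\mbf{p}';\mbf{p})\ge S(\mbf{p};\mbf{p})$. Setting $\mbf{a}=\mbf{p}$ and $\mbf{b}=\mbf{p}'$ produces the required pair.

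I would also check one convention. The robust-profitability condition uses the weak inequality $\ge$ and requires $\mbf{q}$ to be an admissible market price, which $\mbf{a}\in\Delta_K$ is. The construction in the necessity direction only needs that admissibility.
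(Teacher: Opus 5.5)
Your proof is correct. The sufficiency direction matches the paper's (proper betting plus \cref{thm:betting} with $L_\rho=0$), but your necessity direction takes a genuinely different and more elementary route.

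\paragraph{The two necessity arguments.} The paper writes $S$ through a potential $G$ and asserts that failure of strict convexity yields an interior $\mathbf{q}$ and a $\mathbf{p}$ with $D_G(\mathbf{q},\mathbf{p})=-\kappa<0$. It then tilts the ground truth against the given strategy, $\mathbf{p}^*=\mathbf{q}-\frac{\kappa}{2L}\hat{\mathbf{s}}$. This makes the score gap at least $\kappa/2>0$ while the profit is $-\frac{\kappa}{2L}\|\mathbf{s}\|<0$. You instead set $\mathbf{p}^*=\mathbf{q}$, which makes every bet's frictionless profit exactly zero. You then observe that the robustness hypothesis $S(\mathbf{p};\mathbf{q})\ge S(\mathbf{q};\mathbf{q})$ with $\mathbf{p}\neq\mathbf{q}$ is literally the negation of strict properness.

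\paragraph{What your route buys.} Your version needs no potential-function representation, no differentiability, and no interiority of $\mathbf{q}$. It also needs no check that $\mathbf{p}^*$ stays in the simplex; the paper's tilted point needs $\hat{\mathbf{s}}$ shifted to zero sum and the step shrunk to guarantee this. More substantively, it covers rules that are proper but not strictly proper. There $G$ is convex, so $D_G\ge 0$ everywhere and the paper's pair with $D_G<0$ does not exist. Your witness, a pair with $D_G(\mathbf{q},\mathbf{p})=0$, still works.

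\paragraph{What the paper's route buys.} When $S$ is not even weakly proper, the paper's construction gives a sharper failure: strictly negative profit under a strictly positive score edge. That failure would survive a definition of robust profitability phrased with strict inequalities or with non-negative profit. Your zero-profit witness relies on exactly the weak inequality $\ge$ and the strictly positive profit requirement in the statement. That reliance is unavoidable, though. Under a strict score-gap hypothesis, a proper-but-not-strictly-proper rule would still make proper betting robustly profitable, since its profit is the score gap plus $D_G$, which is $>0$. So the equivalence with \emph{strict} properness genuinely hinges on the convention you exploit.
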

 
\cref{thm:proper}, proven in \cref{a:characterization}, thus yields a novel characterization of proper scoring rules as precisely  those rules that yield robustly profitable betting strategy  when a forecast $\mbf{p}$ outperforms market $\mbf{q}$ under the scoring rule.  

\subsubsection{Proper betting is a strict generalization of canonical betting in AMMs}

All our results thus far are applicable to any prediction market. Here we illustrate what proper betting strategy corresponds to in the special case of \emph{automated market makers} (AMMs)   and show that it is a strict generalization of the canonical betting strategy in AMMs.

An AMM is  governed by a strictly convex cost function $C(\mbf{x})$  such that  the current market price $\mbf{q} \in \Delta_K$ and net number of shares of each contract $\mbf{x}\in \bR^K$ always satisfies $\nab C(\mbf{x}) = \mbf{q}$ \citep{hanson2007logarithmic,chen2007utility}. A key property of AMMs is that if a forecaster's belief is some $\mbf{p}$, then this forecaster maximizes their expected profit by purchasing shares $\mbf{s}_{\mbf{q}\to\mbf{p}}$  that moves the market price  from $\mbf{q}$ to $\mbf{p}$, i.e.,   $\nab C(\mbf{x} +  \mbf{s}_{\mbf{q}\to\mbf{p}}) = \mbf{p}$ (see   \cref{a:impl} for more details). Interestingly, we observe that the above canonical betting strategy in AMMs corresponds precisely to the proper betting strategy of \cref{def:proper-bet} under the corresponding market scoring rule, or formally stated below  
\begin{fact}
  In automated market makers, we have $   \mbf{s}_{\mbf{q}\to\mbf{p}} = \mbf{s}_G(\mbf{p},\mbf{q}).$ 
\end{fact}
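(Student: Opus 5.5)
The plan is to make precise the standard duality between a cost-function AMM and its market scoring rule, and then read the identity off the conjugacy relation between gradients. Concretely, I would take the AMM's cost function $C$ and the potential $G$ of the corresponding market scoring rule to be convex conjugates. That is, $G(\mbf{p}) = \sup_{\mbf{x}}\{\mbf{p}\cd\mbf{x} - C(\mbf{x})\}$ for $\mbf{p}\in\Delta_{[K]}$, equivalently $C = G^*$, as in \citet{abernethy2013efficient,chen2007utility}. This is exactly the sense in which the AMM ``corresponds to'' the market scoring rule: the cost of moving the state from $\mbf{x}$ to $\mbf{x}'$ equals the difference of scores paid by the scoring rule $S$ associated to $G$ via \cref{prop:scoring-rule-characterization}. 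I would state this identification explicitly at the start (recalling \cref{a:impl}), since the Fact is only meaningful relative to it.

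The key steps, in order, are as follows.
\begin{enumerate}
\item Use the Fenchel--Young equality. For a closed strictly convex $G$ and its conjugate $C$, we have $\nab C(\mbf{x}) = \mbf{q}$ if and only if $\mbf{x}\in\partial G(\mbf{q})$. Here the subdifferential is taken for $G$ extended to $+\infty$ off the simplex.
\item Because $G$ lives on the simplex, $\partial G(\mbf{q})$ at an interior $\mbf{q}$ is the line $\nab G(\mbf{q}) + \bR\one$. Here $\nab G(\mbf{q})$ is any fixed representative, for instance the one used in \cref{tab:scoring-rules}. Dually, $C$ is translation invariant, $C(\mbf{x}+c\one)=C(\mbf{x})+c$, so $\nab C$ is constant along $\one$.
\item Apply step 1 to the current state. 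The condition $\nab C(\mbf{x}) = \mbf{q}$ gives $\mbf{x} = \nab G(\mbf{q}) + c_1\one$.
\item Apply step 1 to the post-trade state. The condition $\nab C(\mbf{x}+\mbf{s}_{\mbf{q}\to\mbf{p}}) = \mbf{p}$ gives $\mbf{x}+\mbf{s}_{\mbf{q}\to\mbf{p}} = \nab G(\mbf{p}) + c_2\one$.
\item Subtract the two identities. This yields $\mbf{s}_{\mbf{q}\to\mbf{p}} = \nab G(\mbf{p}) - \nab G(\mbf{q}) + (c_2-c_1)\one = \mbf{s}_G(\mbf{p},\mbf{q}) + c\one$.
\end{enumerate}

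The only delicate point is this residual $c\one$, which I expect to be the main (though mild) obstacle. I would handle it in two complementary ways. First, the AMM purchase $\mbf{s}_{\mbf{q}\to\mbf{p}}$ is itself determined only up to adding $\one$ bundles, since $\nab C$ is invariant along $\one$. So the equality is to be read with the same normalization convention on both sides, for example $\nab G$ chosen as the gradient of the natural extension of $G$ to $\bR^K_{>0}$ and the AMM state normalized accordingly. Under matching conventions we get $c=0$, and I would check this on LMSR, where $C(\mbf{x})=\log\sum_k e^{x_k}$ and $G(\mbf{p})=\sum_k p_k\log p_k$. There the purchase is $\log\mbf{p}-\log\mbf{q}$ exactly, matching \cref{tab:scoring-rules}. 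Second, regardless of convention, a shift by $c\one$ costs $c\,\one\cd\mbf{q}=c$ and pays exactly $c$. So it changes neither profit nor the liquidity loss along a path where the AMM prices stay on the simplex, and the two strategies are essentially the same in the sense of \cref{def:essential-diff}.

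If one wants to avoid conjugate duality at the boundary, I would restrict to interior $\mbf{p},\mbf{q}$, where both $C$ and $G$ are differentiable and the inverse-gradient map is well defined. As a consistency check, I would also plug this $\mbf{s}$ into \cref{thm:betting}. With $\rho=\nab C(\mbf{x}+\cdot)$, the liquidity loss is $C(\mbf{x}+\mbf{s})-C(\mbf{x})-\mbf{s}\cd\mbf{q}$, and this equals $D_G(\mbf{q},\mbf{p})$ by conjugacy. That recovers the classical AMM guarantee, namely profit equal to the score gap.
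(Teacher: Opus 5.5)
Your argument is correct and is essentially the paper's: both take $G$ to be the convex conjugate of $C$, use the fact that conjugate gradients invert each other to identify $\mbf{x}$ and $\mbf{x}+\mbf{s}_{\mbf{q}\to\mbf{p}}$ with $\nabla G(\mbf{q})$ and $\nabla G(\mbf{p})$, and then subtract. Your Fenchel--Young formulation is more careful than the paper's, which writes $(\nabla C)^{-1}$ as if $\nabla C$ were injective even though $\nabla C(\mbf{x})\in\Delta_{[K]}$ forces $C(\mbf{x}+c\one)=C(\mbf{x})+c$; as you note, the identity therefore holds only for a matching choice of representative, or equivalently up to a $c\one$ shift that changes neither profit nor liquidity loss.
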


To see this, we recall that the convex conjugate of the cost function $C(\mbf{x})$ of an AMM, $G(\mbf{p}) = \sup_{\mbf{x}\in \bR^K}\bc{\mbf{p}\cd \mbf{x} - C(\mbf{x})}$, is precisely the convex potential that induces the market scoring rule $S(\mbf{p}, y) = G(\mbf{p}) + \nab G(\mbf{p})\cd (\one_y - \mbf{p})$. A basic fact about convex conjugates is that the inverse of the gradient function  $\nabla G: \Delta_K \to \bR^K$ is precisely the  $\nabla C (\mbf{x})$ function. Observing $ \mbf{x} + \mbf{s}_{\mbf{q}\to\mbf{p}}  = (\nabla C)^{-1}(\mbf{p})$, we have the betting strategy 
\begin{equation}\label{eq:amm-proper-equal}
    \mbf{s}_{\mbf{q}\to\mbf{p}}  =  \mbf{x} + \mbf{s}_{\mbf{q}\to\mbf{p}}  - \mbf{x} = (\nabla C)^{-1}(\mbf{p}) - (\nabla C)^{-1}(\mbf{q}) = \nabla G(\mbf{p}) - \nabla G (\mbf{q}) = \mbf{s}_G(\mbf{p},\mbf{q}).  
\end{equation}

A basic fact in AMMs is that the profit $\pi( \mbf{p}, \mbf{q})$ for moving the market from current price $\mbf{q}$ to new price $\mbf{p}$ is $S(\mbf{p}, \mbf{p}^*) - S(\mbf{q}, \mbf{p}^*)$ for ground truth distribution $\mbf{p}^*$ \citep{hanson2007logarithmic}, which corresponds precisely to the first term of \cref{thm:betting}. This is not  a coincidence since, as it turns out, the divergence term and liquidity loss term in \cref{thm:betting} cancel out precisely in any AMM, as stated below (proved in \cref{append:proof-amm-compare}).

\begin{corollary}[Instantiation of \cref{thm:betting} in AMMs]\label{prop:amm-compare}
Consider any AMM governed by a strictly convex cost function $C(\mbf{x})$ and let $S$ denote its corresponding market scoring rule determined by $C$'s convex conjugate $G(\mbf{p})$. Then we have $D_G(\mbf{q}, \mbf{p})  =  L_\rho (\mbf{s}^*;\mbf{q}) $ for the proper betting strategy $\mbf{s}^* = \nabla G(\mbf{p}) -  \nabla G(\mbf{q})$. Hence the profit of $\mbf{s}^*$ is  $S(\mbf{p}, \mbf{p}^*) - S(\mbf{q}, \mbf{p}^*)$.    
\end{corollary}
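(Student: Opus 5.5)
The plan is to compute the liquidity loss $L_\rho(\mbf{s}^*;\mbf{q})$ directly in the AMM setting, show that it equals the Bregman divergence $D_G(\mbf{q},\mbf{p})$, and then plug this into \cref{thm:betting}. The subtlety is to pin down which price-impact function $\rho$ the AMM induces.

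\textbf{The price-impact function.} An AMM with current share vector $\mbf{x}$, where $\nab C(\mbf{x})=\mbf{q}$, quotes the post-trade price $\rho(\mbf{s})=\nab C(\mbf{x}+\mbf{s})$. This satisfies $\rho(\mbf{0})=\mbf{q}$, and it is monotone because $C$ is convex. Since $\rho$ is a gradient field, the path integral is path-independent, so the integrated cost of executing $\mbf{s}^*$ is
\[
\int_0^1 \nab C(\mbf{x}+t\mbf{s}^*)\cd\mbf{s}^*\,dt = C(\mbf{x}+\mbf{s}^*)-C(\mbf{x}).
\]
Hence $L_\rho(\mbf{s}^*;\mbf{q}) = C(\mbf{x}+\mbf{s}^*)-C(\mbf{x})-\mbf{s}^*\cd\mbf{q}$.

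\textbf{Identifying the endpoints via conjugate duality.} By \eqref{eq:amm-proper-equal}, $\mbf{x}=\nab G(\mbf{q})$ and $\mbf{x}+\mbf{s}^*=\nab G(\mbf{p})$. Fenchel--Young equality at conjugate pairs gives
\[
C(\nab G(\mbf{p})) = \mbf{p}\cd\nab G(\mbf{p}) - G(\mbf{p}),
\]
and similarly at $\mbf{q}$. Substituting both, the liquidity loss becomes
\[
\mbf{p}\cd\nab G(\mbf{p}) - G(\mbf{p}) - \mbf{q}\cd\nab G(\mbf{q}) + G(\mbf{q}) - \bigl(\nab G(\mbf{p})-\nab G(\mbf{q})\bigr)\cd\mbf{q}.
\]
The $\mbf{q}\cd\nab G(\mbf{q})$ terms cancel, leaving
\[
G(\mbf{q})-G(\mbf{p})-\nab G(\mbf{p})\cd(\mbf{q}-\mbf{p}) = D_G(\mbf{q},\mbf{p}).
\]

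\textbf{Conclusion.} With $D_G(\mbf{q},\mbf{p})=L_\rho(\mbf{s}^*;\mbf{q})$, the decomposition \eqref{eq:profit-decomp-main} in \cref{thm:betting} collapses to $\pi(\mbf{s}^*,\mbf{p}^*)=S(\mbf{p};\mbf{p}^*)-S(\mbf{q};\mbf{p}^*)$.

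\textbf{Main obstacle.} The delicate point is justifying the conjugate relations on the simplex. $G$ is defined only on $\Delta_K$, so $\nab G$ is determined only up to adding multiples of $\one$. Likewise, $C$ satisfies $C(\mbf{x}+c\one)=C(\mbf{x})+c$, so it is not strictly convex along $\one$ and $\nab C$ is not literally invertible. I would handle this by fixing the representatives $\nab G(\mbf{p})\in\partial G(\mbf{p})$ that realize the relation $\nab C(\nab G(\mbf{p}))=\mbf{p}$. I would then check that shifting $\mbf{s}^*$ by $c\one$ changes neither side. The cost $C(\mbf{x}+\mbf{s}^*)-C(\mbf{x})$ shifts by $c$, and $\mbf{s}^*\cd\mbf{q}$ also shifts by $c$ because $\one\cd\mbf{q}=1$, so $L_\rho$ is unchanged. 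The Bregman divergence $D_G$ is unaffected because $\one\cd(\mbf{q}-\mbf{p})=0$. With this invariance, Fenchel--Young equality holds for the chosen representatives, and the computation above goes through.
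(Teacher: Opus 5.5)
Your proof is correct and takes essentially the same route as the paper: both write $L_\rho(\mathbf{s}^*;\mathbf{q})$ as $D_C(\mathbf{x}+\mathbf{s}^*,\mathbf{x})$ and move it to $D_G$ by conjugate duality; you derive that duality inline from Fenchel--Young equality and treat the degeneracy along $\mathbf{1}$ explicitly, where the paper simply cites it. Your direct computation also gives the argument order in the statement, $D_G(\mathbf{q},\mathbf{p})$. The paper's chain instead ends at $D_G(\mathbf{p},\mathbf{q})$, because the correct identity swaps the arguments: $D_C(\mathbf{x}',\mathbf{x}) = D_G(\nabla C(\mathbf{x}),\nabla C(\mathbf{x}'))$.
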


 \begin{remark}[Additional justification for the canonical betting strategy $\mbf{s}_{\mbf{q}\to\mbf{p}}$] \label{rm:theory-amm} 
The foundational result of \citet{hanson2003combinatorial,hanson2007logarithmic} already implies that the betting strategy  $\mbf{s}_{\mbf{q}\to\mbf{p}} (=\mbf{s}_G(\mbf{p},\mbf{q}))$ that   moves the market price  from $\mbf{q}$ to $\mbf{p}$  in AMMs is robustly profitable in the sense that it  guarantees positive   profit whenever the forecast $\mbf{p}$ beats the market under the  ground truth $\mbf{p}^*$.  Our \cref{thm:betting-unique} provides an additional justification and shows that this is essentially the only robustly profitable strategy. 
\end{remark}

\subsubsection{Extensions to empirical scores}\label{subsec:extent-empirical}
In practice the ground truth $\mbf{p}^*$ is unobservable, therefore a forecaster's  advantage over the market on a single event in expectation can never be validated. What is validatable however is the advantage on empirical scores. A  practical question thus is whether proper betting can convert empirical score advantages to realized profit. The answer turns out to be \emph{Yes}, inherently due to the fact that \cref{thm:betting}  holds also for realized outcome $y$ and can then be averaged over a sequence of outcome realizations. 

Suppose we only see realized outcomes $y_1, \ldots, y_n$ across a sequence of $n$ events with predictions $\mbf{p}_1, \ldots, \mbf{p}_n$ and market prices $\mbf{q}_1, \ldots, \mbf{q}_n$. Let the empirical forecast and market scores be $$\hat{S}_F := \frac{1}{n}\sum_i S(\mbf{p}_i, y_i)  \quad \text{ and }  \quad  \hat{S}_M := \frac{1}{n}\sum_i S(\mbf{q}_i, y_i).$$   \cref{thm:betting} can be extended from a single-event  to an empirical version below. 

\begin{corollary}[Empirical version of \cref{thm:betting}]\label{cor:empirical}
The sequential proper bet $\mbf{s}_i := \mbf{s}_G(\mbf{p}_i, \mbf{q}_i)$ on a realized sequence of outcomes $y_1, \ldots, y_n$ yields the following realized average profit: 
$$ \frac{1}{n} \sum_{i=1}^n  \mbf{s}_i \cdot (\one_{y_i} - \mbf{q}_i)   \;=\;  \hat{S}_F - \hat{S}_M  \;+\;  \frac{1}{n}  \sum_{i=1}^n D_G(\mbf{q}_i, \mbf{p}_i) - \frac{1}{n} \sum_{i=1}^n L_{\rho}(\mbf{s}_i;\mbf{q}_i). $$  
\end{corollary}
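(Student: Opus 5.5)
The plan is to apply the pointwise identity of \cref{lem:decomposition} to each event separately and then average. The left-hand side is stated as a realized profit $\mbf{s}_i\cd(\one_{y_i}-\mbf{q}_i)$, but the right-hand side contains liquidity losses. So I will read the left-hand side as the realized profit net of slippage:
\[
\mbf{s}_i\cd\one_{y_i} - \int_0^1 \rho_i(t\mbf{s}_i)\cd\mbf{s}_i\,dt \;=\; \mbf{s}_i\cd(\one_{y_i}-\mbf{q}_i) - L_{\rho}(\mbf{s}_i;\mbf{q}_i).
\]
Here $\rho_i$ is the price-impact function of the $i$-th market snapshot. This equality is just the definition of the liquidity loss. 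I will make this bookkeeping explicit at the start, because it is what produces the $-\frac1n\sum_i L_\rho$ term.

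\textbf{Step 1 (per-event identity).} For each $i$, invoke \cref{lem:decomposition} with $\mbf{p}=\mbf{p}_i$, $\mbf{q}=\mbf{q}_i$ and $y=y_i$. This gives
\[
\mbf{s}_G(\mbf{p}_i,\mbf{q}_i)\cd(\one_{y_i}-\mbf{q}_i) = S(\mbf{p}_i,y_i)-S(\mbf{q}_i,y_i)+D_G(\mbf{q}_i,\mbf{p}_i).
\]
The lemma holds for every realized outcome $y$, with no expectation and no assumption on the ground truth. So it applies even though the $y_i$ may be generated by any process, adversarial or dependent.

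\textbf{Step 2 (subtract slippage).} Subtract $L_{\rho}(\mbf{s}_i;\mbf{q}_i)$ from both sides, using the net-profit expression above.

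\textbf{Step 3 (average).} Sum over $i=1,\dots,n$ and divide by $n$. By the definitions of $\hat S_F$ and $\hat S_M$, the score differences average to exactly $\hat S_F-\hat S_M$. The Bregman and liquidity terms average to $\frac1n\sum_i D_G(\mbf{q}_i,\mbf{p}_i)$ and $\frac1n\sum_i L_\rho(\mbf{s}_i;\mbf{q}_i)$, which is the claimed identity.

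There is no real obstacle here: the argument uses only linearity of averaging and the fact that the lemma is a deterministic identity. The one point needing care is the bookkeeping above. Each bet is executed against its own order-book snapshot $\rho_i$ at the moment of trading, so each loss term uses that snapshot. The stated left-hand side should be understood as realized profit after execution cost; if it is read literally as frictionless profit, the identity holds with the liquidity term removed.

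I will close with the consequence. Whenever the average Bregman divergence dominates the average liquidity loss, the realized average profit is at least the empirical score gap $\hat S_F-\hat S_M$. Hence an empirical accuracy edge converts into positive realized profit.
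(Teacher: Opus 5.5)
Your proposal is correct and follows the paper's own argument: the paper notes that \cref{thm:betting} holds pointwise (the case $\mbf{p}^*=\one_{y}$) and averages it over the events, which is your Steps 1--3. You are also right that the left-hand side has to be read as profit net of slippage, $\frac{1}{n}\sum_i[\mbf{s}_i\cdot(\one_{y_i}-\mbf{q}_i)-L_\rho(\mbf{s}_i;\mbf{q}_i)]$; read literally as frictionless profit, the identity holds only with the liquidity term removed.
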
 
Hence the sequential proper betting has positive profit if the  gap $ \hat{S}_F - \hat{S}_M $ of empirical scores and average divergence   $\frac{1}{n} \sum_{i=1}^n D_G(\mbf{q}_i, \mbf{p}_i)$ together offset the average liquidity loss $\frac{1}{n} \sum_{i=1}^n L_{\rho}(\mbf{s}_i;\mbf{q}_i)$. In \cref{sec:experiments}, we experimentally observe a few forecasting agents profiting due to divergence from the market despite below-market accuracy.

\begin{remark}[Bet sizes should vary across events]
A natural intuition is that a forecaster whose accuracy is measured with equal weight across events should place equal bet size on these events. \cref{cor:empirical} says otherwise: the correct bet sizes  are proportional to $\mbf{s}_i = \nab G(\mbf{p}_i) - \nab G(\mbf{q}_i)$, with larger bets on events where the forecaster's prediction deviates more from the market.
\end{remark}

\subsubsection{Extension to non-zero bid-ask spreads and correlated market outcomes.}\label{sec:bid-ask-spread} In real prediction markets, factors such as low liquidity or platform transaction fees imply a positive difference between the lowest price at which someone will sell and the highest price at which someone will buy. For the $k$-th outcome of an event, a real market has $q^+_k + q^-_k \ge 1$, where ${q}^+_k$ denotes the market price for buying the outcome (bid price) and ${q}^{-}_k$ denotes the market price for selling the outcome (ask price). \cref{thm:betting} is not directly applicable to this situation because when any $k$'th entry of the proper betting $\nabla G(\mbf{p}) - \nabla G(\mbf{q}) $ is negative, its execution requires buying \emph{NOT $k$} at price $1-q^+_k$, which is not possible here since the price of \emph{NOT $k$} is ${q}^{-}_k (> 1-q^+_k)$. 

The key idea to extend \cref{thm:betting} to such real-world settings with  $q^+_k + q^-_k > 1$   is to decompose each outcome $k\in [K]$ into two \emph{binary outcome} events, \emph{YES $k$} and \emph{NOT $k$}, that have prices equal to ${q}^+_k$ and ${q}^-_k$ respectively. It can be shown that a proper betting will buy   \emph{YES $k$} contracts  when $p_k >  {q}^+_k$, buy    \emph{NO $k$} contracts  when $p_k < 1-   {q}^-_k$, and not act when $1-   {q}^-_k \leq p_k \leq   {q}^+_k$. Moreover, this can be further extended to situations when   the  $K$ outcomes are not completely disjoint. For instance, a popular event on prediction markets is to predict bitcoin price where the outcome could be ``price at least \$7K'', ``price at least \$8K'', etc. In such cases, we treat each  single outcome as a separate binary market. The full details of the reductions can be found in \cref{a:spread}.

\subsubsection{Extension to sequential betting.} Events in prediction markets often take days or weeks to resolve, during which both the forecaster's prediction $\mbf{p}^t$ and the market price $\mbf{q}^t$ evolve in response to the revelation of new information. \cref{sec:portfolio} extends \cref{thm:betting} to this dynamic setting via two strategies: a \emph{fundamental}-driven strategy that executes the proper bet $\mbf{s}_G(\mbf{p}^t, \mbf{q}^t)$ at each round and holds positions to resolution, and a \emph{momentum}-driven strategy that maintains $\mbf{s}_G(\mbf{p}^t, \mbf{q}^t)$ as a rebalanced position. Both inherit the profit decomposition of \cref{thm:betting} summed across rounds, and the comparison reveals what each strategy needs for its performance guarantee: a per-round accuracy edge against the unobservable ground truth $\mbf{p}^*$ (fundamental), versus a per-round accuracy edge against the next-step market price $\mbf{q}^{t+1}$ (momentum) --- an operationally observable condition.

\section{Experiments}  %
\label{sec:experiments}

Testing betting strategies requires data that pairs forecasts with contemporaneous market prices and realized outcomes. Such data is hard to collect from human forecasters at scale, but recent AI forecasting benchmarks built on real-money prediction markets---most notably Prophet Arena \citep{yang2025llm}---provide a scalable alternative. We therefore use AI forecasts across thousands of archived markets to evaluate how different betting strategies translate predictive accuracy into realized returns. The evaluation has two stages: first, an offline study on the historical benchmark, where we simulate each market under zero price impact; second, a live deployment in real prediction markets, exposing our strategy to the full set of real-market frictions.

\subsection{How good are proper betting strategies?}
\label{sec:compare}

We use forecasts on 2,418 Kalshi markets \citep{kalshi_rulebook} collected through Prophet Arena that span a variety of domains such as sports, politics, economics, and crypto. Each market includes a forecast, contemporaneous market price, and realized outcome (YES/NO). For each model and betting strategy, we apply the strategy to historical forecast–market (bid and ask) price pairs, simulate the resulting trades, and compute the realized return after market resolution, allowing us to compare proper betting strategies directly against standard heuristic baselines.

Specifically, each betting strategy is to determine a set of weights --- how to allocate a fixed budget and on what direction --- over a collection of prediction markets with forecasts and market prices. We implement the three proper betting strategies according to \cref{tab:scoring-rules}.
We also consider a few baseline strategies derived from a few intuitive heuristics: (i) \textbf{Max-Margin}, which concentrates capital on the largest disagreements with the market, either across all markets or by selecting the single largest-margin bet per event group; (ii) \textbf{Inverse-Margin}, which down-weights large disagreements under the assumption that extreme deviations are less reliable; (iii) \textbf{Kelly-Alike}, which scales exposure according to Kelly-style edge-to-odds ratios; and (iv) \textbf{Kelly Criterion} \citep{kelly1956new}, which maximizes asymptotic log-wealth growth\footnote{We use leverage if the Kelly ratio $> 1$ as explained in Appendix~\ref{app:appCanonical}.}. Appendix~\ref{app:appCanonical} expands on the formal setup and implementation details of the betting strategies.

 \cref{tab:betting-strategy-std} reveals why proper betting strategies warrant study: heuristic allocations fail to yield consistent positive returns, and for stronger models, Brier-weighted allocation is the only strategy that reliably produces positive ROI. Inverse-Margin limits downside for weaker models by concentrating capital on near-market (i.e., minimal risk) bets, but as a consequence of its design, also limits opportunities for creating meaningful upside. The Kelly criterion performs particularly poorly because it is highly sensitive to miscalibrated probabilities. As a result, it becomes highly unstable under model error. 
More generally, these failures by heuristic betting strategies show that predictive accuracy alone is insufficient to create profit. How probabilities are translated into position sizes is a key determinant of returns, motivating the study of proper betting strategies. Appendix~\ref{app:appCanonical} reports the results for all evaluated models.

\begin{table*}[ht]
\centering
\footnotesize
\setlength{\tabcolsep}{4pt}
\renewcommand{\arraystretch}{1.08}

\caption{ROI (\%) of baseline betting strategies and the proposed Proper (Brier) strategy on the standardized 200-event shared subset. $\Delta S$ is the Brier score gap: negative values indicate worse performance than the market. Bold marks the best ROI per row.}
\label{tab:betting-strategy-std}

\begin{tabular}{@{}l r >{\columncolor{green!12}}c c c c c c@{}}
\toprule
& & \multicolumn{6}{c}{\textbf{ROI (\%)}} \\
\cmidrule(lr){3-8}
\textbf{Model} & \textbf{$\Delta S$} & \textbf{Proper} & \textbf{Max (Mkt)} & \textbf{Max (Grp)} & \textbf{Inv-Margin} & \textbf{Kelly-Alike} & \textbf{Kelly} \\
\midrule

Claude Opus 4.6 & $+0.0016$ & \textbf{+22.1} & +5.0 & -14.0 & -3.5 & +10.9 & -99.9 \\
Gemini 3 & $+0.0008$ & \textbf{+8.1} & +1.3 & -0.1 & +0.7 & +1.8 & -42.7 \\
GPT-5.2 (Base) & $-0.0347$ & \textbf{+4.5} & +2.4 & -14.9 & +1.9 & -0.1 & -99.9 \\
LLaMA 4 Maverick & $-0.0450$ & -13.7 & \textbf{-4.1} & -15.5 & -4.0 & -10.0 & -99.9 \\
Grok 4.1 Fast & $-0.0462$ & -11.6 & -7.8 & -26.4 & -7.0 & \textbf{-6.3} & -99.9 \\

\bottomrule
\end{tabular}
\end{table*}

\subsection{How to choose proper betting strategies for different forecaster personas?}

\paragraph{Score gap vs. divergence.}

\cref{tab:scoring-decomposition} empirically illustrates the decomposition in \cref{lem:decomposition}. We report the proper score gap and divergence terms summed across bets and normalized by total spend, which exactly matches with the per-bet decomposition in Equation~\ref{lem:decomposition} in ROI units. This highlights how proper betting strategies induce trade-offs between accuracy and divergence. Profitability depends not only on the score gap ($\Delta S$), but also on the divergence term ($D$), so models with worse accuracy can still achieve higher ROI if they generate sufficiently large divergence. For example, under the Log rule, GPT-5.2 (Base) achieves higher ROI than Gemini 3 despite a worse score gap, even though it is less accurate than both the market and Gemini 3. Moreover, a model’s score under a given scoring rule does not by itself determine which rule yields the best returns, since divergence is rule-dependent. For instance, for Llama 4 Maverick, Brier yields a worse score gap than Spherical ($-123.7$ vs.\ $-53.3$) but higher ROI ($-13.1$ vs.\ $-14.8$), driven by a substantially larger divergence term.
As such, identifying the most profitable models and betting strategies is non-trivial, as profitability depends not only on accuracy but also on how those strategies factor in disagreement with the market. Appendix~\ref{app:addModelExps} reports the decomposition results for all models.

\begin{table*}[ht]
  \centering
  \footnotesize
  \setlength{\tabcolsep}{4pt}
  \renewcommand{\arraystretch}{1.12}

  \caption{ROI (\%) decomposition of proper betting strategies.  $\Delta S$ is the aggregate score difference between the forecast and the market across all bets placed, and $D$ is total Bregman divergence. All quantities are summed across bets, normalized by total cost staked under each rule and multiplied by 100 so $\Delta S + D = \mathrm{ROI}$. Values are rounded to one decimal place.}
  \label{tab:scoring-decomposition}

\begin{tabular}{l *{3}{rrr}}
\toprule
& \multicolumn{3}{c}{\textbf{Brier}}
& \multicolumn{3}{c}{\textbf{Log}}
& \multicolumn{3}{c}{\textbf{Spherical}} \\
\cmidrule(lr){2-4}
\cmidrule(lr){5-7}
\cmidrule(lr){8-10}

\textbf{Model}
& $\Delta S$ & $D$ & \textbf{ROI}
& $\Delta S$ & $D$ & \textbf{ROI}
& $\Delta S$ & $D$ & \textbf{ROI} \\
\midrule

Claude Opus 4.6
& $+3.2$   & $+17.8$  & $+21.0$
& $+4.5$   & $+16.9$  & $+21.4$
& $-1.3$   & $+10.1$  & $+8.8$ \\

Gemini 3
& $+4.0$   & $+5.0$   & $+9.0$
& $+3.9$   & $+5.0$   & $+8.9$
& $-2.4$   & $+2.9$   & $+0.4$ \\

GPT-5.2 (Base)
& $-108.4$ & $+112.7$ & $+4.3$
& $-5.8$   & $+15.9$  & $+10.1$
& $-52.7$  & $+46.7$  & $-6.0$ \\

LLaMA 4 Maverick
& $-123.7$ & $+110.6$ & $-13.1$
& $-21.0$  & $+20.2$  & $-0.8$
& $-53.3$  & $+38.5$  & $-14.8$ \\

Grok 4.1 Fast
& $-137.4$ & $+128.3$ & $-9.1$
& $-20.3$  & $+23.8$  & $+3.5$
& $-89.3$  & $+74.3$  & $-15.0$ \\

\bottomrule
\end{tabular}
\end{table*}

\paragraph{Forecaster personas.}

As seen in \cref{tab:scoring-decomposition}, different scoring rules yield materially different returns. To identify when each rule is preferable, we construct a controlled experiment using four synthetic model ``personas'' that all achieve the same Brier score on the same set of real prediction markets, but differ in how their predictions deviate from market prices. We focus on the margin range $|p - q| \in [0, 0.5]$, as higher-margin forecasts are too sparse in empirical models to support reliable or broadly applicable analysis; although rare, such high-margin bets can occasionally be correct and highly profitable, which we examine further in \cref{app:profitabilityMargins}. We run the experiment in two regimes—one where the model beats the market and one where it loses—to examine how accuracy interacts with the choice of proper betting strategy and the resulting implications.

\paragraph{Persona design and motivation.}
We use 27,516 markets across 3,511 questions collected from the Prophet Arena pipeline between August 1, 2025 and April 15, 2026. Models differ in how aggressively they deviate from market prices, and in whether those deviations tend to pay off. To capture this variation, we construct personas defined by two features: (i) the proportion of small-margin bets (i.e., $\lvert p - q \rvert \leq 0.15$), and (ii) the accuracy drop-off across margins, measured by the slope of the directional win rate (the fraction of bets whose chosen side matches the realized outcome) as a function of margin size $\lvert p - q \rvert$. Both regimes are calibrated to a fixed $\pm 0.05$ relative quadratic scoring rule gap. \cref{app:syntheticMethod} outlines the details of the synthetic forecast generation process.

\begin{figure}
    \centering

    \begin{subfigure}[t]{0.42\linewidth}
        \centering
        \includegraphics[width=\linewidth]{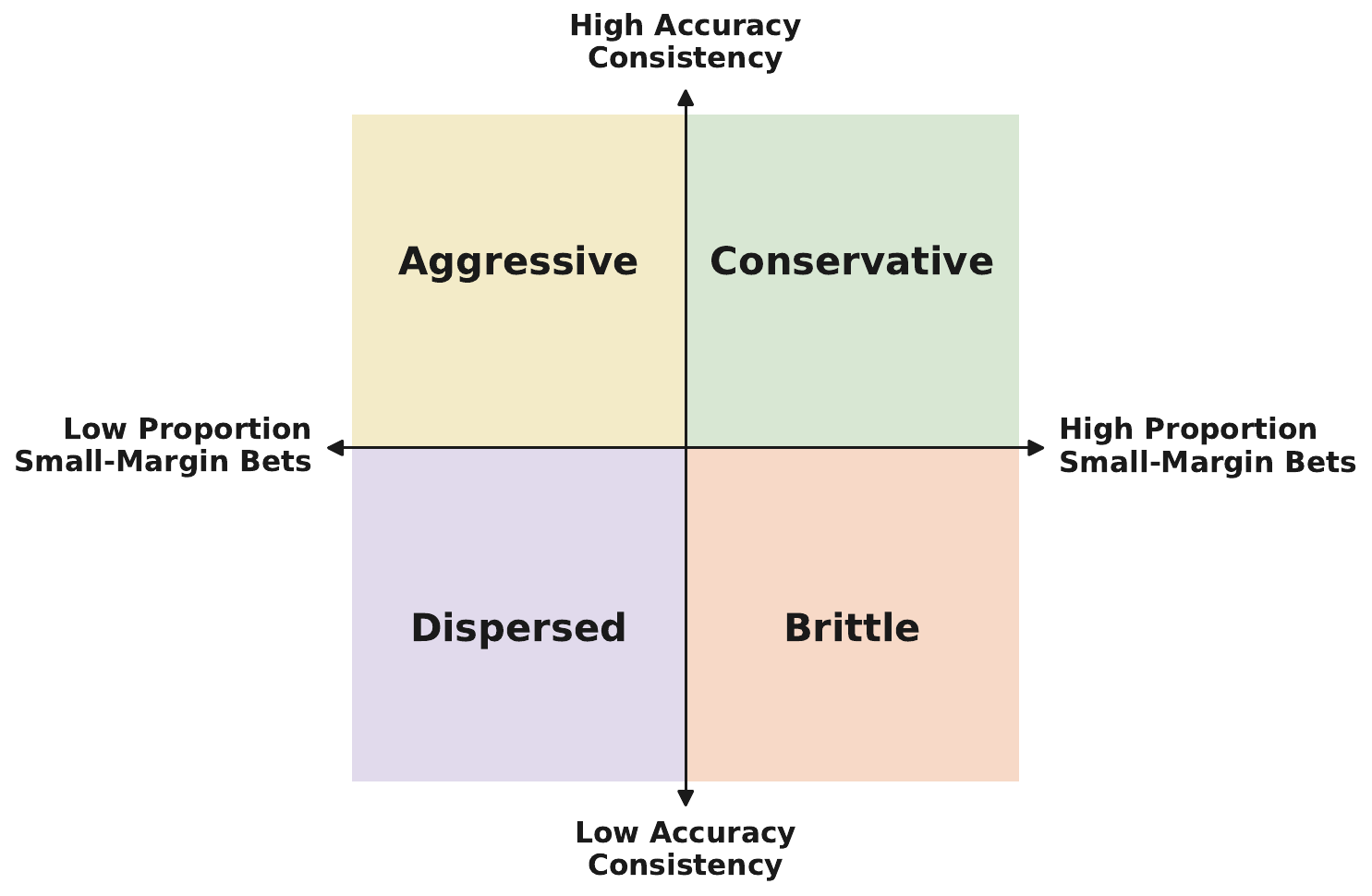}
        \caption{Taxonomy of LLM forecasting personas by proportion of forecasts at small margins and accuracy consistency across margins. }
        \label{fig:personaMap}
    \end{subfigure}
    \hfill
    \begin{subfigure}[t]{0.45\linewidth}
        \centering
        \includegraphics[width=\linewidth]{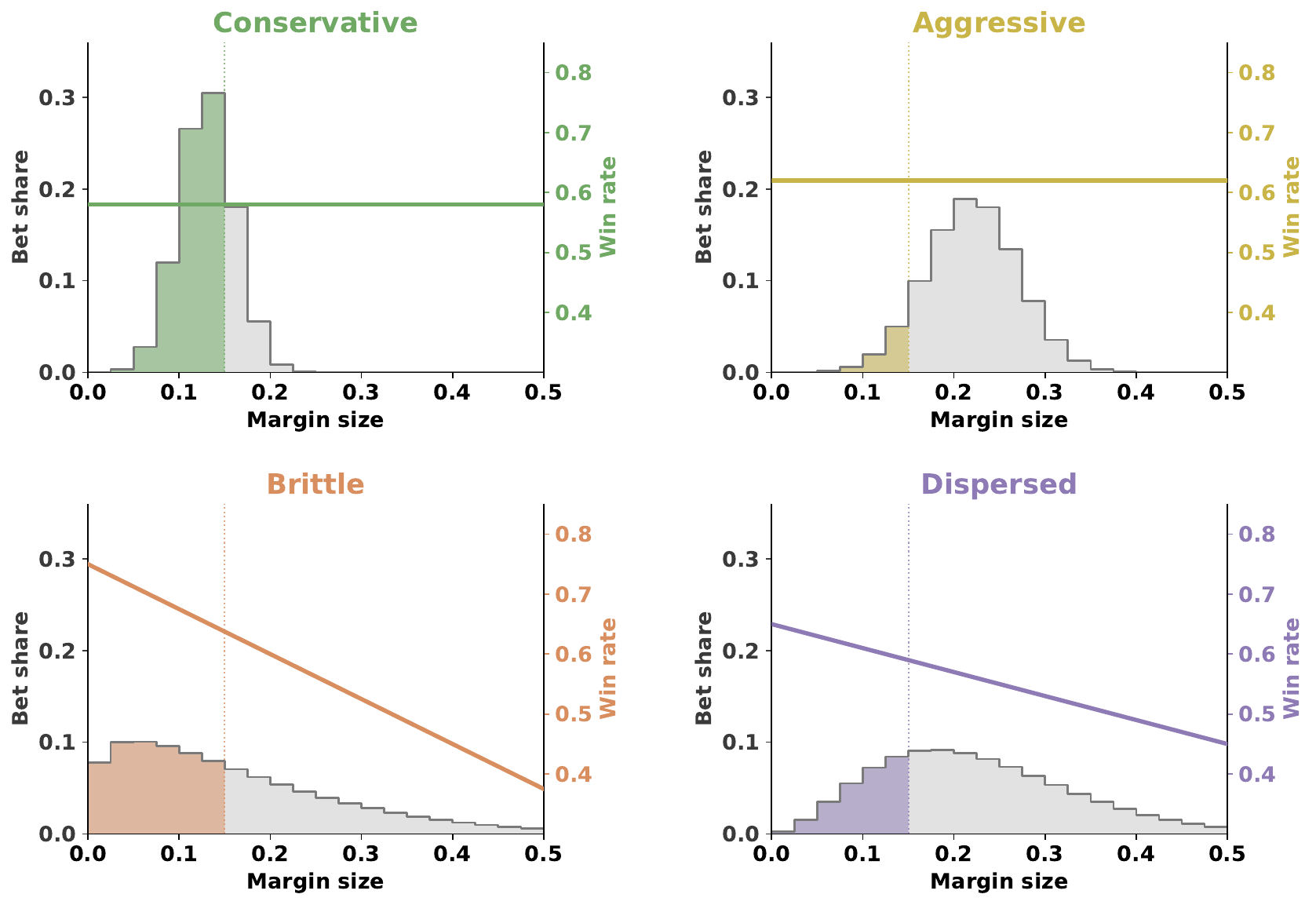}
        \caption{Each panel shows a persona’s margin distribution (grey, left axis) and win-rate at different margins (color, right axis). 
        }
        \label{fig:personaArchetypes}
    \end{subfigure}

    \caption{Schematics for synthetic forecasting personas and their prediction profiles.}
    \label{fig:personaCombined}
\end{figure}

We define four personas spanning representative combinations of these dimensions, as shown in \cref{fig:personaCombined}. \textbf{Conservative} models concentrate most forecasts in the small-margin region and exhibit flat accuracy across margins. \textbf{Aggressive} models place fewer small-margin bets, with accuracy remaining flat across margins. \textbf{Dispersed} models spread bets broadly across margin sizes and show a gradual decline in win-rate as margin increases. \textbf{Brittle} models have a significant proportion of small-margin forecasts but exhibit a sharply declining win-rate as margin increases. For further details, see Appendix~\ref{app:syntheticMethod}.

\begin{table*}[ht]
  \centering
  \caption{Simulated persona ROI (\%) under proper betting strategies. S-M Share denotes the share of forecasts with small margin: $|p - q| \leq 0.15$. Consistency is the OLS slope of per-bet directional win rate against margin size.}
  \label{tab:persona-regimes}

  \small
  \setlength{\tabcolsep}{4pt}
  \renewcommand{\arraystretch}{1.05}

  \begin{tabular}{@{}lcc|ccc|ccc@{}}
    \toprule
    & & & \multicolumn{3}{c}{\textit{Regime A (Outperforms)}}
    & \multicolumn{3}{c}{\textit{Regime B (Underperforms)}} \\
    \cmidrule(lr){4-6} \cmidrule(lr){7-9}

    \textbf{Persona} & \textbf{S-M Share}
 & \textbf{Consistency}
    & \textbf{Brier} & \textbf{Log} & \textbf{Sph.}
    & \textbf{Brier} & \textbf{Log} & \textbf{Sph.} \\
    \midrule

  Conservative   & 0.85 & $0.0$ & \textbf{+48.8} & +46.2 & +44.8 & -76.8 & \textbf{-71.5} & -82.7 \\           
  Aggressive     & 0.58 & $0.0$ & \textbf{+37.7} & +37.4 & +31.3 & -38.8 & \textbf{-30.1} & -46.9 \\           
  Dispersed      & 0.68 & $-0.2$ & \textbf{+44.3} & +42.5 & +38.6 & -30.4 & \textbf{-20.7} & -38.1 \\           
  Brittle        & 0.77 & $-0.6$ & \textbf{+45.4} & +43.8 & +41.0 & -60.2 & \textbf{-52.2} & -67.9 \\ 
  
    \bottomrule
  \end{tabular}
\end{table*}
We evaluate each persona under two regimes: Regime A, where the model outperforms the market and all strategies yield positive ROI, and Regime B, where it underperforms and all strategies incur losses. A symmetric $\pm 0.05$ quadratic score gap (market quadratic score = 0.839) isolates the effect of persona and strategy while holding average accuracy constant.

As shown in \cref{tab:persona-regimes}, Brier is optimal in Regime A for all personas, while Log is optimal in Regime B. In Regime A, the positive Brier gap implies that expected returns are positive across margin bins, so linear scaling in $|p - q|$ is optimal. In Regime B, losses are dominated by high-margin errors, making Log’s sublinear weighting preferable as it attenuates exposure to large, costly mistakes. This effect is most pronounced for Conservative personas, which must incur a high error rate on small-margin bets to sustain a negative Brier gap, leading to broad losses. Additional analysis and weighting visualizations are provided in \cref{app:weight-surfaces}.

\begin{figure}
\centering

\begin{subfigure}{0.45\textwidth}
\centering
\footnotesize
\setlength{\tabcolsep}{2pt}
\renewcommand{\arraystretch}{1.1}

\setlength{\tabcolsep}{4pt}
\renewcommand{\arraystretch}{1.1}

\setlength{\tabcolsep}{4pt}
\renewcommand{\arraystretch}{1.1}

\begin{tabular}{@{}l c c c c@{}}
\toprule
\textbf{Model}
& \textbf{S-M Share}
& $\boldsymbol{\Delta S}$
& \textbf{ROI}
& \\ 
\midrule

GPT-5.2 (B)   & 0.82 & $-0.044$ & $-4.4$ & (L) \\
Grok 4.1      & 0.80 & $-0.053$ & $-9.1$ & (L) \\

\rowcolor{orange!7}
\textbf{\textit{Brittle}}
& \textbf{0.80} & $\mathbf{-0.054}$ & $\mathbf{-8.6}$ & \textbf{(L)} \\

\midrule

LLaMA 4       & 0.67 & $-0.115$ & $-4.9$ & (L) \\
DeepSeek R1   & 0.59 & $-0.140$ & $-5.4$ & (L) \\

\rowcolor{teal!7}
\textbf{\textit{Dispersed}}
& \textbf{0.64} & \textbf{-0.120} & \textbf{-7.2} & \textbf{(L)} \\

\midrule

Gemini 3      & 0.94 & +0.002        & +14.2 & (B) \\
Claude Opus   & 0.89 & +0.000$^{\ast}$ & +17.5 & (B) \\

\rowcolor{purple!7}
\textbf{\textit{Conservative}}
& \textbf{0.91} & \textbf{+0.001} & \textbf{+15.8} & \textbf{(B)} \\

\bottomrule
\end{tabular}

\caption{Persona classification across models. ROI reported is under the best proper scoring rule: Log (L) or Brier (B). $^{\ast}$ denotes a positive $\Delta S$ rounded to $+0.000$.}
\label{tab:personaMargins}
\end{subfigure}
\hfill
\begin{subfigure}{0.49\textwidth}
\centering
\includegraphics[width=\linewidth]{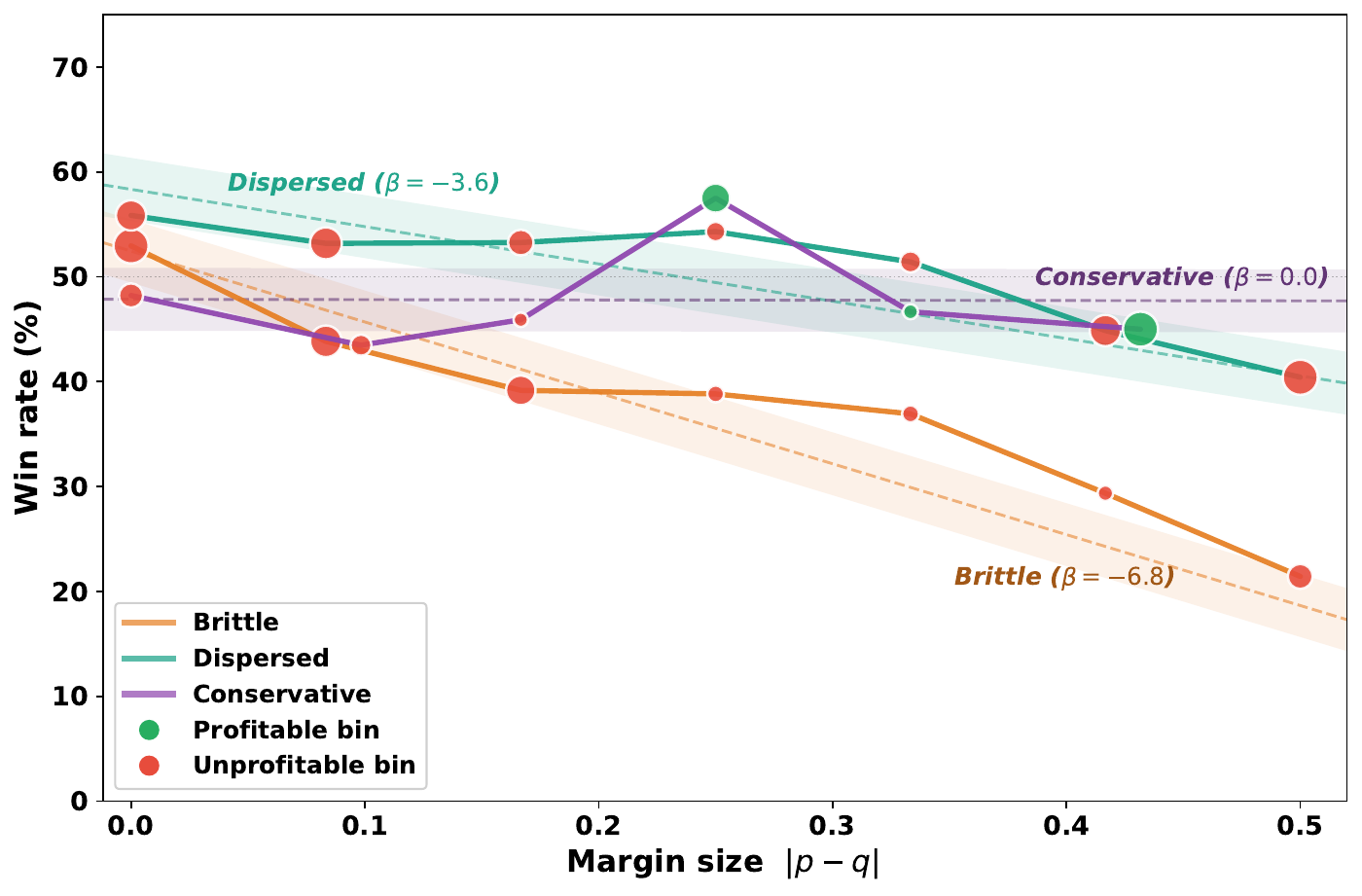}
\caption{Binned per-bet win rate vs.\ margin $|p-q|$. Each dot has area proportional to profit for their persona. Dashed lines are OLS fits where $\beta$ denotes the slope. Overlapping points are jittered horizontally for clarity.}
\label{fig:margin-winrate}
\end{subfigure}

\caption{Mapping real models to personas.}
\label{fig:real-profiles-and-winrate}

\end{figure}

\paragraph{Mapping real models to personas.}
\label{sec:personas}

To map synthetic personas to real model behavior, we evaluate each model along the two defining dimensions: small-margin forecasts and accuracy consistency. Because these patterns only emerge reliably at scale, we evaluate each model on its full event history rather than the standardized subset of events. \cref{fig:real-profiles-and-winrate} summarizes these results, with \cref{tab:personaMargins} showing models' proportion of small-margin forecasts and \cref{fig:margin-winrate} showing the win-rate profiles as a function of margin size. For classification of all evaluated models and additional analysis, see Appendix~\ref{app:mapPersonas}.

Along the margin dimension, models separate cleanly into the three persona groups: Brittle models concentrate most bets at small margins, Dispersed models spread forecasts more broadly across margins, and Conservative models place nearly all bets at small margins. \cref{fig:margin-winrate} further confirms the taxonomy along the accuracy-consistency dimension: Brittle models exhibit a steep decline in win rate as margins increase, Dispersed models decline more gradually, and Conservative models remain roughly flat across margins.
Taken together, these results show that personas provide a simple way to summarize how model behavior translates into returns. This abstraction lets us reason about profitability and select or adapt betting strategies based on these behavioral patterns.

\subsection{Live deployment under active portfolio management}

We further evaluate proper betting in a live portfolio setting using the momentum-driven strategy described in \cref{sec:extensions} (with details deferred to Appendix~\ref{sec:portfolio}). Unlike the previous backtests, live execution must contend with bid--ask spreads, limited liquidity, slippage, and discrete tick sizes. As a result, even accurate forecasts may not translate cleanly into realized returns, since trades must be executed at available prices rather than idealized market probabilities.

We deploy Gemini~3 using the Brier-derived betting strategy on Kalshi events for 26 days (April/May 2026) with an initial budget of \$200. Operating on a fixed two-hour cadence, the agent evaluates all open markets satisfying an ex-ante eligibility filter that excludes contracts with ambiguous resolution criteria and settings known to impair LLM forecasting performance (see \cref{sec:exclusionCriteria}).  Summary statistics and cumulative ROI from the live deployment are shown in \cref{fig:live-results}, whereas the trading history \href{https://prophets-profit.onrender.com/}{\textcolor{green!50!black}{\textbf{is documented here}}}, which logs performance of the trading agent over the trading period.
Over the trading period, the agent executed 236 orders across 129 markets, achieving an ROI of \textbf{+80.33\%} (decomposed into $\Delta S = +0.7205 $ and $D = +0.0828$ under the Brier decomposition) and a Sharpe ratio of \textbf{3.35}, indicating that nearly all gains arise from predictive accuracy rather than divergence, consistent with the Conservative persona identified in \cref{sec:personas}. Note that we report the Sharpe ratio only for the live deployment, where the daily returns form a continuous time series amenable to risk-adjusted analysis; the previous offline experiments resolve each event independently and most models post negative returns, so a Sharpe ratio there would be uninformative.
We document the full implementation, analysis of representative trades, and trading logs in \cref{app:liveTrading}. 

\begin{figure}[t]
\centering

\begin{subfigure}[c]{0.42\linewidth}
\centering
\small
\begin{tabular}{lr}
\toprule
Starting capital & \$200.00 \\
Trading days & 26 \\
Forecasts issued & 1,605 \\
Trade signals placed (filled) & 396 (236) \\
Markets traded & 129 \\
Ending capital & \$360.67 \\
ROI & $+80.33\%$ \\
Sharpe ratio ($\sqrt{365}$) & $3.35$ \\
Shares transacted & 2{,}657 \\
Exchange fees paid & \$26.31 \\
Win rate & $50.9\%$ \\
\bottomrule
\end{tabular}
\caption{Live trading statistics for the Gemini~3 agent on Kalshi.}
\label{tab:live-trading}
\end{subfigure}
\hfill
\begin{subfigure}[c]{0.54\linewidth}

\centering
\includegraphics[width=\linewidth]{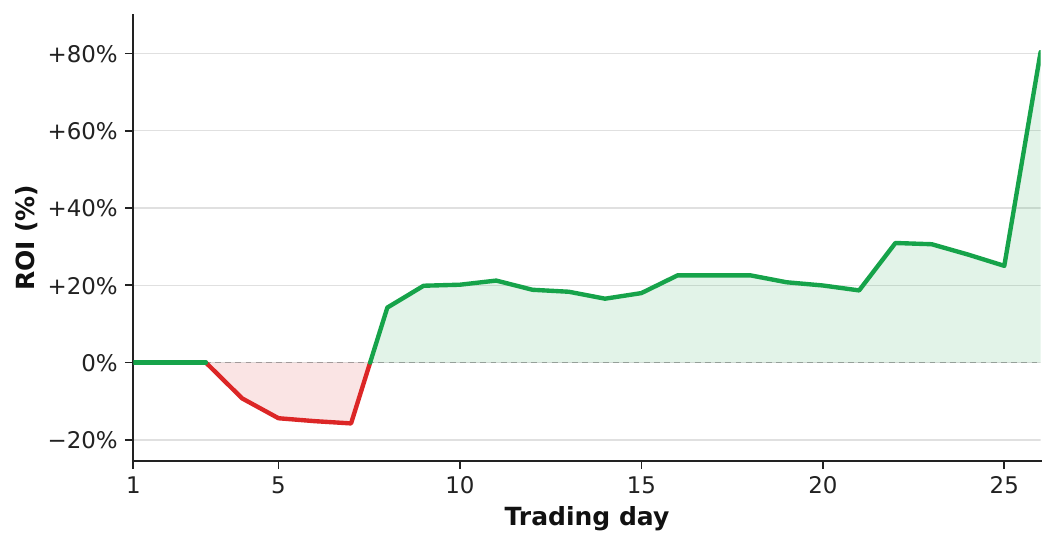}
\caption{ROI over trading days of the forecaster's live Kalshi trades over 26 trading days.}
\label{fig:live-pnl}
\end{subfigure}

\caption{Live deployment results for the Gemini~3 trading agent on Kalshi prediction markets.}
\label{fig:live-results}
\end{figure}

\section{Conclusion}

We investigate the puzzle that forecasters can outperform prediction markets in accuracy yet still lose money, by establishing a formal equivalence between predictive accuracy and  trading profitability. Key to our result is to identify a ``proper'' betting strategy for arbitrary prediction markets. Moreover, we demonstrate that this strategy  strictly generalizes the canonical betting strategy only applicable to the special AMM markets,   is essentially the only robustly profitable   strategy, and  has deep connection to proper scoring rules.     
Through this lens, it is unsurprising that proper betting emerges as the only strategy reliably converting accuracy into ROI across thousands of archived AI forecasts; what is more striking is both the magnitude and the practical viability in real markets --- a month-long live deployment achieved $+80.33\%$ ROI with a Sharpe ratio of $3.35$.

Several future directions remain open. First, our persona analysis in \cref{sec:personas} hints at the potential of \emph{data-adaptive scoring-rule selection}: identifying the most suitable rule for each forecaster. Second, the framework opens \emph{market-design} questions on which scoring rules and market structures best elicit accurate forecasting from participants at scale. Finally, our guarantees concern \emph{expected} profit; extending them to risk-adjusted criteria would narrow the gap between theory and live deployment.

\bibliographystyle{plainnat}
\bibliography{references}

\appendix
\newpage 
\section{Implementations of prediction markets} \label{a:impl}

A prediction market needs an underlying matching mechanism that determines how buy and sell orders interact and at what prices; the choice shapes the liquidity, transaction costs, and incentives forecasters face. Two design families dominate practice---automated market makers and central limit order books---and both are recovered as special cases of the price-impact-function abstraction in \cref{sec:prelim} via specific choices of $\rho$ and $L_\rho$.

\paragraph{Automated market makers.}
A classic implementation of a prediction market is an \emph{automated market maker} (AMM) governed by a cost function \citep{hanson2003combinatorial,hanson2007logarithmic, chen2007utility, abernethy2013efficient}. The AMM maintains a vector $\mbf{x}\in \bR^K$ recording the net number of shares of each outcome it has sold so far, and commits to a strictly convex, differentiable \emph{cost function} $C: \bR^K \to \bR$. A forecaster who executes the position vector $\mbf{s}\in\bR^K$ pays the total cost $C(\mbf{x}+\mbf{s}) - C(\mbf{x})$ of moving the share state to $\mbf{x}+\mbf{s}$, and receives the payout $\mbf{s}\cd \one_y$ for a realized outcome $y$, with expected profit
$\pi^{\text{AMM}}(\mbf{s}, \mbf{p}^*) := \mbf{s}\cd \mbf{p}^* - \bp{C(\mbf{x}+\mbf{s}) - C(\mbf{x})}.$
The cost function $C$ is chosen so that it always satisfies $\nab C(\mbf{x}) \in \Delta_{[K]}$ for all $\mbf{x}$. The marginal cost $\mbf{q} = \nab C(\mbf{x})$  is  the \emph{spot price}. In the price-impact notation of \cref{sec:prelim}, an AMM corresponds to $\rho(\mbf{s}) = \nab C(\mbf{x}+\mbf{s})$ and $L_\rho(\mbf{s}; \mbf{q}) = D_C(\mbf{x}+\mbf{s},\mbf{x})$.

The key structural property of this construction is that the convex conjugate of $C$ on the simplex, $G(\mbf{p}) = \sup_{\mbf{x}\in \bR^K}\bc{\mbf{p}\cd \mbf{x} - C(\mbf{x})}$, is the convex potential that induces a strictly proper scoring rule $S(\mbf{p}, y) = G(\mbf{p}) + \nab G(\mbf{p})\cd (\one_y - \mbf{p})$. 

\paragraph{Central limit order books.}
The largest prediction markets today (e.g.\ Kalshi, Polymarket) instead adopt a \emph{central limit order book} (CLOB), the matching mechanism standard in equity and futures exchanges: opposing limit orders are matched directly between participants, and liquidity is supplied dynamically by the resting orders rather than by a designated maker. CLOBs are favored over AMMs in practice because the cost-function approach is hard to scale and operate: liquidity must be subsidized up front through a fixed parameter, making thousands of simultaneous markets infeasible; the cost function cannot be updated post-deployment, whereas CLOB makers adjust quotes to news and adverse selection in real time; and derivatives regulations---such as the CFTC's, under which Kalshi operates as a designated contract market---are written around order-book mechanics rather than algorithmic counterparties. Polymarket itself migrated from an AMM to a CLOB after launch \citep{polymarket2023clob, ng2026price}, citing tighter spreads, native limit-order support, and scalability without per-market subsidy. In a CLOB, $\rho$ is the piecewise-constant function read directly off the order book, and $L_\rho$ captures the depth-induced slippage as the trader walks the book.

\section{Omitted  proofs} \label{a:theory}
 
\subsection{Proof of Theorem \ref{thm:betting-unique} } \label{subsec:proof-main-thm} 
For any strictly proper scoring rule $S$, we show that if a betting strategy $\mbf{s}(\mbf{p},\mbf{q})$ is intrinsically different from $\mbf{s}_G(\mbf{p},\mbf{q})$, then we can find some $\mbf{p} \not = \mbf{q} $ and $ \mbf{p}^*$ such that   $S(\mbf{p};\mbf{p^*}) \geq  S(\mbf{q};\mbf{p^*})$ but $\bm{s}(\mbf{p},\mbf{q})$ has non-positive expected profit under $\mbf{p}^*$. 

Given any two intrinsically different betting strategies $\mbf{s}(\mbf{p},\mbf{q})$ and $\mbf{s}_G(\mbf{p},\mbf{q})$, we first apply constant shifting and rescaling  to both strategies to normalize them so that $\mbf{s}(\mbf{p},\mbf{q}) \cdot \one = \mbf{s}_G(\mbf{p},\mbf{q}) \cdot \one = 0$ (via constant shifting) and  $||\mbf{s}(\mbf{p},\mbf{q})|| = ||\mbf{s}_G(\mbf{p},\mbf{q})|| = 1 $ (via rescaling). No transformations are needed if $\mbf{s}(\mbf{p},\mbf{q}) = \mbf{0}$.  Since $\mbf{s}(\mbf{p},\mbf{q})$ and $\mbf{s}_G(\mbf{p},\mbf{q})$ are intrinsically different, after normalizations there still exists an interior market price $\mbf{q} \in \te{int}(\Delta_{[K]})$, a sequence of  forecasts converging to $\mbf{q}$, i.e., $ \{ \mbf{p}_t \}_{t=1}^\infty \to \mbf{q}$,  yet a small constant $\epsilon$  such that $||\mbf{s}(\mbf{p}_t,\mbf{q}) -  \mbf{s}_G(\mbf{p}_t,\mbf{q})|| \geq \epsilon$  for every $t = 1, 2, \cdots$. The sequences $  \{ \mbf{s}(\mbf{p}_t, \mbf{q}) \}_{t=1}^\infty $ and $\{ \mbf{s}_G(\mbf{p}_t, \mbf{q}) \}_{t=1}^\infty$ are all bounded, so there exists subsequences that converge. Slightly overloading the notation, let $$ \qquad \{ \mbf{s}(\mbf{p}_t, \mbf{q}) \}_{t=1}^\infty  \to \mbf{\hat{s}} \quad \text{and} \quad \{ \mbf{s}_G(\mbf{p}_t, \mbf{q}) \}_{t=1}^\infty  \to \mbf{s}^*.$$ 

We have $|| \mbf{\hat{s}} - \mbf{s}^*||\geq \epsilon$.  
 Next we shall construct a $\mbf{p}^*_t$ for every $t$ such that $S(\mbf{p}_t, \mbf{p}^*_t) \geq  S(\mbf{q}, \mbf{p}^*_t)$ but the expected profit of $\mbf{s}(\mbf{p}_t, \mbf{q})$ is negative for large enough $t$, hence disproving the robust profitability of  $\mbf{s}(\mbf{p}_t, \mbf{q})$.  

We start by handling the corner case with $K = 2$. In this case, after normalization via rescaling and constant shifting, the betting strategy is either $(1/\sqrt{2}, - 1/\sqrt{2})^\top$ or $(-1/\sqrt{2}, 1/\sqrt{2})^\top$ or $\mbf{0}$, which essentially buys  $\sqrt{2}$ of \emph{YES}, or  buys $\sqrt{2}$ of \emph{NO}, or buys nothing. Since $||\mbf{s}(\mbf{p}_t,\mbf{q}) -  \mbf{s}_G(\mbf{p}_t,\mbf{q})|| \geq \epsilon$, they must correspond to these two of these three different strategies respectively. Since $\mbf{p}_t \not = \mbf{q}$, $\mbf{s}_G(\mbf{p}_t,\mbf{q})$ cannot be $\mbf{0}$.  If  $\mbf{s}_G(\mbf{p}_t,\mbf{q})$ buys \emph{YES}, then $\mbf{p}^*_t = (1,0)^\top $ induces higher score for $\mbf{p}_t$ than that of the market $\mbf{q}$ but induces non-positive profit for the other betting strategy $\mbf{s}(\mbf{p}_t,\mbf{q})$, as desired. The case when $\mbf{s}_G(\mbf{p}_t,\mbf{q})$ buys \emph{NO} is symmetric.

In the remainder of the proof,  we consider the non-trivial case $K>2$. We claim that there exists a constant  $\mbf{v}$ such that (1)  $\mbf{v} + \mbf{q} \in \Delta_k$ or equivalently $\mbf{v} \cdot \one = 0$, and (2) $\mbf{v} \cdot \mbf{s}^* > c > 0 > -c > \mbf{v} \cdot \mbf{\hat{s}} $ for some small positive constant $c$. To see this, let $\one^\perp = \{ \mbf{v} \in \mathbb{R}^K: \mbf{v} \cdot \one = 0 \} $ denote the subspace that is perpendicular to the all one vector $\one$. Since
 $\mbf{q} \in \te{int}(\Delta_{[K]}) $, $\Delta_{[K]}$ contains a full-dimensional unit ball within the subspace $\one^\perp$ that is centered around $\mbf{q}$. Due to normalization of betting strategy $\mbf{s}, \mbf{s}_G $, we know $\mbf{\bar{s}}, \mbf{\bar{s}}_G \in  \one^\perp$ and so is their difference. Since $|| \mbf{\hat{s}} - \mbf{s}^*||\geq \epsilon$,  there must exist some $\mbf{v}  \in  \one^\perp$ such that $\mbf{v} \cdot \mbf{s}^* > c > 0 > -c > \mbf{v} \cdot \mbf{\hat{s}} $, as desired. 

We construct $\mbf{p}^*_t  =  \mbf{q} + \mbf{v} \in \Delta_K$ (the same for every $t$), with the $\mbf{v}$ as chosen above. The expected profit of betting strategy  $\mbf{s}(\mbf{p}_t, \mbf{q})$ under this $\mbf{p}^*_t$ satisfies
 \begin{eqnarray*}
 (\mbf{p}^*_t - \mbf{q}) \cdot  \mbf{s}(\mbf{p}_t, \mbf{q}) &=&    \mbf{v}    \cdot  \mbf{s}(\mbf{p}_t, \mbf{q}) \\
 & = &    \mbf{v} \cdot  [\mbf{s}(\mbf{p}_t, \mbf{q}) -  \mbf{\hat{s}}] + \mbf{v} \cdot  \mbf{\hat{s}} \\ 
& < & 0 \quad  \text{ for any $t$ large enough,}  
\end{eqnarray*}  
because  the first term goes to $0$ due to $\mbf{s}(\mbf{p}_t, \mbf{q}) \to   \mbf{\hat{s}} $, whereas the second term  $ \mbf{v} \cdot   \mbf{\hat{s}} \,  (<-c)  $ is a negative constant. 

However, we show that $\mbf{p}_t$'s expected score under ground truth probability $\mbf{p}^*_t$  is positive. Let $G$ be the associated potential function of strictly proper scoring rule $S$, i.e.,  $S(\mbf{p}, y) = G(\mbf{p}) + \nab G(\mbf{p}) \cd (\one_y - \mbf{p}).$ Then we have  
\begin{eqnarray*}
  & &   S(\mbf{p}_t, \mbf{p}^*_t) - S(\mbf{q}, \mbf{p}^*_t) \\
    &=& G(\mbf{p}_t) + \nab G(\mbf{p}_t) \cd (\mbf{p}^*_t - \mbf{p}_t) - G(\mbf{q}) -  \nab G(\mbf{q}) \cd (\mbf{p}^*_t - \mbf{q}) \\
     &=& G(\mbf{p}_t)  - G(\mbf{q}) -  \nab G(\mbf{q}) \cd (\mbf{p}^*_t -  \mbf{q}) + \nab G(\mbf{p}_t) \cd (\mbf{p}^*_t - \mbf{p}_t)  \\ 
       &=& G(\mbf{p}_t)  - G(\mbf{q}) -  \nab G(\mbf{q}) \cd (\mbf{p}_t - \mbf{q}) -  \nab G(\mbf{q}) \cd (\mbf{p}^*_t - \mbf{p}_t) +  \nab G(\mbf{p}_t) \cd (\mbf{p}^*_t - \mbf{p}_t)  \\ 
        &=& D_G(\mbf{p}_t, \mbf{q}) +   [\nab G(\mbf{p}_t) -  \nab G(\mbf{q})]  \cd (\mbf{p}^*_t - \mbf{p}_t)  \\
        & = & D_G(\mbf{p}_t, \mbf{q}) + \mbf{s}_G(\mbf{p}_t, \mbf{q}) \cdot (\mbf{q} + \mbf{v} - \mbf{p}_t )  \\ 
        & = & D_G(\mbf{p}_t, \mbf{q}) + \mbf{s}_G(\mbf{p}_t, \mbf{q}) \cdot (\mbf{q}   - \mbf{p}_t )  +  (\mbf{s}_G(\mbf{p}_t, \mbf{q}) - \mbf{s}^*) \cdot   \mbf{v} +   \mbf{s}^* \cdot   \mbf{v}  \\ 
       & > & 0 \quad  \text{ for any $t$ large enough,}   
\end{eqnarray*}  
because the  Bregman divergence $D_G(\mbf{p}_t, \mbf{q}) \geq  0$ for the convex $G$, the last term $\mbf{s}^* \cdot   \mbf{v}   $ is larger than the positive constant $c$ due to our choice of $\mbf{v} $,  whereas the two middle terms both go to $0$ since $\mbf{p}_t \to \mbf{q}$,  $\mbf{s}_G(\mbf{p}_t, \mbf{q})  \to  \mbf{s}^* $ whereas $\mbf{s}_G(\mbf{p}_t, \mbf{q}),\mbf{v} $ are all bounded vectors.  

This concludes the proof of the second part. That is,  we have found $\mbf{p}_t, \mbf{p}^*_t, \mbf{q}$ for large enough $t$ such that $S(\mbf{p}_t, \mbf{p}^*_t) - S(\mbf{q}, \mbf{p}^*_t) > 0$ but the expected profit of the betting strategy  $\mbf{s}(\mbf{p}_t, \mbf{q})$ under this $\mbf{p}^*_t$, i.e., $
 (\mbf{p}^*_t - \mbf{q}) \cdot  \mbf{s}(\mbf{p}_t, \mbf{q})$, is strictly negative.

\subsection{Proof of \cref{thm:proper} } \label{a:characterization}
The ``only if'' direction is precisely   \cref{thm:betting} since proper betting guarantees robust profitability. We thus only need to prove the ``if'' direction below. The challenge here is to show that if $G$ is not strictly convex, any betting strategy -- not just proper betting -- cannot be robustly profitable. That is, for any not strictly convex $G$ and any betting strategy $\mbf{s} (\mbf{p} ,   \mbf{q}) $, there exists  $\bm{p}^* $ and $\mbf{p} \not =  \mbf{q}$ that satisfy  $S(\mbf{p};\mbf{p^*}) \geq S(\mbf{q};\mbf{p^*})$ \footnote{Scoring rule $S$ is defined as   $S(\mbf{p}, y) = G(\mbf{p}) + \nab G(\mbf{p}) \cd (\one_y - \mbf{p})$.  } but yield non-positive profit under $\mbf{s}$.  

Since $G$ is not strictly convex, there must exist some $\mbf{p}$ and $\mbf{q}$ such that $\mbf{q}\in \text{int}(\Delta_{[K]})$ and
    \begin{equation} \label{eq:bregman-negative-copy}
        D_G(\mbf{q}, \mbf{p}) = G(\mbf{q}) - G(\mbf{p}) - \nab G(\mbf{p}) \cd (\mbf{q} - \mbf{p}) = -\kappa < 0.
    \end{equation}
Given the $\mbf{p}, \mbf{q}$   pair satisfying the above condition,  we now   construct a corresponding $\mbf{p}^*$. Let $\mbf{s}$ be any betting strategy, $\mbf{\hat{s}} =  \frac{ \mbf{s} (\mbf{p} ,   \mbf{q}) }{ || \mbf{s} (\mbf{p} ,   \mbf{q})|| } $ denote the direction of the betting strategy at the above $\mbf{p} ,   \mbf{q}$, and  $L =   || \mbf{s}_G (\mbf{p} ,   \mbf{q})|| $ denote 2-norm of the proper betting strategy.   We construct $\mbf{p}^* = \mbf{q} - \frac{\kappa}{2L} \mbf{\hat{s}} $. 

Next we show that the constructed $\mbf{p}, \mbf{q}, \mbf{p}^*$ have negative profit but satisfy $S(\mbf{p};\mbf{p^*}) \geq S(\mbf{q};\mbf{p^*})$. Notably, this is an interesting situation where the market price $\mbf{q}$ may be much closer to the ground truth $\mbf{p^*}$ than the forecast $\mbf{p}$, but has worse score than $\mbf{p}$. This is intrinsically due to the non-convexity of $G$. 

The difference of the expected score is analyzed as follows:
\begin{eqnarray*}
    S(\mbf{p}, \mbf{p}^* ) -   S(\mbf{q}, \mbf{p}^* )  &=& G(\mbf{p}) + \nab G(\mbf{p}) \cd ( \mbf{p}^* - \mbf{p})  - G(\mbf{q}) - \nab G(\mbf{q}) \cd ( \mbf{p}^* - \mbf{q}) \\ 
    & = & G(\mbf{p})  - G(\mbf{q}) + \nab G(\mbf{p}) \cd ( \mbf{q}  - \mbf{p})  - [\nab G(\mbf{q}) - \nab G(\mbf{p})   ]  \cd ( \mbf{p}^* - \mbf{q}) \\ 
    & = & - D_G(\mbf{q}, \mbf{p}) + \mbf{s}_G (\mbf{p} ,   \mbf{q}) \cd ( \mbf{p}^* - \mbf{q}) \\
    & \geq & \kappa - L \cdot \kappa/(2L) > 0
\end{eqnarray*}

The profit of $\mbf{s} (\mbf{p} ,   \mbf{q})$ is analyzed as follows:
\begin{eqnarray*}
      ( \mbf{p}^* - \mbf{q}) \cd \mbf{s}  (\mbf{p} ,   \mbf{q})   = - \frac{\kappa}{2L} \mbf{\hat{s}} \cdot   \mbf{s}  (\mbf{p} ,   \mbf{q}) = - \frac{\kappa}{2L}  || \mbf{s}  (\mbf{p} ,   \mbf{q}) ||  
     < 0
\end{eqnarray*}

 \subsection{Proof of \cref{prop:amm-compare}}\label{append:proof-amm-compare}
 
We only need to show   $D_G(\mbf{q}, \mbf{p})  =  L_\rho (\mbf{s}^*;\mbf{q}) $  in any AMM governed by a strictly convex cost function $C(\mbf{x})$, whose convex conjugate is   $G(\mbf{p})$. The rest of the statement then follows from   \cref{thm:betting}. 

Let $\mbf{s}^* = \nabla G(\mbf{p}) - \nabla G(\mbf{q})$ denote the proper betting. We compute
\begin{eqnarray*}
    L_\rho (\mbf{s}^*;\mbf{q})  &=&  C(\mbf{s}^* + \mbf{x}) -  C(\mbf{x}) - \nabla C (\mbf{x}) \cdot \mbf{s}^* \\ 
    & = & D_C(\mbf{s}^* + \mbf{x},  \mbf{x} ) \\ 
    & = & D_G ( (\nabla C)^{-1}(\mbf{s}^* + \mbf{x}), (\nabla C)^{-1}(\mbf{x})) \\
    & = & D_G (\mbf{p}, \mbf{q}) 
\end{eqnarray*}
where the third equation is due to the fact that if $C,G$ are convex conjugates, then $D_C( \mbf{x}', \mbf{x}) = D_G( (\nabla C)^{-1}(\mbf{x}'), (\nabla C)^{-1}(\mbf{x}))$.

\section{Extensions of \cref{thm:betting}} \label{a:extensions}

\subsection{Proper betting under nonzero bid-ask spread} \label{a:spread}

In real prediction markets, low liquidity and platform fees create a positive gap between the prices at which a contract can be bought and sold. We extend \cref{thm:betting} to this setting via the following bid-ask convention. For each outcome $k\in [K]$, let $q^+_k$ denote the ask price at which a \emph{YES} contract on outcome $k$ (paying $1$ if $y=k$, otherwise $0$) can be bought, and $q^-_k$ the ask price at which the corresponding \emph{NO} contract (paying $1$ if $y\neq k$, otherwise $0$) can be bought. Absence of riskless arbitrage on outcome $k$ requires $q^+_k + q^-_k \ge 1$, with equality recovering the spread-free setting of \cref{sec:prelim}; in practice the inequality is strict.

\paragraph{Bid-ask-aware proper bet.}
Treat each outcome $k$ as an independent binary event $y_k := \one[y=k] \in \{0,1\}$ and extend the scoring rule to vectors $\mbf{p}\in[0,1]^K$ coordinate-wise:
$$S(\mbf{p}, y) \;:=\; \sum_{k=1}^K S(p_k, y_k),$$
where $S(p_k, y_k)$ is the score for binary outcome of \emph{YES/NO} $k$, in reduced form with 1-D parameter $p_k$. Let $G: [0,1] \to \mathbb{R}$ be its associated convex potential function.  
The Bregman divergence and proper-bet construction extend coordinate-wise as well. Given a model prediction $\mbf{p}\in[0,1]^K$ and prices $(\mbf{q}^+, \mbf{q}^-)$, define the bid-ask-aware proper bet $\mbf{s}\in\bR^K$ by
$$s_k \;:=\; \begin{cases}
\nab G(p_k) - \nab G(q^+_k) & \text{if } p_k > q^+_k \quad \text{(buy YES on $k$ at $q^+_k$)}, \\
\nab G(p_k) - \nab G(1 - q^-_k) & \text{if } p_k < 1 - q^-_k \quad \text{(buy NO on $k$ at $q^-_k$)}, \\
0 & \text{if } 1 - q^-_k \le p_k \le q^+_k.
\end{cases}$$
To see this is a valid betting strategy under non-zero bid/ask spread, we observe that $G(p_k) - \nab G(q^+_k) \geq 0 $ when $p_k > q^+_k$ due to convexity of $G$ and  $G(p_k) - \nab G(1 - q^-_k)  > 0$ for similar reasons. Hence, $s_k$ never needs to  buy  the other side (which may have inconsistent price due to non-zero bid/ask spread).   
The middle case is the spread-induced no-bet zone: the model's edge is too small to overcome the spread on outcome $k$ (the interval is non-empty since $q^+_k + q^-_k \ge 1$). Setting $\tilde{q}_k := q^+_k$, $1 - q^-_k$, or $p_k$ in the three cases respectively, we always have $s_k = \nab G(p_k) - \nab G(\tilde{q}_k)$, and the realized profit is $$\mbf{s}\cd (\one_y - \tilde{\mbf{q}}) = \sum_{k=1}^K s_k(y_k - \tilde{q}_k).$$

\begin{corollary}[Profitability under bid-ask spread (omitting liquidity loss)]\label{cor:spread}
Let $\mbf{s},  \tilde{\mbf{q}}$ be as defined above and $  S(\tilde{\mbf{q}}, y) = \sum_{k=1}^K S(\tilde{q}_k, y_k)$  be the bid-ask-aware market score.  Then the bid-ask-aware proper bet has expected profit admits the following decomposition
\begin{equation}\label{eq:spread-decomp}
    \bE_{y\sim\mbf{p}^*}\bb{\mbf{s}\cd (\one_y - \tilde{\mbf{q}})} \;=\; \ub{\bE_{y\sim\mbf{p}^*}\bb{S(\mbf{p},y) - S(\tilde{\mbf{q}}, y) }}_{\text{score gap}} \;+\; \ub{\sum_{k=1}^K D_G(\tilde{q}_k, p_k)}_{\text{Bregman bonus}}   
\end{equation}
\end{corollary}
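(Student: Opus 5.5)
The plan is to reduce \cref{cor:spread} to $K$ independent one-dimensional instances of the profit decomposition \cref{lem:decomposition}, one for each binary event \emph{YES/NO $k$}, and then sum. The key observation is already recorded just before the statement. In every one of the three cases we may write $s_k = \nab G(p_k) - \nab G(\tilde{q}_k)$ with the effective price $\tilde{q}_k \in \{q^+_k,\, 1-q^-_k,\, p_k\}$. So on coordinate $k$ the bid-ask-aware bet is exactly the proper bet $\mbf{s}_G(p_k,\tilde{q}_k)$ of \cref{def:proper-bet} for a binary market at spot price $\tilde{q}_k$.

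\textbf{Step 1 (per-coordinate identity).} Fix $k$ and a realized binary outcome $y_k\in\{0,1\}$. The one-dimensional scoring rule $S(p_k,y_k)=G(p_k)+\nab G(p_k)(y_k-p_k)$ has the same form as in \cref{prop:scoring-rule-characterization}. Running the algebra of \cref{lem:decomposition} in one dimension, with $\one_y$ replaced by $y_k$ and $\mbf{q}$ replaced by $\tilde{q}_k$, should give
\[
s_k\,(y_k-\tilde{q}_k) \;=\; S(p_k,y_k)-S(\tilde{q}_k,y_k)+D_G(\tilde{q}_k,p_k).
\]
\cref{lem:decomposition} never uses that $\mbf{q}$ lies on the simplex; it only uses the definitions of $S$, $D_G$ and $\mbf{s}_G$. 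So the identity holds verbatim, including for the prices $q^+_k$ and $1-q^-_k$, which need not be complementary across outcomes. In the no-bet case, $\tilde{q}_k=p_k$, and all three terms vanish, so the identity is trivially $0=0$.

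\textbf{Step 2 (the payoff accounting).} Next I would check that $s_k(y_k-\tilde{q}_k)$ really is the realized profit of the executed trade. In the YES case this is immediate: we buy $s_k\ge 0$ YES contracts at $q^+_k$, costing $s_kq^+_k$ and paying $s_ky_k$. In the NO case $s_k<0$, by monotonicity of $\nab G$ since $p_k<1-q^-_k$, and we hold $-s_k$ NO contracts bought at $q^-_k$. Their profit is $-s_k\big((1-y_k)-q^-_k\big)=s_k\big(y_k-(1-q^-_k)\big)$, which matches $s_k(y_k-\tilde{q}_k)$ with $\tilde{q}_k=1-q^-_k$. This bookkeeping is the only place where the spread actually enters. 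It is also where I expect the main care to be needed: sign conventions, and confirming that the NO contract is priced by its own ask $q^-_k$ rather than by $1-q^+_k$.

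\textbf{Step 3 (sum and take expectations).} Summing Step 1 over $k$ gives $\mbf{s}\cd(\one_y-\tilde{\mbf{q}}) = S(\mbf{p},y)-S(\tilde{\mbf{q}},y)+\sum_k D_G(\tilde{q}_k,p_k)$ pointwise in $y$, using the coordinate-wise definition of $S$. Note that $y_k=\one[y=k]$, so $\one_y$ has coordinates $y_k$. The Bregman term does not depend on $y$, so taking $\bE_{y\sim\mbf{p}^*}$ yields \eqref{eq:spread-decomp}. Liquidity loss is omitted by hypothesis; if desired, it can be subtracted exactly as in \cref{thm:betting}.
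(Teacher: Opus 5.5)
Your proposal is correct and is essentially the paper's proof. The paper likewise writes $s_k = \nabla G(p_k) - \nabla G(\tilde{q}_k)$, applies the one-dimensional version of the profit decomposition lemma on each coordinate to get $s_k(y_k-\tilde{q}_k) = S(p_k,y_k)-S(\tilde{q}_k,y_k)+D_G(\tilde{q}_k,p_k)$, then sums over $k$ and takes the expectation under $y\sim\mathbf{p}^*$. Your Step 2, checking that $s_k(y_k-\tilde{q}_k)$ is the realized profit (with $s_k<0$ in the NO case), is bookkeeping the paper asserts in its setup rather than in the proof, and your sign there is the correct one.
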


\begin{proof}
For each coordinate $k$, $s_k = \nab G(p_k) - \nab G(\tilde{q}_k)$ is the binary proper bet with respect to the reference $\tilde{q}_k$. Apply the binary version of \cref{lem:decomposition}:
$$s_k(y_k - \tilde{q}_k) \;=\; \bp{\nab G(p_k) - \nab G(\tilde{q}_k)}(y_k - \tilde{q}_k) \;=\; \bb{S(p_k, y_k) - S(\tilde{q}_k, y_k)} + D_G(\tilde{q}_k, p_k).$$
Summing over $k\in [K]$ and taking expectation under $y\sim \mbf{p}^*$, minus the liquidity loss, yields \cref{eq:spread-decomp}. %
\end{proof}

\begin{remark}[Correlated outcomes.]
 \cref{cor:spread} holds even for outcomes $\{ 1, \cdots, K \}$ that need not to be disjoint but can be arbitrarily correlated. In this case, we simply treat each outcome $k$ as a separate (though possibly correlated) binary event. 
\end{remark}  

\paragraph{Cost of spread.}
The reference $\tilde{\mbf{q}}$ depends on which side of the spread each coordinate lands on, so an accuracy edge against $\tilde{\mbf{q}}$ is harder to attain than one against any spread-free price $\bar{\mbf{q}}$ with $\bar{q}_k \in [1-q^-_k, q^+_k]$. Specifically, the score gap against $\tilde{\mbf{q}}$ decomposes as
$$S(\mbf{p}, y) - S(\tilde{\mbf{q}}, y) \;=\; \bb{S(\mbf{p}, y) - S(\bar{\mbf{q}}, y)} \;+\; \ub{\bb{S(\bar{\mbf{q}}, y) - S(\tilde{\mbf{q}}, y)}}_{\text{cost of spread} \;\le\; 0},$$
where the spread cost vanishes when $q^+_k + q^-_k = 1$ (zero spread). The bid-ask-aware result thus interpolates between the spread-free guarantee of \cref{thm:betting} and a strictly weaker condition that absorbs the per-trade spread cost.

\subsection{Proper betting for long time horizon}
\label{sec:portfolio}

For events that take days or weeks to resolve, both forecaster and market predictions evolve over time. A forecaster can make different prediction $\mbf{p}^t$ at different time $t = 1, \cdots, T$ and makes trades based on $\mbf{p}^t$ as well as then market price $\mbf{q}^t$.   We extend \cref{thm:betting} to two natural multi-period strategies, each compounding the per-round proper bet but differing in what they hold and what accuracy edge they require. In both corollaries below, we omit liquidity loss for simplicity.  

\begin{corollary}[Fundamental-driven strategy]
\label{cor:fundamental}
The \emph{fundamental} strategy executes the proper bet $\mbf{s}^t := \nab G(\mbf{p}^t) - \nab G(\mbf{q}^t)$ at each time $t$ and holds all positions to resolution. 
Then the cumulative expected profit at resolution decomposes as
\begin{equation}
\label{eq:fundamental}
    \sum_{t=1}^T \mbf{s}^t \cd (\mbf{p}^* - \mbf{q}^t) \;=\; \sum_{t=1}^T \bb{S(\mbf{p}^t;\mbf{p}^*) - S(\mbf{q}^t;\mbf{p}^*)} \;+\; \sum_{t=1}^T D_G(\mbf{q}^t, \mbf{p}^t)  
\end{equation}
\end{corollary}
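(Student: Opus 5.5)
The plan is to apply the profit decomposition of \cref{lem:decomposition} round by round and then sum over rounds. The statement's left-hand side is a finite sum of frictionless expected profits. Each summand $\mbf{s}^t\cd(\mbf{p}^*-\mbf{q}^t)$ is exactly the idealized profit of a single proper bet placed at market price $\mbf{q}^t$ and held until the event resolves with outcome $y\sim\mbf{p}^*$.

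Step one: fix a round $t$ and a realized outcome $y$. Apply \cref{lem:decomposition} with forecast $\mbf{p}^t$ and market price $\mbf{q}^t$. This gives the pointwise identity
$$\mbf{s}^t\cd(\one_y-\mbf{q}^t) \;=\; S(\mbf{p}^t,y)-S(\mbf{q}^t,y)+D_G(\mbf{q}^t,\mbf{p}^t).$$
The lemma is a purely algebraic identity in $(\mbf{p},\mbf{q},y)$. It needs no relationship between rounds and no properness of $S$, so it holds for each $t$ separately.

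Step two: take the expectation over $y\sim\mbf{p}^*$. By linearity, $\bE[\one_y]=\mbf{p}^*$ and $\bE[S(\cdot,y)]=S(\cdot;\mbf{p}^*)$.

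There is one point I would flag explicitly. The sequences $\mbf{p}^t,\mbf{q}^t$ may depend on information revealed over time, so they could be random. The corollary treats them as given, with the final outcome drawn from $\mbf{p}^*$. I would read $\mbf{p}^t,\mbf{q}^t$ as fixed (or conditioned on) when the expectation over $y$ is taken, so that the identity above is taken in expectation term by term. The cleaner route is to sum the pointwise identity over $t$ first, which holds for every realized $y$, and only then take expectations. This avoids any issue about how the trajectory is generated, because all bets are held to the same resolution $y$.

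Step three: sum over $t=1,\dots,T$ to obtain \eqref{eq:fundamental}. Liquidity loss is omitted by hypothesis. If it were retained, each round would contribute an extra $-L_\rho(\mbf{s}^t;\mbf{q}^t)$, as in \cref{thm:betting}.

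The only real obstacle is the interpretive one in step two: justifying that the expectation applies to each round's holdings against the single terminal outcome. Summing pointwise before averaging sidesteps it. Everything else is a direct invocation of \cref{lem:decomposition}.
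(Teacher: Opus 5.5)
Your proposal is correct and follows the paper's proof: apply the profit decomposition lemma at each round with the realized outcome $y$, sum over $t$, and take expectation under $y\sim\mathbf{p}^*$. One caveat on the point you flagged. By linearity, the order of summing and averaging is immaterial, and the pathwise identity holds for any trajectory, but recovering the stated form with $\mathbf{s}^t\cdot(\mathbf{p}^*-\mathbf{q}^t)$ and $S(\mathbf{p}^t;\mathbf{p}^*)$ still requires $\mathbf{p}^t,\mathbf{q}^t$ to be fixed (or independent of $y$), which is the reading you adopt and the one the paper uses.
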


\begin{corollary}[Momentum-driven strategy]
\label{cor:momentum}
The \emph{momentum} strategy maintains the proper position $\mbf{x}^t := \nab G(\mbf{p}^t) - \nab G(\mbf{q}^t)$ at each time $t$, rebalancing after every market update; the marginal trade at time $t$ is $\mbf{s}^t := \mbf{x}^t - \mbf{x}^{t-1}$ with $\mbf{x}^0 := \mbf{0}$. 
Then the cumulative mark-to-market return decomposes as
\begin{equation}
\label{eq:momentum}
    \sum_{t=1}^T \mbf{x}^t \cd (\mbf{q}^{t+1} - \mbf{q}^t) \;=\; \sum_{t=1}^T \bb{S(\mbf{p}^t;\mbf{q}^{t+1}) - S(\mbf{q}^t;\mbf{q}^{t+1})} \;+\; \sum_{t=1}^T D_G(\mbf{q}^t, \mbf{p}^t)    
\end{equation}
Moreover, the accumulated profit of executing trades $\mbf{s}^t$ at prices $\mbf{q}^t$ and settling at $\mbf{q}^{T+1}$ can be equivalently expressed in the following trade form $$\sum_{t=1}^T \mbf{x}^t \cd (\mbf{q}^{t+1} - \mbf{q}^t) = \sum_{t=1}^T \mbf{s}^t \cd (\mbf{q}^{T+1} - \mbf{q}^t),$$   
\end{corollary}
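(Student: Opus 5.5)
The statement to prove is \cref{cor:momentum}. The plan is to apply \cref{lem:decomposition} once per round, with the next-step market price $\mbf{q}^{t+1}$ playing the role of the ground truth, and then sum. Everything rests on the pointwise identity of \cref{lem:decomposition}. That identity is linear in the one-hot vector $\one_y$, so it extends to any vector $\mbf{r}$ in place of $\one_y$ (or to any distribution, by taking expectations). For arbitrary $\mbf{p},\mbf{q},\mbf{r}$ it reads
\[
\mbf{s}_G(\mbf{p},\mbf{q})\cd(\mbf{r}-\mbf{q}) \;=\; S(\mbf{p};\mbf{r}) - S(\mbf{q};\mbf{r}) + D_G(\mbf{q},\mbf{p}),
\]
where $S(\mbf{p};\mbf{r}) := G(\mbf{p}) + \nab G(\mbf{p})\cd(\mbf{r}-\mbf{p})$. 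This is the expected score whenever $\mbf{r}\in\Delta_{[K]}$. I would re-verify this extension directly with the same three-line algebra used in the proof of \cref{lem:decomposition}: expand both scores, and recognize $G(\mbf{q})-G(\mbf{p})-\nab G(\mbf{p})\cd(\mbf{q}-\mbf{p})$ as $D_G(\mbf{q},\mbf{p})$.

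\textbf{Step 1 (mark-to-market decomposition).} For each round $t$, substitute $\mbf{p}=\mbf{p}^t$, $\mbf{q}=\mbf{q}^t$, $\mbf{r}=\mbf{q}^{t+1}$. Because $\mbf{x}^t=\mbf{s}_G(\mbf{p}^t,\mbf{q}^t)$, the identity gives
\[
\mbf{x}^t\cd(\mbf{q}^{t+1}-\mbf{q}^t) \;=\; S(\mbf{p}^t;\mbf{q}^{t+1})-S(\mbf{q}^t;\mbf{q}^{t+1})+D_G(\mbf{q}^t,\mbf{p}^t).
\]
Summing over $t=1,\dots,T$ yields \eqref{eq:momentum}. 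Here $\mbf{q}^{T+1}$ should be read as the settlement price, which is $\one_y$ at resolution. The statement does not fully pin this convention down, so I would state it explicitly in the proof.

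\textbf{Step 2 (trade form).} This is an Abel summation (summation by parts). Since $\mbf{x}^0=\mbf{0}$, we can write $\mbf{x}^t=\sum_{\tau=1}^t \mbf{s}^\tau$. Substituting and exchanging the order of summation gives
\[
\sum_{t=1}^T \mbf{x}^t\cd(\mbf{q}^{t+1}-\mbf{q}^t) \;=\; \sum_{\tau=1}^T \mbf{s}^\tau\cd\sum_{t=\tau}^T(\mbf{q}^{t+1}-\mbf{q}^t).
\]
The inner sum telescopes to $\mbf{q}^{T+1}-\mbf{q}^\tau$, which is exactly the claimed expression $\sum_{t=1}^T \mbf{s}^t\cd(\mbf{q}^{T+1}-\mbf{q}^t)$. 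It is worth remarking that this quantity is the realized profit of buying $\mbf{s}^t$ at the frictionless price $\mbf{q}^t$ and liquidating at $\mbf{q}^{T+1}$.

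\textbf{Main obstacle.} The algebra is routine, so the only delicate point is bookkeeping. The extension of \cref{lem:decomposition} must hold for a general reference vector $\mbf{q}^{t+1}$, so the scores $S(\cdot;\mbf{q}^{t+1})$ must be interpreted as the linear extension defined above. The index conventions also need care: $\mbf{x}^0=\mbf{0}$, and $\mbf{q}^{T+1}$ is the settlement price. If $\mbf{q}^{T+1}$ is taken to be $\one_y$, the final-round term becomes a realized score gap, consistent with \cref{cor:empirical}.
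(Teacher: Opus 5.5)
Your proposal is correct and matches the paper's proof essentially step for step: it applies the profit decomposition lemma at each round with $\mbf{q}^{t+1}$ in the role of the ground truth and sums, then gets the trade form by writing $\mbf{x}^t=\sum_{r\le t}\mbf{s}^r$, exchanging the order of summation, and telescoping. Checking the identity directly for an arbitrary reference vector $\mbf{r}$, by the same algebra, is slightly more careful than the paper's appeal to the ``expectation form,'' which tacitly assumes $\mbf{q}^{t+1}\in\Delta_{[K]}$.
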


The two corollaries differ in what plays the role of the ``ground truth'' in the proper-bet decomposition: \cref{cor:fundamental} uses the unobservable $\mbf{p}^*$ that resolves the event, so the trader needs an accuracy edge against the truth and is exposed across the entire horizon; \cref{cor:momentum} uses the next-period market price $\mbf{q}^{t+1}$, so the trader only needs to predict the market's next move and is exposed only one period at a time.

\begin{proof}
For \cref{cor:fundamental}, apply \cref{lem:decomposition} at each round with realized outcome $y$:
$$\mbf{s}^t \cd (\one_y - \mbf{q}^t) \;=\; \bb{S(\mbf{p}^t, y) - S(\mbf{q}^t, y)} + D_G(\mbf{q}^t, \mbf{p}^t).$$
Summing over $t$ and taking expectation under $y \sim \mbf{p}^*$ gives \cref{eq:fundamental}. The score-gap sum is strictly positive by hypothesis and the Bregman sum is non-negative, so the total is positive.

For \cref{cor:momentum}, apply \cref{lem:decomposition} (in expectation form) at each round with substitutions $\mbf{p}^* \to \mbf{q}^{t+1}$, $\mbf{p} \to \mbf{p}^t$, $\mbf{q} \to \mbf{q}^t$:
$$\mbf{x}^t \cd (\mbf{q}^{t+1} - \mbf{q}^t) \;=\; \bb{S(\mbf{p}^t;\mbf{q}^{t+1}) - S(\mbf{q}^t;\mbf{q}^{t+1})} + D_G(\mbf{q}^t, \mbf{p}^t).$$
Summing gives \cref{eq:momentum}. The trade-form equivalence follows from substituting $\mbf{x}^t = \sum_{r=1}^t \mbf{s}^r$ and exchanging the order of summation:
$$\sum_{t=1}^T \mbf{x}^t \cd (\mbf{q}^{t+1} - \mbf{q}^t) \;=\; \sum_{r=1}^T \mbf{s}^r \cd \sum_{t=r}^T (\mbf{q}^{t+1} - \mbf{q}^t) \;=\; \sum_{r=1}^T \mbf{s}^r \cd (\mbf{q}^{T+1} - \mbf{q}^r),$$
where the inner sum telescopes by $$\sum_{t=r}^T (\mbf{q}^{t+1} - \mbf{q}^t) = \mbf{q}^{T+1} - \mbf{q}^r.$$ Strict positivity follows similarly to the fundamental-driven strategy.
\end{proof}

\section{Additional experiments}

\subsection{Model details}
\label{app:modelDetails}

\begin{table}[t]
\centering
\small
\setlength{\tabcolsep}{4pt}
\renewcommand{\arraystretch}{1.05}
\caption{Calibration and profitability metrics by model and persona group. $N$ is the number of market-level predictions per model. ECE is expected calibration error. Quad. Score represents the model's score under the quadratic scoring rule. $\pm$ values are the bootstrapped standard errors over $1{,}000$ resamples. Shaded rows report persona-group averages.}
\label{tab:calibration-summary}

\begin{tabular}{l r r r r}
\toprule
\textbf{Model} & \textbf{$N$ markets} & \textbf{ECE} & \textbf{Quad. Score} & \textbf{ROI} \\
\midrule

GPT-5.2 (High)    & 11{,}789 & 0.021 \tiny{$\pm 0.005$} & 0.916 \tiny{$\pm 0.006$} & $-7.8\%$ \tiny{$\pm 3.6$} (Log) \\
GPT-5.2 (Base)    & 11{,}789 & 0.013 \tiny{$\pm 0.003$} & 0.916 \tiny{$\pm 0.005$} & $-4.4\%$ \tiny{$\pm 3.9$} (Log) \\
Grok 4.1 Fast     & 11{,}407 & 0.027 \tiny{$\pm 0.004$} & 0.913 \tiny{$\pm 0.005$} & $-9.1\%$ \tiny{$\pm 4.5$} (Log) \\
Claude Sonnet 4.5 & 11{,}785 & 0.023 \tiny{$\pm 0.003$} & 0.911 \tiny{$\pm 0.005$} & $-8.1\%$ \tiny{$\pm 3.5$} (Log) \\
Kimi K2 Thinking  & 11{,}811 & 0.024 \tiny{$\pm 0.004$} & 0.912 \tiny{$\pm 0.005$} & $-8.7\%$ \tiny{$\pm 3.8$} (Log) \\
DeepSeek V3.2     & 11{,}071 & 0.031 \tiny{$\pm 0.005$} & 0.899 \tiny{$\pm 0.006$} & $-12.8\%$ \tiny{$\pm 3.4$} (Log) \\
Minimax M2        & 11{,}688 & 0.028 \tiny{$\pm 0.005$} & 0.900 \tiny{$\pm 0.006$} & $-9.4\%$ \tiny{$\pm 3.8$} (Log) \\

\rowcolor{orange!7}
\textbf{\textit{Brittle (avg)}} 
& \textbf{11{,}620} 
& \textbf{0.024} 
& \textbf{0.910} 
& $\mathbf{-8.6\%}$ \textbf{(Log)} \\

\midrule

LLaMA 4 Maverick  & 29{,}613 & 0.069 \tiny{$\pm 0.004$} & 0.871 \tiny{$\pm 0.005$} & $-4.9\%$ \tiny{$\pm 2.2$} (Log) \\
Qwen 3 235B       & 29{,}923 & 0.078 \tiny{$\pm 0.004$} & 0.868 \tiny{$\pm 0.004$} & $-11.2\%$ \tiny{$\pm 1.9$} (Log) \\
DeepSeek R1       & 29{,}759 & 0.100 \tiny{$\pm 0.005$} & 0.848 \tiny{$\pm 0.005$} & $-5.4\%$ \tiny{$\pm 1.7$} (Log) \\

\rowcolor{teal!7}
\textbf{\textit{Dispersed (avg)}} 
& \textbf{29{,}765} 
& \textbf{0.082} 
& \textbf{0.862} 
& $\mathbf{-7.2\%}$ \textbf{(Log)} \\

\midrule

Gemini 3        & 9{,}721 & 0.017 \tiny{$\pm 0.003$} & 0.936 \tiny{$\pm 0.004$} & $+14.2\%$ \tiny{$\pm 8.8$} (Brier) \\
Claude Opus 4.6 & 3{,}172 & 0.011 \tiny{$\pm 0.004$} & 0.949 \tiny{$\pm 0.007$} & $+17.5\%$ \tiny{$\pm 7.2$} (Brier) \\

\rowcolor{purple!7}
\textbf{\textit{Conservative (avg)}}
& \textbf{6{,}446}
& \textbf{0.014}
& \textbf{0.942}
& $\mathbf{+15.8\%}$ \textbf{(Brier)} \\

\bottomrule
\end{tabular}
\end{table}

\cref{tab:calibration-summary} reports the LLMs evaluated across three metrics -- expected calibration error (ECE), the score under the quadratic scoring rule, and ROI under each model's best proper betting strategy. GPT-5.2 (Base/High) correspond to different reasoning levels of the same underlying model. The number of markets varies across forecasters because models were released at different times, resulting in differing amounts of collected data through the Prophet Arena pipeline; to ensure comparability, we standardize evaluations in \cref{sec:experiments}. 

Interestingly, while the Brittle group has lower ECE and the higher aggregate Brier accuracy as compared to the Dispersed group, it is also more unprofitable. As such, aggregate accuracy and calibration can therefore coexist with systematic capital loss, due to the importance of the Bregman divergence term.

\subsection{Implementation details of betting strategies}
\label{app:appCanonical}

\paragraph{Setup.} 
We consider a collection of $n$ markets with forecast–price pairs $\{p_i, q_i\}_{i\in[n]}$. 
A betting strategy is defined by a set of budget allocation weights $\{w_i\}_{i\in[n]}$, which encode both direction and size: if $w_i > 0$, we buy YES on market $i$ with allocation $w_i$; if $w_i < 0$, we buy NO (equivalently, sell YES) with allocation $-w_i$. 
Without loss of generality, we restrict $w_i \in [-1,1]$ for all $i \in [n]$ and normalize total exposure such that $\sum_{i\in[n]} |w_i| = 1$.

\paragraph{Proper Betting Strategies.} Proper betting strategies can be constructed as follows:
\begin{enumerate}[leftmargin=*]
    \item \textbf{Brier:} Allocates linearly in the margin:
    $w_i \propto p_i - q_i .$

    \item \textbf{Logarithmic:} Scales allocations according to:
    $
    w_i \propto \log p_i - \log q_i .
    $

    \item \textbf{Spherical:} Allocates based on the difference between the $L_2$-normalized forecast and market price vectors:
    $
    w_i \propto \frac{p_i}{\norm{p}} - \frac{q_i}{\norm{q}}.
    $
\end{enumerate}

\paragraph{Baseline Betting Strategies.} We also construct a few betting strategies from well-motivated heuristics as baselines: 

\begin{enumerate}[leftmargin=*]
\item \textbf{Max-Margin:}
Strategies that bet the margin.
\begin{itemize}[leftmargin=.5em]
\item \emph{Market-Level:} 
$w_i \propto \sgn(p_i - q_i), \forall i \in [n]$. 
This strategy places a unit bet on every market, with direction determined by whether $p_i > q_i$.
\item \emph{Grouped:} 
Let $\{\mathcal{I}_k\}_{k=1}^m$ denote a partition of markets into groups corresponding to the same underlying question. For each group $k$, define
\[
i^\star(k) = \arg\max_{i \in \mathcal{I}_k} \ab{p_i - q_i}.
\]
This strategy allocates the budget uniformly across groups, placing a single bet per group on the market with the largest margin:
\[
w_{i^\star(k)} = \frac{\sgn(p_{i^\star(k)} - q_{i^\star(k)})}{m}, \quad w_i = 0,\quad \forall i \notin \{i^\star(k)\}_{k=1}^m.
\]

\end{itemize}

\item \textbf{Inverse-Margin: } 
$w_i \propto \frac{\sgn(p_i - q_i)}{|p_i - q_i|}, \forall i \in [n]$. 
This heuristic places more weight on markets with smaller margins, under the hypothesis that forecasts are more reliable when deviations from the market are modest.

\item \textbf{Kelly-Alike:} $w_i \propto \frac{p_i - q_i}{1 - q_i}, \forall i\in [n]$. This heuristic strategy mimics the ratio in the Kelly criterion.

\item \textbf{Kelly Criterion \citep{kelly1956new}:}
Unlike the previous strategies, which allocate a fixed budget $B$ across markets via weights $w_i$ with the goal of maximizing expected profit, Kelly sizing determines the fraction of wealth to wager on each individual market sequentially to maximize the expected log-growth rate. For a binary market with forecast $p_i$ and market price $q_i$, a unit stake on YES pays $1/q_i$, giving net odds $b_i = (1-q_i)/q_i$. Maximizing
\[
\mathbb{E}[\log W_{i+1}]
= p_i \log(1 + f_i b_i) + (1 - p_i)\log(1 - f_i)
\]
over the wealth fraction $f_i$ yields the closed form \( f_i = \frac{\lvert p_i - q_i \rvert}{1 - q_i} \). If $f_i$ exceeds 1, we allow for the use of leverage to purchase additional shares.

\end{enumerate}

\cref{tab:app-betting-strategy-std} reports the full comparison of evaluated models across all baseline betting strategies and the proposed Proper (Brier) strategy.

\begin{table}[h]
\centering
\footnotesize
\setlength{\tabcolsep}{4pt}
\renewcommand{\arraystretch}{1.08}

\caption{ROI (\%) of baseline betting strategies and the proposed Proper (Brier) strategy. $\Delta S$ is the proper (Brier) score gap: negative values indicate worse performance than the market. Bold marks the best ROI per row.}
\label{tab:app-betting-strategy-std}

\begin{tabular}{@{}l r >{\columncolor{green!12}}c c c c c c@{}}
\toprule
& & \multicolumn{6}{c}{\textbf{ROI (\%)}} \\
\cmidrule(lr){3-8}
\textbf{Model} & \textbf{$\Delta S$} & \textbf{Proper} & \textbf{Max (Mkt)} & \textbf{Max (Grp)} & \textbf{Inv-Margin} & \textbf{Kelly-Alike} & \textbf{Kelly} \\
\midrule

Claude Opus 4.6 & $+0.0016$ & \textbf{+22.1} & +5.0 & -14.0 & -3.5 & +10.9 & -99.9 \\
Gemini 3 & $+0.0008$ & \textbf{+8.1} & +1.3 & -0.1 & +0.7 & +1.8 & -42.7 \\
GPT-5.2 (Base) & $-0.0347$ & \textbf{+4.5} & +2.4 & -14.9 & +1.9 & -0.1 & -99.9 \\
Claude Sonnet 4.5 & $-0.0426$ & -3.5 & +1.2 & -19.8 & \textbf{+1.4} & +0.7 & -99.9 \\
LLaMA 4 Maverick & $-0.0450$ & -13.7 & \textbf{-4.1} & -15.5 & -4.0 & -10.0 & -99.9 \\
GPT-5.2 (High) & $-0.0460$ & -2.4 & -1.6 & -13.3 & -1.9 & \textbf{-1.0} & -99.9 \\
Grok 4.1 Fast & $-0.0462$ & -11.6 & -7.8 & -26.4 & -7.0 & \textbf{-6.3} & -99.9 \\
DeepSeek V3.2 & $-0.0466$ & -12.2 & -7.3 & -30.2 & \textbf{-5.1} & -5.8 & -99.9 \\
Kimi K2 Thinking & $-0.0492$ & -9.5 & -3.1 & -24.3 & \textbf{+1.5} & -2.5 & -99.9 \\
Minimax M2 & $-0.0537$ & \textbf{+1.3} & -4.0 & -22.8 & -6.5 & -3.9 & -99.9 \\
DeepSeek R1 & $-0.0636$ & -20.3 & -7.6 & -25.4 & \textbf{+0.7} & -9.6 & -99.9 \\
Qwen 3 235B & $-0.0838$ & -13.3 & -5.3 & -14.8 & \textbf{+12.2} & -9.3 & -99.9 \\

\bottomrule
\end{tabular}
\end{table}

\subsection{Empirical decomposition}
\label{app:addModelExps}

\cref{tab:scoring-decomposition-appendix} reports the full decomposition across all models and all evaluated proper betting strategies on the standardized 2,418 market dataset. Divergence varies substantially across both models and rules, and often offsets large negative score gaps, meaning models less accurate can still yield positive ROI (e.g., Claude Sonnet 4.5 under Log). Rules such as Log tend to amplify divergence, leading to higher upside but also greater dispersion in outcomes, while Brier and Spherical are comparatively more stable. This reinforces that the mapping from forecasts to returns is highly rule-dependent, and that profitability hinges as much on how aggressively a strategy exploits disagreement as on the underlying predictive accuracy.

\begin{table}[tbh]
\centering
\footnotesize
\setlength{\tabcolsep}{3.5pt}
\renewcommand{\arraystretch}{1.08}

\caption{ROI (\%) decomposition of proper betting strategies. $\Delta S$ is the aggregate score difference between the forecast and the market across all bets placed, and $D$ is total Bregman divergence. All quantities are summed across bets, normalized by total cost staked under each rule and multiplied by 100 so $\Delta S + D = \mathrm{ROI}$. Values are rounded to 1 decimal place.}
\label{tab:scoring-decomposition-appendix}

\begin{tabular}{l *{3}{rrr}}
\toprule
& \multicolumn{3}{c}{\textbf{Brier}}
& \multicolumn{3}{c}{\textbf{Log}}
& \multicolumn{3}{c}{\textbf{Spherical}} \\
\cmidrule(lr){2-4}
\cmidrule(lr){5-7}
\cmidrule(lr){8-10}

\textbf{Model}
& $\Delta S$ & $D$ & \textbf{ROI}
& $\Delta S$ & $D$ & \textbf{ROI}
& $\Delta S$ & $D$ & \textbf{ROI} \\
\midrule

Claude Opus 4.6
& $+3.2$   & $+17.8$  & $+21.0$
& $+4.5$   & $+16.9$  & $+21.4$
& $-1.3$   & $+10.1$  & $+8.8$ \\

Gemini 3
& $+4.0$   & $+5.0$   & $+9.0$
& $+3.9$   & $+5.0$   & $+8.9$
& $-2.4$   & $+2.9$   & $+0.4$ \\

GPT-5.2 (Base)
& $-108.4$ & $+112.7$ & $+4.3$
& $-5.8$   & $+15.9$  & $+10.1$
& $-52.7$  & $+46.7$  & $-6.0$ \\

Claude Sonnet 4.5
& $-111.2$ & $+110.2$ & $-0.9$
& $-10.0$  & $+17.6$  & $+7.5$
& $-63.4$  & $+59.9$  & $-3.6$ \\

DeepSeek V3.2
& $-116.4$ & $+107.3$ & $-9.1$
& $-18.0$  & $+18.7$  & $+0.7$
& $-64.3$  & $+53.8$  & $-10.5$ \\

GPT-5.2 (High)
& $-159.2$ & $+159.5$ & $+0.3$
& $-13.1$  & $+21.9$  & $+8.8$
& $-85.9$  & $+81.2$  & $-4.6$ \\

Grok 4.1 Fast
& $-137.4$ & $+128.3$ & $-9.1$
& $-20.3$  & $+23.8$  & $+3.5$
& $-89.3$  & $+74.3$  & $-15.0$ \\

LLaMA 4 Maverick
& $-123.7$ & $+110.6$ & $-13.1$
& $-21.0$  & $+20.2$  & $-0.8$
& $-53.3$  & $+38.5$  & $-14.8$ \\

Kimi K2 Thinking
& $-123.7$ & $+115.4$ & $-8.3$
& $-21.5$  & $+20.4$  & $-1.2$
& $-71.4$  & $+65.0$  & $-6.4$ \\

Minimax M2
& $-138.7$ & $+137.6$ & $-1.1$
& $-10.9$  & $+19.8$  & $+8.9$
& $-87.6$  & $+70.7$  & $-17.0$ \\

DeepSeek R1              
  & $-138.9$ & $+117.6$ & $-21.3$     
  & $-35.5$  & $+22.9$  & $-12.6$
  & $-74.4$  & $+55.7$  & $-18.7$ \\   

Qwen 3 235B
& $-148.1$ & $+138.1$ & $-9.9$
& $-47.1$  & $+44.6$  & $-2.4$
& $-89.4$  & $+75.5$  & $-13.8$ \\

\bottomrule
\end{tabular}
\end{table}

\subsection{Synthetic persona generation}
\label{app:syntheticMethod}

For each persona, we generate a synthetic forecast by perturbing the market price by a persona-specific margin:
\[
p_i = q_i \pm m_i \quad \text{(clipped to $[\varepsilon,\, 1 - \varepsilon]$),}
\]
where $m_i \geq 0$ is the margin magnitude drawn from a persona-specific distribution, and the sign is chosen toward the realized outcome with probability $a_i$. A single global offset on $a_i$ is calibrated per persona so that all personas attain the same Brier score within each regime.

\paragraph{Synthetic persona construction.}
Conservative uses $m \sim \mathrm{Beta}(\mathrm{mode}=0.12, \mathrm{conc}=60)$ with uniform accuracy $a(m)=a_0$; Aggressive uses $m \sim \mathrm{Beta}(0.20, 25)$ with uniform $a(m)=a_0$; Dispersed uses $m \sim \mathrm{Beta}(0.10, 6)$ with linearly declining accuracy $a(m)=\mathrm{clip}(a_0 - 0.20\,m, 0.01, 0.99)$, where $m = |p - q|$ denotes the margin. 

In all cases, $a_0$ is calibrated via Brent's method on mean Brier so that the quadratic score matches the market within $\pm 0.05$ in Regime A/B. The calibrated values are $a_0 = 0.999 / 0.073$ for Conservative, $0.877 / 0.338$ for Aggressive, and $0.971 / 0.473$ for Dispersed, where the first number corresponds to Regime A (model beats market) and the second to Regime B (model loses to market). Empirically, these yield $\Pr[m \le 0.15] = 0.85,\ 0.58,\ 0.68$ and OLS win-rate slopes of $+0.00,\ -0.01,\ -0.20$, respectively.

For Brittle, matching the same $\Delta S$ (Brier) cannot be achieved with a single linear win-rate schedule: slopes steep enough to capture the empirical tail decline reduce mean accuracy below the Regime A target, while flatter slopes eliminate the pattern. We therefore use a two-piece schedule
\[
a(m) \;=\; \mathrm{clip}\bigl(a_0 - s \cdot \max(0,\, m - \tau),\; 0.01,\; 0.99\bigr),
\]
with $\tau = 0.20$, $s = 1.0$, and $m \sim \mathrm{Beta}(0.12, 20)$, calibrated to $a_0 = 0.999 / 0.223$ for Regimes A/B. In \cref{fig:personaArchetypes}, we visualize this fitting a linear regression (OLS) for comparability with the other personas. 

\subsection{Proper betting rule capital allocation surfaces}
\label{app:weight-surfaces}

The three proper scoring rules evaluated -- Brier, logarithmic, and spherical -- differ in \emph{how much capital} they allocate to each bet (i.e., the number of shares purchased). \cref{fig:weight-surfaces} visualizes these weight functions across the full $(q, m)$ plane, where m is the signed margin.

Under the Brier rule, the weight is simply $w = |p - q|$, so capital scales linearly with divergence and is independent of $q$, producing vertical contour lines. As a result, Brier is the most aggressive strategy at large divergences, concentrating capital where model overconfidence is most pronounced. In contrast, the logarithmic rule, $w = |\log p - \log q|$, grows sublinearly in divergence and depends on the price level. The inward curvature of its contours at large margins reflects this diminishing sensitivity, so weight increases more slowly as divergence grows. These features make the log rule more conservative. The spherical rule lies between these extremes: its weight is nonlinear in both divergence and price, allocating more than log at moderate divergences—amplifying smaller, more reliable signals—but less than Brier at extreme divergences. It also exhibits mild price dependence, with slightly higher weights near $q = 0.5$, where normalization effects are weakest.

Generally, Brier favors personas whose accuracy holds across margin sizes (e.g., Conservative), but in the regime in which all personas can outperform the market in terms of accuracy, the performance gap is sufficiently large that even less stable personas benefit from linear scaling. For smaller (though still positive) Brier gaps, sharp deterioration in win rate at higher margins can cause linear scaling to over-weight large-margin forecasts, leading to outsized losses. In such cases, Log may be preferable due to its stronger penalization of extreme probabilities.

\begin{figure}[h]
  \centering
  \includegraphics[width=0.7\linewidth]{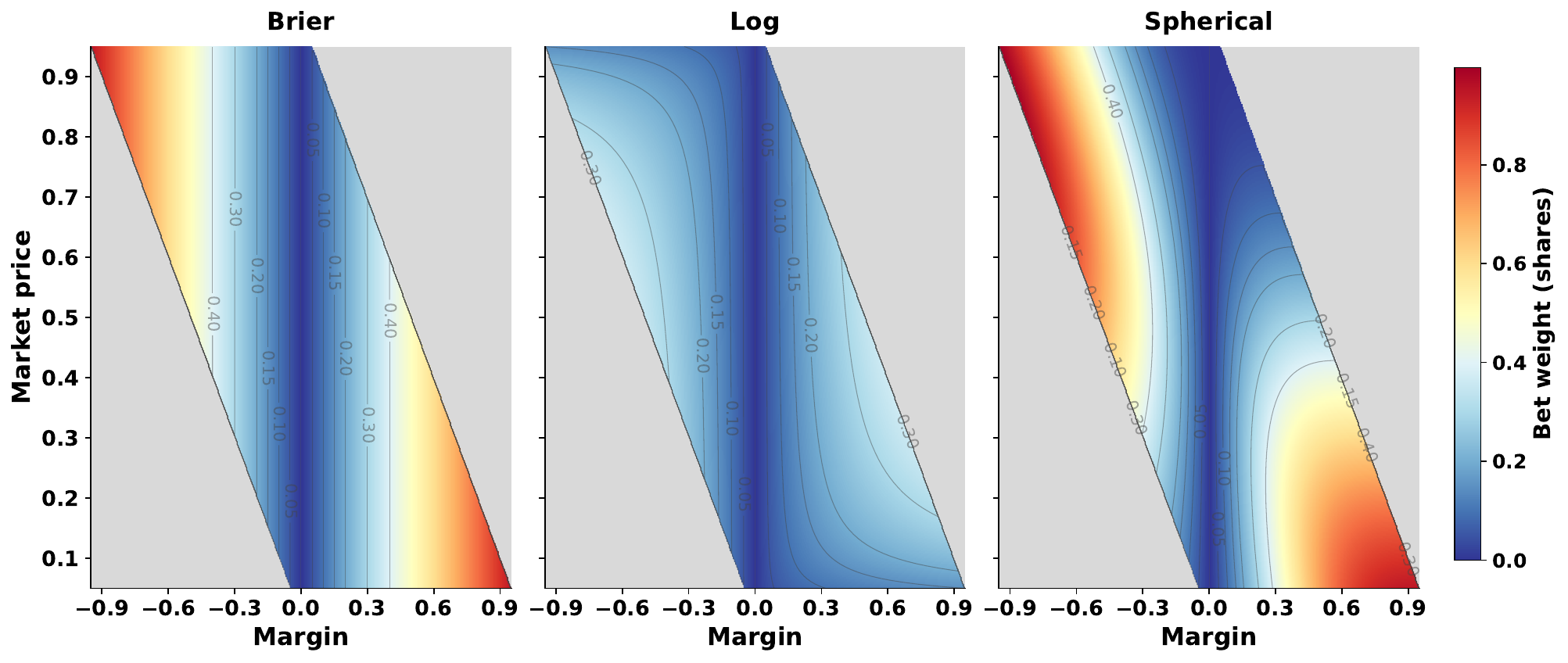}
  \caption{Capital allocation (bet weight in shares) as a function of
  signed margin and market price ($q$, $y$-axis)
  for the three proper betting strategies.  Warmer colors indicate larger
  bets.  Grey regions are infeasible ($p \notin (0,1)$).  Contour lines
  mark constant weight levels.}
  \label{fig:weight-surfaces}
\end{figure}

\subsection{Mapping empirical forecasters to personas}
\label{app:mapPersonas}

As shown in \cref{tab:appEdgeDistribution}, along the margin dimension, models separate into three clear groups: seven place $75\%$--$85\%$ of bets below $0.15$ (Brittle), three distribute more mass across margins (Dispersed, $<70\%$ below $0.15$), and two concentrate over $85\%$ at small margins (Conservative).
Second, \cref{fig:margin-winrate} confirms the persona assignments along the accuracy consistency dimension. The models exhibiting the Brittle persona exhibit a steep win-rate slope ($\beta = -6.8$), with win-rate falling from over $50\%$ to below $30\%$ by margin $0.50$. Dispersed models decline more gradually ($\beta = -3.6$), while Conservative models are essentially flat ($\beta = 0$). 

\begin{table}[h]
  \centering
  \small
  
  \caption{Language model margin concentration. Entries report the share of bets (\%) in each margin bin. $\Delta S$ denotes the aggregate score difference relative to the market. Greyed columns correspond to high-margin bins with sparse data and are excluded from the win-rate analysis. Shares are rounded to the nearest integer; $^{\ast}$ indicates a value that is positive but rounds to $+0.000$.}
  \label{tab:appEdgeDistribution}
  
  \setlength{\tabcolsep}{2.5pt}
  \renewcommand{\arraystretch}{1.08}

\begin{tabular}{@{}lcccccccc@{}}
\toprule
\textbf{Model}
& \textbf{[0,.05)}
& \textbf{[.05,.15)}
& \textbf{[.15,.25)}
& \textbf{[.25,.50)}
& {\color{gray}\textbf{[.50,.70)}}
& {\color{gray}\textbf{[.70,.90)}}
& {\color{gray}\textbf{[.90,1.0)}}
& $\boldsymbol{\Delta S}$ \\
\midrule

GPT-5.2 (High)
& 61 & 23 & 7 & 5
& {\color{gray}2} & {\color{gray}1} & {\color{gray}1}
& $-0.038$ \\

GPT-5.2 (Base)
& 59 & 23 & 7 & 6
& {\color{gray}2} & {\color{gray}2} & {\color{gray}1}
& $-0.044$ \\

Grok 4.1 Fast
& 58 & 22 & 7 & 7
& {\color{gray}3} & {\color{gray}2} & {\color{gray}1}
& $-0.053$ \\

Claude Sonnet 4.5
& 54 & 25 & 8 & 7
& {\color{gray}3} & {\color{gray}2} & {\color{gray}1}
& $-0.055$ \\

Kimi K2 Thinking
& 57 & 24 & 6 & 7
& {\color{gray}2} & {\color{gray}2} & {\color{gray}1}
& $-0.053$ \\

DeepSeek V3.2
& 50 & 25 & 8 & 9
& {\color{gray}4} & {\color{gray}3} & {\color{gray}1}
& $-0.069$ \\

Minimax M2
& 52 & 25 & 8 & 8
& {\color{gray}3} & {\color{gray}3} & {\color{gray}2}
& $-0.064$ \\

\rowcolor{orange!7}
\textbf{\textit{Brittle (avg)}}
& \textbf{56} & \textbf{24} & \textbf{7} & \textbf{7}
& {\color{gray}\textbf{3}} & {\color{gray}\textbf{2}} & {\color{gray}\textbf{1}}
& $\mathbf{-0.054}$ \\

\midrule

LLaMA 4 Maverick
& 46 & 21 & 9 & 10
& {\color{gray}5} & {\color{gray}4} & {\color{gray}5}
& $-0.115$ \\

Qwen 3 235B
& 45 & 21 & 9 & 12
& {\color{gray}5} & {\color{gray}4} & {\color{gray}3}
& $-0.106$ \\

DeepSeek R1
& 37 & 22 & 10 & 14
& {\color{gray}8} & {\color{gray}6} & {\color{gray}5}
& $-0.140$ \\

\rowcolor{teal!7}
\textbf{\textit{Dispersed (avg)}}
& \textbf{43} & \textbf{21} & \textbf{9} & \textbf{12}
& {\color{gray}\textbf{6}} & {\color{gray}\textbf{5}} & {\color{gray}\textbf{4}}
& $\mathbf{-0.120}$ \\

\midrule

Gemini 3
& 81 & 13 & 2 & 1
& {\color{gray}0} & {\color{gray}0} & {\color{gray}2}
& $+0.002$ \\

Claude Opus 4.6
& 69 & 20 & 5 & 3
& {\color{gray}1} & {\color{gray}1} & {\color{gray}0}
& $+0.000^{\ast}$ \\

\rowcolor{purple!7}
\textbf{\textit{Conservative (avg)}}
& \textbf{75} & \textbf{16} & \textbf{4} & \textbf{2}
& {\color{gray}\textbf{0}} & {\color{gray}\textbf{0}} & {\color{gray}\textbf{1}}
& $\mathbf{+0.001}$ \\

\bottomrule
\end{tabular}

\end{table}

\subsection{Details about live deployment experiment}
\label{app:liveTrading}

We deploy a portfolio-management version of the forecasting framework in \citet{yang2025llm}. The agent runs on a fixed two-hour cadence; on each cycle it (i)~refreshes prices and lifecycle state for every market it currently tracks, (ii)~discovers a batch of new markets, (iii)~queries the LLM forecaster on each eligible market through a standardized prediction context, and (iv)~hands the resulting probabilities to an executor that places limit orders on Kalshi. 

A Kalshi market is pulled for analysis on a given cycle if it is open and its \texttt{close time} lies between 2 and 14 days (we impose a 14-day maximum horizon to facilitate capital turnover and avoid positions being tied up in long-dated markets). To limit exposure to trading fees and LLM-driven variance, a market is skipped on a given cycle if the market price has not moved by at least $10$\textcent\ since the last fill; markets we have never traded on are always re-evaluated.

For each eligible market we issue a single forecast call. The model is given the market's title, resolution rules, and contemporaneous market data and is asked to return a JSON object containing a rationale and the probability that the contract resolves YES. We use the same prompting template and prediction context as \citet{yang2025llm}. The deployed forecaster is Gemini~3 Pro with high-effort thinking and Google Search grounding enabled.

\subsubsection{Prediction context and methodology}
Let $p$ denote the model's probability of YES. We do not trade when $p$ lies within the bid--ask band (a consequence of the occasional non-zero bid-ask spread in real-world prediction markets), as this implies no actionable margin. 

Whenever a market is traded, we rebalance the position toward the new target $\tau$ on each cycle, as is detailed in \cref{sec:portfolio}. 

All orders are placed as limit orders at the prevailing ask, so execution is not guaranteed: on thin Kalshi books, a portion of the requested size often remains unfilled at the end of the cycle. If they do not fill, they expire and are cancelled at the next cycle when updated prices and forecasts produce a new target. The next-cycle delta is then computed based on the positions that actually filled.

\cref{tab:live-trading} presents the full detailed performance and trading statistics for the strategy over the evaluation period. We also provide access to the trading history \href{https://prophets-profit.onrender.com/}{\textcolor{green!50!black}{\textbf{here}}}, which logs performance of the trading agent over the trading period.

\subsubsection{Eligibility criteria}
\label{sec:exclusionCriteria}
Eligibility is determined ex-ante; we exclude three categories of markets and halt trading three hours prior to event resolution, the latter reflecting the reduced informational advantage of LLMs close to resolution time \citep{yang2025llm}. First, we exclude the \textsc{Mentions} category, where contracts resolve based on unstructured public statements (e.g., ``will X mention Y''), as we find that LLMs perform substantially worse on these questions in tests over publicly available data released by \citet{yang2025llm}. Second, we exclude events for which a Kalshi market's listed \texttt{close\_time} is more than one hour after the true resolution time of the event\footnote{Kalshi's \texttt{close\_time} is the trading deadline and does not always coincide with when the underlying event resolves. While these align for most events, some have a \texttt{close\_time} set hours or days after the outcome is publicly known. This discrepancy is observable ex ante via a field in the API deemed the \texttt{expected\_expiration\_time}, which indicates the event's resolution time.}, creating a mismatch between the LLM's belief about market closure and the actual outcome realization. 

In addition to eligibility criteria, we also drop a market from consideration due to an underspecified resolution rule. We have contacted Kalshi about contracts of this kind, and we view such situations as an artifact of the novelty of prediction markets that will phase out as venues mature and resolution-rule conventions standardize; this is not a limitation of our betting strategy but of the underlying contract specification. The event was titled \emph{``What will be the top AI model this month?''} and the published resolution rule reads in full: \emph{``If \texttt{claude-opus-4-6-thinking} is the top-ranked AI model on Apr 30, 2026 at 10:00 AM ET, then the market resolves to Yes.''} The rule does not specify which leaderboard, benchmark, or evaluation suite determines ``top-ranked,'' nor does it name the operator that produces such a ranking. Furthermore, plausible benchmarks --- LMArena, Artificial Analysis, the OpenLLM Leaderboard, MMLU, etc. --- can even disagree on a given day, so the market's outcome is determined by the market operator's discretionary choice of ranking source rather than by any externally verifiable fact. We therefore treat this market as failing the resolution-clarity precondition for forecasting and exclude it from the agent's eligible set \footnote{Subsequent monthly versions of this contract now explicitly specify the LM Arena leaderboard as the resolution source.}.

\subsubsection{Case studies} 
We present two representative case studies drawn from live trading runs to illustrate how the proposed strategy operates in practice. These examples highlight both the decision-making process and the resulting performance.

\paragraph{Case 1: Weekly Executive Order count (\texttt{KXEOWEEK-26APR25-0}).}
This contract asked whether the President of the United States would sign more than 0 Executive Orders between April~19--25, 2026. By the time the agent first evaluated the market on April~21, public news cycles were dominated by a recent executive order (EO) accelerating medical treatments for serious mental illness, and Kalshi traded YES at $0.77$. Gemini~3 priced YES essentially in line with the market, but on April~22 issued a $\texttt{BUY\_NO}$ signal with the rationale:
\begin{quote}\small
``That specific Executive Order (accelerating medical treatments for serious mental illness) was signed on Saturday, April~18, which falls just outside the April~19--25 window. No new Executive Orders have been signed so far between April~19 and April~22, leaving only three days in the market period. The current prediction market probability of $66\%$ is likely inflated by traders conflating the April~18 signing with this week's news cycle, so the true probability of an additional EO being signed in the remaining window is materially lower.''
\end{quote}

Eight prior forecasts in which
model and market had agreed on $p_{\text{yes}} \approx 0.77$, the model's
$p_{\text{yes}}$ collapsed to $0.31$ in a single cycle while Kalshi's
$\texttt{yes\_ask}$ still sat at $0.66$, opening a $35$-cent edge on the
NO side. The strategy converted that edge into $35$ NO shares at $\$0.37$
per share.

As the market then drifted in the model's direction over the following five days---first to $\texttt{no\_ask} = 0.68$ on April~24, then to $\texttt{no\_ask} = 0.99$ by April~27---the position was gradually reduced as the edge decayed. Rather than holding a fixed exposure, the rebalancer continuously reassessed whether additional upside remained. It sold part of the position as soon as the mispricing meaningfully narrowed, taking $7$ shares at $\$0.67$ ($+\$1.87$), and later unwound most of the remaining exposure near certainty at $\$0.98$ ($+\$4.35$), where further gains were negligible.

The remaining $23$ shares, accumulated at an average cost of $\$0.43$, were
held into resolution: the President signed zero Executive Orders that week,
the contract resolved NO, and the residual position paid out at $\$1.00$ per
share for an additional $+\$13.01$. The per-market net P\&L of $+\$19.23$ on
a peak cost basis of $\$15.21$ ($+126\%$) reflects gains accrued both through
early liquidation at improving prices and through the final resolution payoff.

\paragraph{Case 2: U.S.\ Strategic Petroleum Reserve level (\texttt{KXSPRLVL-26APR01-T415}).}
This contract asked whether the U.S.\ Strategic Petroleum Reserve (SPR) level on April~1, 2026, would be above $415$ million barrels. The reserve had been stable at $415.44$ million barrels for several weeks, but on March~11 the Department of Energy announced a $172$-million-barrel emergency drawdown, which led the market to price in a high probability of the level falling below $415$.

By March~28, Gemini~3 was already assigning low probability to YES (around $0.13$), and the agent initially accumulated NO positions. However, on March~30 at 14:06 UTC, the agent reversed sharply after identifying a timing mismatch in how the data is measured. It recognized that the EIA Weekly Petroleum Status Report records inventory as of 7:00 a.m.\ on Fridays, while the early phase of the announced drawdown had not yet been formally recorded in contract awards or physical accounting at that cutoff:

\begin{quote}\small
``The Department of Energy did not award the initial contracts for the first 45.2 million barrels until Friday, March~27. The EIA's Weekly Petroleum Status Report measures inventory strictly as of 7:00~a.m.\ on Fridays, meaning any physical shipments that commenced later that day will not be captured in the data for the week ending March~27. Consequently, the SPR level---which has held perfectly steady at 415.44 million barrels for several weeks---will almost certainly print above 415 in the upcoming release.''
\end{quote}

In other words, although drawdown activity had been announced, it had not yet entered the official measurement window used for settlement. This meant that the upcoming report would still reflect the pre-drawdown level.

Acting on this insight, the agent closed its NO position and bought $81$ YES shares at $\$0.12$. The following day, the EIA release confirmed the SPR level remained at $415.44$ MMbbl, and the contract repriced to $\$0.87$, allowing the agent to sell the full position for a $+\$60.75$ profit on that leg alone (a $+625\%$ return).

After accounting for earlier position adjustments and small late-cycle bets, the net realized P\&L was $+\$45.65$ on a $\$10.92$ cost basis. The key driver of performance was not disagreement about the drawdown itself, but the agent’s correct identification of the measurement cutoff: it exploited the fact that announced changes had not yet entered the reporting window used for settlement.

\subsection{Profitability at extreme margins}
\label{app:profitabilityMargins}

While most large margin bets are wrong (\cref{tab:high-edge-roi} illustrates a win rate $\ll 50\%$ for each high-margin bin for each model), the ROI can remain positive because the payoff on the rare correct bets scales sharply with margin. In these extreme bins, favorable prices mean that a small number of correct, high-conviction bets can more than offset the many losses, yielding positive returns despite low directional accuracy. This effect is most pronounced in the $[0.70,0.9)$ bin for Conservative personas, where even a handful of correct predictions drives substantial gains. We highlight two such cases below from Claude Opus 4.6 and Gemini~3.

\begin{table}[h]
  \centering
  \caption{Per-bin profitability at high margins ($|p-q| \geq 0.50$). For each persona and each margin bin, we report the ($\%$) share of total bets, directional win rate, and ROI under the corresponding scoring rule (Log for Brittle / Dispersed, Brier for Conservative).}
  \label{tab:high-edge-roi}

  \small
  \setlength{\tabcolsep}{6pt}
  \renewcommand{\arraystretch}{1.05}

  \begin{tabular}{llrrr}
    \toprule
    \textbf{Persona} & \textbf{Bin} & \textbf{Share (\%)} & \textbf{Win \%} & \textbf{ROI (\%)} \\
    \midrule

    \multirow{3}{*}{\textit{Brittle}}
    & [.50, .70) & 3 & 16.2 & $-17.3$ \\
    & [.70, .90) & 2 &  7.0 & $-57.4$ \\
    & [.90, 1.0) & 1 & 15.6 & $+191.6$ \\

    \midrule

    \multirow{3}{*}{\textit{Dispersed}}
    & [.50, .70) & 6 & 34.3 & $+0.3$ \\
    & [.70, .90) & 5 & 14.3 & $-25.5$ \\
    & [.90, 1.0) & 4 &  2.7 & $-57.9$ \\

    \midrule

    \multirow{3}{*}{\textit{Conservative}}
    & [.50, .70) & 0 & 17.6 & $+25.8$ \\
    & [.70, .90) & 0 & 10.5 & $+197.8$ \\
    & [.90, 1.0) & 1 &  2.9 & $+76.3$ \\

    \bottomrule
  \end{tabular}
\end{table}

\begin{table}[h]
  \centering
  \caption{Per-market Brier-strategy economics for two extreme-margin case studies. $p_{\text{yes}}$ is the model's predicted probability for YES; $q_{\text{yes}}$ is the YES-side ask price; $y$ is the realized outcome. Weight is the margin $|p - q_{\text{yes}}|$.}
  \label{tab:extreme-margin-cases}

  \small
  \setlength{\tabcolsep}{5pt}
  \renewcommand{\arraystretch}{1.1}

  \begin{tabular}{lccccccc}
    \toprule
    \textbf{Market (Kalshi ID)} 
    & \textbf{Predicted} 
    & \textbf{Resolved} 
    & $p_{\text{yes}}$ 
    & $q_{\text{yes}}$ 
    & $y$ 
    & \textbf{Weight} 
    & \textbf{ROI (\%)} \\
    \midrule

    \multicolumn{8}{c}{\textit{Case 1: Kirk Cousins's next team -- ``Stays with Atlanta or Retires'' (Claude Opus 4.6)}} \\
    \cmidrule(lr){1-8}
    Stays with Atlanta or retires
    & 2026-03-11 & 2026-03-25 
    & $0.03$ & $0.78$ & NO 
    & $0.75$ & $+21\%$ \\

    \midrule

    \multicolumn{8}{c}{\textit{Case 2: Rain in Dallas, January 2026 -- ``Above 2 inches'' (Gemini 3)}} \\
    \cmidrule(lr){1-8}
    Above 2 inches
    & 2026-01-25 & 2026-02-01 
    & $0.22$ & $0.98$ & NO 
    & $0.76$ & $+1900\%$ \\

    \bottomrule
  \end{tabular}
\end{table}

\paragraph{Case 1: Kirk Cousins's next team (\texttt{KXNEXTTEAMNFL-26KCOUSINS}).}
This 32-way market asked which NFL team Kirk Cousins would sign with after his March 11, 2026 release by the Atlanta Falcons. On the ``Stays with Atlanta or Retires'' contract, Kalshi traded YES at $0.78$, while Claude Opus 4.6 assigned $p = 0.03$, an implied disagreement of $0.75$. Its rationale (recorded same-day, 2026-03-11):
\begin{quote}\small
``Kirk Cousins is being released March 11, 2026 and wants to play. The Arizona Cardinals have the clearest QB vacancy after releasing Kyler Murray, with strong scheme fit under new HC Mike LaFleur (Shanahan/McVay system). Multiple insiders from NBC Sports, SI, and PFT have identified Arizona as the top landing spot, with Cousins offering a bridge starter option alongside weapons like Marvin Harrison Jr. Pittsburgh is the main alternative if Rodgers retires, while most other QB-needy teams (Vikings targeting Murray, Dolphins signed Willis, Jets got Geno Smith) have already addressed the position. Cousins may take his time signing, but Arizona's need is clear and urgent.''
\end{quote}

The model's edge here is real-time news synthesis: same-day reports confirming Cousins's intent to keep playing, combined with insider coverage identifying Arizona as the destination, and elimination of competing QB-needy teams that the market price had implicitly assumed were still in play. Cousins did not remain in Atlanta and did not retire. Under the Brier strategy, the position bought $0.75$ NO shares at $\$0.83$, costing $\$0.622$; the contract resolved NO, paying $\$0.75$, for a return of $+21\%$.

\paragraph{Case 2: Rain in Dallas, January 2026 (\texttt{KXRAINDALM-26JAN}).}
This 7-way ladder market asked the cumulative January precipitation in Dallas. On the ``Above 2 inches'' contract, Kalshi priced YES at $0.98$ (with NO at $0.05$), while Gemini~3 assigned $p = 0.22$. Its rationale (recorded 2026-01-25):
\begin{quote}\small
``NWS data confirms recorded precipitation of 0.48 inches through Jan 23, with a winter storm forecast to add $\sim$0.5--0.8 inches of liquid equivalent, making the total likely to reach 1.0--1.5 inches. The provided market data was discounted due to internal inconsistencies (P($>$3) $>$ P($>$2)) and contradiction with verified meteorological reports. Probabilities favor exceeding 1 inch but decline sharply for 2+ inches given the dry start to the month and limited remaining forecast precipitation.''
\end{quote}

The actual January total fell short of two inches. The Brier strategy bought $0.76$ NO shares at $0.05$, staking $0.038$ and paying out $0.76$—a return of $+1900\%$. Two factors made the disagreement actionable: (i) verifiable third-party meteorology directly contradicted the market's near-certainty, and (ii) the model identified an internal inconsistency in the order book—specifically, the probability for $P(>3)$ inches was greater than that of $P(>2)$ inches, violating monotonicity (due to factors like thin liquidity or delayed updates across related contracts). By detecting this structural mismatch, the model effectively inferred that at least one of the prices must be miscalibrated, and correspondingly discounted the market signal rather than treating it as fully informative.

In both cases, the model’s advantage stems from combining verifiable public data with effective information synthesis. The models identify and integrate disparate signals -- real-time reporting (Cousins), structured third-party data (NWS) -- into coherent forecasts that the market had not yet fully incorporated. The Dallas precipitation case further illustrates an additional capability: detecting and reasoning about internal inconsistencies in market prices themselves.

As such, these examples suggest that LLM forecasters are particularly well-suited to settings with a large, readily available body of public information that is dispersed across sources, or where market prices contain internal inconsistencies. In these environments, LLMs can aggregate, cross-reference, and reconcile information more efficiently than humans, forming a coherent view faster than the market updates.

\end{document}